\documentclass{article} 
\usepackage{iclr2024_conference,times}


\usepackage{amsmath,amsfonts,bm}









\def\eqref#1{equation~\ref{#1}}









\def\1{\bm{1}}








\def\vu{{\bm{u}}}

\def\vx{{\bm{x}}}



\DeclareMathAlphabet{\mathsfit}{\encodingdefault}{\sfdefault}{m}{sl}
\SetMathAlphabet{\mathsfit}{bold}{\encodingdefault}{\sfdefault}{bx}{n}


\def\gG{{\mathcal{G}}}


\def\sD{{\mathbb{D}}}

\def\sG{{\mathbb{G}}}

\def\sI{{\mathbb{I}}}

\def\sR{{\mathbb{R}}}

\def\sX{{\mathbb{X}}}

\def\sZ{{\mathbb{Z}}}








\newcommand{\E}{\mathbb{E}}



\DeclareMathOperator*{\argmax}{arg\,max}

\usepackage{hyperref}
\usepackage{url}
\usepackage[utf8]{inputenc}
\usepackage{microtype}
\usepackage{inconsolata}
\usepackage{xcolor}
\usepackage{adjustbox}
\usepackage{diagbox}
\usepackage{amsmath,amssymb, amsthm}
\usepackage[inline]{enumitem}
\usepackage{array,multirow,graphicx,makecell}
\usepackage{booktabs}
\usepackage{tabularx}
\usepackage{colortbl}
\usepackage{caption}
\usepackage{footmisc}
\usepackage[rightcaption]{sidecap}
\usepackage{enumitem}
\usepackage{utfsym}
\usepackage{wasysym}
\usepackage{tcolorbox}
\usepackage[normalem]{ulem}
\usepackage{cleveref}
\usepackage{relsize}
\newcommand{\crossmark}{\scalebox{0.75}{\usym{2613}}}

\theoremstyle{plain}
\newtheorem*{theorem*}{Theorem}
\newtheorem{definition}{Definition}
\newtheorem*{definition*}{Definition}
\newtheorem{theorem}{Theorem}
\newtheorem{lemma}{Lemma}

\title{
Faithful Explanations of Black-box NLP Models Using LLM-generated Counterfactuals
}

\author{Yair Ori Gat$^{T*}$, Nitay Calderon$^{T*}$, Amir Feder$^C$, \\ \textbf{Alexander Chapanin$^T$, Amit Sharma$^M$} and \textbf{Roi Reichart$^T$} \\
$^T$Faculty of Data and Decision Sciences, Technion, IIT \\ 
$^C$Columbia University Data Science Institute, $^M$Microsoft Research India \\
$^*$Equal contribution. Corresponding author: \texttt{nitay@campus.technion.ac.il}
}



%

\newcommand{\XCF}{\sX_{\texttt{CF}}}
\newcommand{\XM}{\sX_{\texttt{M}}}
\newcommand{\XMiCF}{\sX_{\texttt{¬CF}}}
\newcommand{\XMiM}{\sX_{\texttt{¬M}}}
\newcommand{\xCF}{\vx_{\texttt{CF}}}
\newcommand{\xM}{\vx_{\texttt{M}}}
\newcommand{\xMiCF}{\vx_{\texttt{¬CF}}}
\newcommand{\xMiM}{\vx_{\texttt{¬M}}}
\newcommand{\icace}{\widehat{\texttt{ICaCE}_f}}
\newcommand{\cace}{\widehat{\texttt{CaCE}_f}}
\newcommand{\err}{\texttt{Err}}

\iclrfinalcopy 
\begin{document}

\maketitle

\begin{abstract}

Causal explanations of the predictions of NLP systems are essential to ensure safety and establish trust. Yet, existing methods often fall short of explaining model predictions effectively or efficiently and are often model-specific. 
In this paper, we address model-agnostic explanations, proposing two approaches for counterfactual (CF) approximation.
The first approach is CF generation, where a large language model (LLM) is prompted to change a specific text concept while keeping confounding concepts unchanged. While this approach is demonstrated to be very effective, applying LLM at inference-time is costly. 
We hence present a second approach based on matching, and propose a method that is guided by an LLM at training-time and learns a dedicated embedding space. This space is faithful to a given causal graph and effectively serves to identify matches that approximate CFs.
After showing theoretically that approximating CFs is required in order to construct faithful explanations, we benchmark our approaches and explain several models, including LLMs with billions of parameters. 
Our empirical results demonstrate the excellent performance of CF generation models as model-agnostic explainers. 
Moreover, our matching approach, which requires far less test-time resources, also provides effective explanations, surpassing many baselines. 
We also find that Top-K techniques universally improve every tested method. 
Finally, we showcase the potential of LLMs in constructing new benchmarks for model explanation and subsequently validate our conclusions. Our work illuminates new pathways for efficient and accurate approaches to interpreting NLP systems.

\end{abstract}
\section{Introduction}
\label{sec:intro}

Providing faithful explanations for Natural Language Processing (NLP) model predictions is imperative to guarantee safe deployment, establish trust, and foster scientific discoveries \citep{amodei_concrete_2016, right_exp, exp_survey, jacovi2020towards}. These aspects are particularly significant in NLP, where the complexity of language and the opaque behavior of black-box models. Many past works focus on \textit{what knowledge a model encodes} \citep{faithful_survey}. However, just because a model encodes a myriad of features does not mean all are utilized in decision-making. For an explanation to be genuinely faithful and accurately depict a model’s underlying reasoning, it is crucial to establish causality. Recognizing this inherent link and following previous works \citep{vig2020causal, geiger-etal-2020-neural, feder_causalm_2020}, this paper introduces a theoretical framework that binds the two together, providing another evidence that non-causal explanation methods can occasionally fall short of being truly faithful. 

In contrast to model explanation techniques that \textit{often conflate correlation with causation}, causal-inspired methods often contrast predictions for an input example with those of its counterfactual \citep{soulos-etal-2020-discovering, elazar2021amnesic,finlayson-etal-2021-causal}. Indeed, counterfactuals are at the highest level of Pearl’s causal hierarchy \citep{causality}, highlighting how changes lead to a different prediction. However, they cannot be acquired without knowing the complete data-generating process (or structural model) of the text \citep{p_cfs}, which is not practical for a given real-world problem. Hence, we turn to \textit{counterfactual approximations} (CFs): imagining how a given text would look like if a certain concept in its generative process were different (e.g., a different domain for an Amazon product review, as in \cite{calderon2022docogen}). Nevertheless, acquiring CFs in past work was limited to simple local manipulations \citep{ribeiro-etal-2020-beyond, ross2021tailor, wu2021polyjuice} or costly manual annotation \citep{gardner-etal-2020-evaluating, kaushik_learning_2020}, hindering practical causal effect estimation of high-level concepts on NLP models. 

To address the above limitations, we rely on Large Language Models (LLMs) and introduce two practical approaches for model explanation. The first approach is \textit{Counterfactual Generation}, in which a generative model (an LLM) is prompted to change a concept of a text while holding confounding concepts fixed. The first contribution of this paper is our empirical evidence that \textit{CF generating LLMs are SOTA explainers}. However, utilizing LLMs comes with a high computational and financial cost. Moreover, the slow autoregressive generation process may limit the application of CF generation methods, especially in scenarios demanding real-time explanations or when explaining vast quantities of data \citep{kd_nlg}. Therefore, we introduce an efficient alternative: \textit{Matching}. In comparison to LLM-based CF generation, the matching technique is up to 1000 times faster (\S\ref{app_sub:efficiency}) 

We use matching to estimate the causal effect, pairing each treated unit with one or more control units with similar observed characteristics, i.e., identifying CF approximations within a pre-defined candidate set. To enhance matching quality, we propose in \S\ref{sub:causal_representation} a novel method for learning a causal embedding space. Given a causal graph (e.g.,  \cref{fig:causal_graph}), we first employ an LLM (ChatGPT) for generating CFs (note that the LLM is used only during the learning phase). We also identify possible valid matches: Instances with identical values in observable adjustment variables (that satisfy the back-door criterion). Subsequently, we train our causal representation model by minimizing an objective composed of contrastive loss components. The causal representation model learns to encode examples similarly to their LLM-generated CFs or matched examples. Conversely, it is trained to produce dissimilar representations for the query example and any misspecified CFs. As a result, representations remain faithful to the causal graph and can effectively identify matching candidates.

In \S\ref{sub:counterfactuals}, we propose a simple, intuitive, and essential criterion for explanation methods: \textit{Order-faithfulness} -- If the explanation method ranks one concept's impact above another's, its true causal effect should genuinely be greater. We then theoretically show that approximating CF methods are always order-faithful, in contrast to non-causal explanation methods (that overlook confounders, for example). Moreover, since CFs do not depend on the explained model, they enable causal estimation in a \textit{model-agnostic manner}. Model-agnostic explanation methods offer numerous advantages, especially during model selection, debugging, and deployment. For example, developers juggling multiple models can effortlessly rank them based on their vulnerability to confounding biases (such as gender bias). Hence,  we focus only on model-agnostic explanations in this study.

We undertake rigorous experiments to compare our causal language representation method against various generative and matching baselines. For this purpose, we employ CEBaB \citep{cebab}, the only established benchmark for evaluating causal explanation methods. We estimate the causal effect of 24 concept interventions on the predictions of five models, three small fine-tuned LMs, and two zero-shot LLMs with billions of parameters. 
In \S\ref{sub:ablation} we also conduct an ablation study to examine the impact of each design choice within our causal model. 

Our empirical results reveal three key findings: (1) The counterfactual generation approach provides strong black-box model explanations. (2) Our method for learning causal representations for matching outperforms all the matching baselines, including the best-performing method from the CEBaB paper (see Figure~\ref{fig:cebab_barplot}). (3) Top-$K$ matching universally greatly enhances the explanatory capabilities of all examined methods, including generative models, when generating multiple CFs.

Finally, building on our findings that LLMs can produce high-quality CFs, we address a pressing challenge: The lack of benchmarks for NLP model explanation methods. In \S\ref{app:new_setup} we demonstrate how to construct such a benchmark by leveraging GPT-4 to generate pairs of inputs and their CFs (used as ``golden CF''). We introduce a new CEBaB-like dataset, focusing on Stance Detection, an NLP task that determines the position expressed by the speaker towards a particular topic (e.g., abortions, climate change) 
\citep{stance_detection_survey, stance_detection1}, where model explanations
may guide decision-making processes or reveal societal biases. 
Our novel dataset allows evaluation in an out-of-distribution setting, such as when the matching candidate set has topics differing from the test examples. Utilizing the new dataset, we observe similar patterns to our main conclusions. We hope that this study will pave the way for safer, more transparent, and more accountable AI systems.

\section{Related work}
\label{sec:related}

\textbf{Explanation methods and causality.}\label{subsec:methods}
Probing is a common technique for understanding what model internal representations encode. In probing, a small supervised  \citep{conneau-etal-2018-cram,tenney-etal-2019-bert} or unsupervised  \citep{clark-etal-2019-bert,Manning-etal:2020,saphra-lopez-2019-understanding} model is used to measure whether specific concepts are encoded at specific places in a network. While probes have helped illuminate what models have learned from data, \citet{geiger2021causal} show that probes cannot reliably provide causal explanations. Feature importance tools can also be seen as explanation methods \citep{molnar2020interpretable}, often restricted to input features, although gradient-based methods can quantify the relative importance of hidden states as well \citep{Zeiler2014, Binder16, Shrikumar16, fi_explain}. The Integrated Gradients method of \citet{sundararajan17a} has a natural causal interpretation stemming from its exploration of baseline (CF) inputs \citep{geiger2021causal}. However, even where these methods can focus on internal states, it remains challenging to connect their analyses with real-world concepts that do not reduce to simple properties of inputs. In this study, we focus on the explanation of high-level real-world concepts.

Intervention-based methods involve modifying inputs or internal representations and studying the effects that this has on model behavior \citep{lundberg_unified_2017,ribeiro_why_2016}. Recent methods perturb input or hidden representations to create CF states that can then be used to estimate causal effects \citep{elazar2021amnesic,finlayson-etal-2021-causal,soulos-etal-2020-discovering,vig2020causal, geiger2021causal, feder2021causalm, feder_causal_2021}. However, these methods are prone to generating implausible inputs or network states unless the interventions are carefully controlled \citep{geiger-etal-2020-neural}. Recently, \citet{cpm} proposed the Causal Proxy Model, a novel explanation method inspired by S-learner \citep{kunzel2019metalearners}, which mimics the behavior of the explained model when the input is a CF. Although CPM is effective, the need to train a distinct explainer for each explained model is a major disadvantage. Conversely, our study focuses on model-agnostic explanation methods.

\textbf{Approximating counterfactuals.} A common use-case for CF examples in machine learning is for \emph{data augmentation}. 
These CFs involve perturbations to confounding factors \citep{garg2019counterfactual}, or to the label \citep{kaushik2019learning,kaushik2020explaining, jha2020does}. CF examples can be generated through manual editing, heuristic keyword replacement, or automated text rewriting \citep{kaushik2019learning, gardner-etal-2020-evaluating, shekhar-etal-2017-foil, garg2019counterfactual, feder2021causalm, zmigrod-etal-2019-counterfactual, riley2020textsettr, wu2021polyjuice, mao2021generative, rosenberg2021vqa}. Manual editing is accurate but expensive, while keyword-based methods can be limited in coverage and difficult to generalize across languages \citep{antoniak-mimno-2021-bad}. Generative approaches offer a balance of fluency and coverage \citep{zhou2023implicit}. CF examples help address causal inference's missing data issues, but generating meaningful CFs without LLMs is challenging. Our work uses causal representations to match examples from the treatment group (i.e., the concept for which the causal model effect is estimated) with similar examples from the control group.

\textbf{Faithful Explanations.} Faithfulness is a desirable property of any explanation method, broadly defined as \textit{the ability of the method to provide accurate descriptions of the underlying reasoning of the explained model} \citep{jain2019attention, jacovi2020towards}. As of yet, there is no universally accepted or formal definition of this term \citep{faithful_survey}. Instead, many opt to characterize faithfulness using a variety of evaluation metrics, including Sensitivity \citep{sundararajan17a, sanity_check, learning_explain}, Consistency \citep{rethinking_faith}, Feature Importance Agreement \citep{measuring_assoc}, Robustness \citep{fragile, stability} and Simulatability \citep{rep_biomed, logical_cf, nmt_eval, teacher_eval}. Most of these works focus on feature-level explanations, rely on access to the internal mechanism of the model, train an external model, and occasionally conflate between correlation and causation. In this work, we focus on high-level concept explanations. Relying on the causality literature, in \S\ref{sub:counterfactuals} we propose a simple, intuitive formal criterion, which is also a necessary condition for the above broad definition of faithfulness: \textit{Order-Faithfulness}. 
\section{Method}
\label{sec:method}

\subsection{Model explanation with counterfactuals}
\label{sub:counterfactuals}

This study focuses on black-box NLP model explanations by estimating the causal effect of high-level concepts on model prediction. The first requirement for any causal estimation method is access to a \textit{causal graph} that describes our causal beliefs, i.e.,  the concepts and the relationships between them (e.g., see the causal graph in Figure~\ref{fig:causal_graph}). Notice that we do not assume access to the complete \textit{data-generating process (DGP)}, which besides the causal graph, also describes the exact mathematical model that quantitatively defines the relationships and interactions among concepts and the exogenous variables. Furthermore, the causal effect it must be identifiable from the graph (for an example of a causal graph that is non-identifiable, see Figure 4 in \citet{crash_course}), and therefore, in this study, we discuss only identifiable causal graphs. 

To provide an accurate explanation, we should estimate the \textit{Average Treatment Effect (ATE)} \citep{causality} or the \textit{Individual Treatment Effect (ITE)} \citep{shpitser2006identification,ite}. Specifically, the treatment is a high-level concept influencing the text (such as \textit{ambiance} in a restaurant review), and the outcome variable is a prediction of a text classifier. When discussing model explanation in the sense of the causal effect of a concept on the model prediction, the common versions of the ATE and the ITE are the \textit{Causal Concept Effect (CaCE)} \citep{cace} and the \textit{Individual Causal Concept Effect (ICaCE)} \citep{cebab}, which we next formally define. 

Given a DGP $\gG$, an intervention on a treatment variable $T: t \rightarrow t'$ (for simplicity we sometimes write $T \leftarrow t'$), a model $f$, and a query example $\vx_t$, the CaCE and ICaCE are defined to be:
\begin{align}
\label{eq:formal_cace}
    \texttt{CaCE}_f(T, t, t') &= \E_{\vx' \sim \gG}\left[f(\vx')|do(T=t')\right] - \E_{\vx' \sim \gG}\left[f(\vx')|do(T=t)\right] 
    \\
    \texttt{ICaCE}_f(\vx_t, T, t') &= \E_{\vx' \sim \gG}\left[f(\vx')|\vx_t, do(T=t')\right] - f(\vx_t) \nonumber
\end{align}

One way to provide an estimation of the causal effect and explain the model is by using \textit{counterfactuals (CFs)}, which enables causal estimation in a model-agnostic manner (because CFs do not depend on the explained model). While we are aware of the causal graph that encodes our causal beliefs, for a given problem we cannot access the complete DGP nor the variable values of an example, and therefore, we cannot produce gold CFs. Given this, we propose two approaches for approximating them: Counterfactual generation (\S\ref{sub:llm_generated}) and Matching (\S\ref{sub:causal_representation}). 

If $\widetilde{\vx}_{t'}$ is an approximated CF of the text $\vx_{t}$ resulting from the intervention $T: t \rightarrow t'$ (can be either a model-generated CF or a match), then the causal effect estimators are:
\begin{align}
\label{eq:icace}
    \icace(\vx_t, T, t, t') &:= \icace(\vx_t, \widetilde{\vx}_{t'}) = f(\widetilde{\vx}_{t'}) - f(\vx_t) \\
    \cace(T, t, t') &= \frac{1}{|\sD|}\left( 
    \sum_{\vx_t}{\icace(\vx_t, \widetilde{\vx}_{t'})} +
    \sum_{\vx_{t'}}{\icace(\widetilde{\vx}_{t}, \vx_{t'})} +
    \sum_{\vx}{\icace(\widetilde{\vx}_{t}, \widetilde{\vx}_{t'})}
    \right) \nonumber
\end{align}

Additionally, we can provide a more robust estimator (see the third findings in our results \S\ref{sec:results}) by performing \textit{Top-K matching}:
\begin{align}
\label{eq:top_k}
\icace(\vx_t, T, t, t') = \frac{1}{K}\sum_{i=1}^{K}\icace(\vx_t, \widetilde{\vx}^i_{t'})
\end{align}

\subsection{Counterfactuals as an ideal model explanation}
\label{sub:faithful}

While definitions of faithfulness vary across literature, there is a consensus that a faithful explanation method should accurately describe a model's underlying reasoning \citep{faithfulness3, jacovi2020towards, faithfulness1}. \citet{faithful_survey} distinguish between two primary avenues of model explanations: \textit{what knowledge a model encodes} and \textit{why a model makes certain predictions}. Nevertheless, a prevailing misconception is that the \textit{what} avenue directly informs \textit{what the model uses in making predictions} — a flawed notion. While a model can encode a myriad of features, it does not imply that all are utilized in decision-making. Explanations derived from the \textit{what} avenue, thus, tend to be correlational rather than causal. Focusing on the \textit{why} is imperative for understanding a model's reasoning mechanisms, making causality indispensable. Consequently, causality must be embedded at the core of the definition of faithfulness.

Therefore, relying on the causality literature, we propose a simple, intuitive, and essential criterion: \textit{Order-faithfulness}. This implies that if the explanation method ranks the impact of one concept (or intervention) higher than another, the true causal effect of the first concept should genuinely exceed that of the second. Order-faithfulness is crucial in scenarios like model selection and deployment. For instance, to determine the fairness of a hiring model, it is essential to know whether it incorrectly prioritizes non-relevant concepts (e.g., gender) over genuine concepts (such as experience). We next formally define this necessary property. 

\begin{definition*}[Order-Faithfulness]
Given an i.i.d. dataset $\sD$ sampled from a DGP $\gG$, an explained model $f$ and a pair of interventions $C_1: c_1 \rightarrow c'_1, C_2: c_2 \rightarrow c'_2$, an explanation method $S$ is \textbf{order-faithful} if: 
\begin{align}
    \E_{\sD \sim \gG}[S(f, C_1, c_1, c'_1)] > \E_{\sD \sim \gG}[S(f, C_2, c_2, c'_2)] \Longleftrightarrow \texttt{CaCE}_{f}(C_1, c_1, c'_1) > \texttt{CaCE}_{f}(C_2, c_2, c'_2) 
\end{align}
\end{definition*}


Note that the definition above does not mandate the explanation method to estimate the causal effect accurately or generate scores proportional to the causal effects. Instead, it simply requires the method to be rank-preserving. In addition, it holds for comparing the importance of two distinct concepts, as many attribution methods like LIME do \citep{ribeiro_why_2016}, and it also holds when assessing the importance of changes in the same concept.
We believe that order-faithfulness is a necessary condition for the broad definition of faithfulness above: \textit{to accurately describe the underlying reasoning of the explained model} \citep{jacovi2020towards, faithful_survey}. If an explanation method ranks the importance of one concept above another, but in fact its causal effect is smaller, then the method does not accurately reflect the actual reasoning process of the model being explained.

In Appendix \S\ref{sec:theorem} we provide detailed formal definitions of two explanation methods: Approximated CF explanation methods and Non-causal explanation methods (which we define as any function of the observed data). 
Furthermore, we prove the following theorem, which elucidates why approximated CF methods are always order-faithful, unlike non-causal ones.

\begin{theorem*}
For an explained model $f$, (1) The approximated CF explanation method $S_{CF}$ is order-faithful for every DGP $\gG$ and a pair of interventions; (2) For every DGP $\gG$, there exist a DGP $\gG'$ resulting from a small modification of $\gG$, a model $f'$, and a pair of interventions $C_1: c_1 \rightarrow c'_1, C_2: c_2 \rightarrow c'_2$ such that the non-causal explanation method $S_{NC}$ is not order-faithful.
\end{theorem*}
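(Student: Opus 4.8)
The plan is to prove the two parts separately: Part (1) reduces to unbiasedness of the counterfactual estimator and is essentially immediate, while Part (2) is a confounding counterexample that has to be engineered to survive the ``every $\gG$'' and ``small modification'' clauses.

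\textbf{Part (1): $S_{CF}$ is order-faithful.} First I would unwind the appendix definition of $S_{CF}$: it is the estimator $\cace(C,c,c')$ (equivalently, its averaged $\icace$ form) built from \emph{valid} counterfactual approximations, whose defining property is that after marginalizing over the approximation procedure and the draw $\sD \sim \gG$, the approximated CF $\widetilde{\vx}_{t'}$ of a unit under $C\!:\!c\to c'$ is distributed as a sample from $\gG \mid do(C=c')$ — this is exactly what faithfulness to the (identifiable) causal graph, i.e.\ adjustment on the back-door set, provides. Then, by linearity of expectation over the i.i.d.\ dataset and over the approximation randomness, and checking that each of the three summation terms in the definition of $\cace$ has the same expectation, one gets $\E_{\sD\sim\gG}[S_{CF}(f,C,c,c')] = \E_{\vx'\sim\gG}[f(\vx')\mid do(C=c')] - \E_{\vx'\sim\gG}[f(\vx')\mid do(C=c)] = \texttt{CaCE}_f(C,c,c')$, up to a fixed positive normalization constant that is irrelevant to orderings. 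Order-faithfulness then follows trivially in both directions at once: $\E_\sD[S_{CF}(f,C_1,c_1,c_1')] > \E_\sD[S_{CF}(f,C_2,c_2,c_2')] \iff \texttt{CaCE}_f(C_1,c_1,c_1') > \texttt{CaCE}_f(C_2,c_2,c_2')$.

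\textbf{Part (2): $S_{NC}$ can fail.} The structural fact I would lean on is that $\E_{\sD\sim\gG'}[S_{NC}(f',C,c,c')]$ is a functional of the \emph{observational} joint law of $(\text{concepts},\vx,f'(\vx))$ under $\gG'$ — it never sees a $do$-operator — whereas $\texttt{CaCE}_{f'}$ does. So I would start from an arbitrary identifiable $\gG$ and perturb it into $\gG'$ by introducing an arbitrarily weak confounding path: add an exogenous common cause $U$ (or a small-coefficient edge) that feeds both a concept $C_1$ and the text $\vx$ along a route not passing through $C_1$, keeping the adjustment set observed so that $\gG'$ stays identifiable. Choosing $f'$ to be sensitive to the $U$-driven component of $\vx$ makes the observational association of $C_1$ with $f'(\vx)$ strictly exceed its true interventional effect, while a second concept $C_2$ (or a second target value for $C_1$) remains un-confounded. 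A continuity / intermediate-value argument in the intervention targets $c_1',c_2'$ then lets me set the two true causal effects equal (or ordered one way) while the two observational scores stay strictly ordered the other way, violating the iff. Since the confounding edge can be taken with arbitrarily small weight, this construction exists for every $\gG$, which is precisely the ``small modification'' clause.

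\textbf{Where the difficulty lies.} Part (1) is routine once the unbiasedness property is in hand; the only bookkeeping is the normalization across the three terms of $\cace$. The real work is the genericity of Part (2): making the counterexample go through uniformly for \emph{every} starting $\gG$ while simultaneously keeping $\gG'$ a legitimate small, identifiable modification, and — if the appendix takes $S_{NC}$ to be \emph{any} function of the observed data rather than one fixed observational estimator — arranging the construction so that the two interventions are indistinguishable to \emph{all} such functions, e.g.\ by building $\gG'$ with an observational-level exchangeability between the two interventions that $f'$ breaks only interventionally. That symmetry-versus-asymmetry balancing act is the step I expect to be the main obstacle.
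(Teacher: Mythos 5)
Your Part (1) is essentially the paper's argument: Definition~\ref{def:cf} gives $\E[f(\tilde{\vx}_{t'})]=\E[f(\vx)\mid do(T=t')]$, so $S_{CF}$ is an unbiased estimator of $\texttt{CaCE}_f$ and rank-preservation is immediate (the paper works with the simpler appendix form of $S_{CF}$, so even the three-term bookkeeping you mention is not needed).

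Part (2), however, has a genuine gap, and it is exactly the one you flag at the end. The paper's $S_{NC}$ is \emph{any} function $h$ of the observed triplets $D[X,f,T]=\{(\vx,f(\vx),T(\vx))\}$, not a fixed associational estimator. Your construction (an arbitrarily weak confounder feeding $C_1$ and the text, an $f'$ sensitive to the confounded component, then a continuity/intermediate-value argument over the targets $c_1',c_2'$) is aimed at fooling the naive conditional-difference estimator; it does not control what an arbitrary $h$ outputs, because a weak confounder generically \emph{changes} the observational joint law, so some $h$ could in principle detect the change and remain rank-correct. The paper's key device, which your sketch is missing, is to make the observational law exactly invariant: it adds a three-valued hidden node $C_0$ that deterministically tracks whether $C_1\in\{c_1,c_1'\}$ (with marginals copied from $\gG$ so that $P_{\gG'}(C_1)=P_{\gG}(C_1)$), and defines $f'(\vx)=f(\vx)-2d\,\sI[C_1(\vx)=c_1]+2d\,\sI[C_0(\vx)=0]$ where $d$ is the original gap in causal effects. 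The two indicators cancel on every observational realization, so $P_{\gG'}(\vx,f'(\vx),C_i)=P_{\gG}(\vx,f(\vx),C_i)$ for all observed concepts and \emph{every} $h$ gives the same explanations as on $\gG$; but under $do(C_1=c_1')$ the term $\sI[C_0=0]$ does not move, so $\texttt{CaCE}_{f'}(C_1)=\texttt{CaCE}_f(C_1)-2d<\texttt{CaCE}_{f'}(C_2)$, reversing the true order. Combined with the case split (if $S_{NC}$ already fails on $\gG$, take $f'=f$; otherwise its order-faithfulness on $\gG$ pins its ranking), this yields the contradiction for an arbitrary $S_{NC}$.

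Two further points in your sketch would not survive scrutiny even for a fixed estimator. First, ``arbitrarily small weight'' confounding cannot flip a strict causal-effect gap $d>0$: the theorem's ``small modification'' is structural (one added unobserved node), not a weak-effect limit, and the paper's $f'$ is shifted by the non-small amount $2d$ precisely to cross that gap while remaining observationally indistinguishable. Second, the intermediate-value argument over intervention targets presumes a continuum of target values, whereas the concepts in this framework are discrete (e.g., positive/negative/unknown), so there is in general no knob to tune the two true effects into equality or into a chosen ordering.
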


\textit{Proof sketch}: We first prove that approximated CF methods are always faithful by showing the expected prediction of an approximated CF is equal to the interventional one (conditioned on the do operator). To prove the second part, we construct $\gG'$ by introducing a new unobserved confounder variable (the ``small modification'') that reverses the order of two interventions' causal effect on a new model $f'$. In addition, we ensure that the non-causal method produces the same explanations as it produces for $\gG$ since the generation process of $\vx$ is the same in the two DGPs. Hence, $S_{NC}$ cannot be order-faithful in $\gG$ and $\gG'$ at the same time. 

The theorem underscores the importance of developing causal-inspired explanation methods and suggests why our CF-based approaches are more effective than other baselines. 

\begin{table}[t!]
\captionof{table}{Toy illustration of examples (and their concept values) sampled from the four sets of a given query example and an intervention of changing service to positive ($S\leftarrow+$). The misspecified CF ($\XMiCF$) resulted from a wrong intervention of removing the ambiance concept ($A\leftarrow\texttt{?}$).}
\begin{minipage}[b]{.54\textwidth}
\centering
\large
\begin{adjustbox}{width=1\textwidth}
\begin{tabular}{cl}
\toprule
\multirow{2}{*}{Query} & Excellent lobster and decor, but rude waiter. \\
 & $F:+ \hfill S:- \hfill N:\texttt{?} \hfill A:+$ \\
\midrule
\multirow{2}{*}{$\XCF$} & Excellent lobster and decor, and friendly waiter. \\
 & $F:+ \hfill S:+ \hfill N:\texttt{?} \hfill A:+$ \\
\midrule
\multirow{2}{*}{$\XM$} & A cozy atmosphere with great pizza and service. \\
 & $F:+ \hfill S:+ \hfill N:\texttt{?} \hfill A:+$ \\
\midrule
\multirow{2}{*}{$\XMiCF$} & Excellent lobster and rude waiter.  \\
 & $F:+ \hfill S:+ \hfill N:\texttt{?} \hfill A:\texttt{?}$ \\
\midrule
\multirow{2}{*}{$\XMiM$} & Too expensive for a loud chaos with bland food. \\
 & $F:- \hfill S:\texttt{?} \hfill N:- \hfill A:-$ \\
\bottomrule
\end{tabular}
\end{adjustbox}
\label{tab:set_exmamples}
\end{minipage}
\hfill
\begin{minipage}[b]{.44\textwidth}
\centering
\includegraphics[width=1\textwidth]{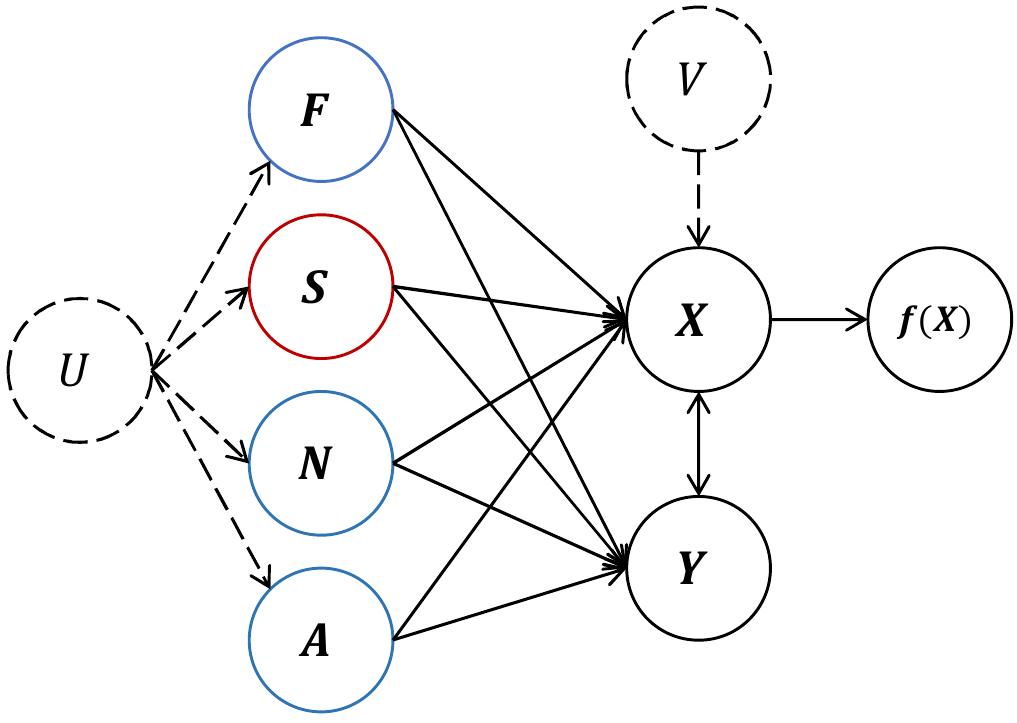}
\label{fig:causal_graph}
\end{minipage}
\captionof{figure}{Our illustration of the causal graph of the CEBaB benchmark \citep{cebab}. $U$ and $V$ are (unobserved) exogenous variables representing the state of the world. For example, $V$ may account for style, syntax, and length. The four concepts $F$ (food), $S$ (service), $A$ (ambiance) and $N$ (noise) affect both the textual review $X$ and its five-star rating $Y$. The relationship between $X$ and $Y$ can be causal or anti-causal. 
$f$ is the model we wish to explain, which is trained to predict $Y$ from $X$. In this study, we are interested in estimating the causal effect of changing a concept on the predictive model $f$ (e.g., changing the value of the red variable $S$ from positive to negative).}
\end{table}
\vspace{-0.25cm}

\subsection{LLM-generated counterfactuals}
\label{sub:llm_generated}

The first approach we introduce for CF approximation is \textit{Counterfactual Generation}, in which a generative model (an LLM) is prompted to change an attribute conveyed in the text. We explicitly inject our causal beliefs into the prompt using the fundamental causal concept of \textit{adjustment}. Given a causal graph and an intervention $T:t \rightarrow t'$, we first derive the adjustment set: Confounding concepts satisfy the back-door criterion (see Appendix \S\ref{app:formulations}). Complementary, the causal graph also implies which variables should not be adjusted: Mediators (variables that are part of the causal pathway) and Colliders (variables that are caused by both the treatment and the outcome). Adjusting these variables can lead to incorrect CFs. If we are interested in the direct causal effect (and not the total effect) of the treatment variable, we can add the mediators to the adjustment set. 

Then, given a query example $\vx_t$, we prompt the LLM to generate a CF. The prompt instructs the LLM to intervene in the text by changing the value of the concept $T$ from $t$ to $t'$. Additionally, we instruct the LLM to consider the adjusted concepts (confounders) and hold them fixed. Optionally, we can also specify the non-adjusted concepts (such as mediators and colliders) if they exist and ask the LLM to consider that the intervention on $T$ might affect these concepts. This specification can increase the precision of the generated CF. For the Top-K technique, we simply generate multiple CFs.

Nevertheless, utilizing an LLM at inference time is frequently impractical because of its prolonged latency, high financial costs, or privacy concerns that prevent data from being sent to external servers (see the discussion in \citet{kd_nlg} and our latency measurements in \S\ref{app_sub:efficiency}). Considering these challenges, in the next subsection (\S\ref{sub:causal_representation}) we introduce an efficient alternative, the \textit{matching} technique. 

\subsection{Causal representation learning for matching}
\label{sub:causal_representation}


Although counterfactual generation is a valuable model explanation approach, employing LLMs during inference can be infeasible. Conversely, there is an opportunity to harness model-generated CFs to train an efficient method that seeks approximations within a pre-defined set of candidate texts, mitigating the need for direct generation. This approach is known as \textit{matching} \citep{matching} and endeavors to match each treatment unit with its most similar control unit. To allow effective matching, we introduce a novel method that learns to encode texts in a space faithful to the causal graph, i.e., each treatment unit resonates closely with its CF counterparts and remains distinct from confounding texts. Formally, a matched example of $\vx_t$ is defined to be:
\begin{align}
\label{eq:match}
    m_1(\vx_t) = \argmax_{\vx_{t'} \in \sD(T=t')}{s\left(\phi(\vx_t), \phi(\vx_{t'})\right)}
\end{align}
The \textit{candidate set} is denoted by $\sD(T=t')$ and is also known as the control group, contains only texts whose treatment variable equals $t'$, and is known in advance (i.e., both the treatment and control groups are pre-defined and there is no need to infer which text belong to which group). $\phi(\cdot)$ is a feature extractor function (the language representation model we aim to learn) that maps discrete text to the hypersphere space where matching is conducted by ranking candidates according their similarity with the query: $s(\phi(\vx_t), \phi(\texttt{candidate}))$. The high-dimensional representation $\phi(\cdot)$ of each text is extracted by a text encoder and is the mean vector of the last layer hidden states of the text's tokens. In our experiments, the similarity function $s(\cdot)$ is the cosine similarity:
\begin{align}
\label{eq:cosine}
    s(\phi(\vx_t), \phi(\vx_{t'})) = \frac{\phi(\vx_t)^T\phi(\vx_{t'})}{\lVert \phi(\vx_t) \rVert \cdot \lVert \phi(\vx_{t'}) \rVert}
\end{align}
Notice that Eq~\ref{eq:match} defines \textit{1:1 matching}, for \textit{Top-$K$ matching} we use: $\frac{1}{K}\sum_{i=1}^{K}\icace(\vx_t, m_i(\vx_t))$. 

Our causal representation method is based on an Encoder-only language model (e.g., RoBERTa \citep{roberta} or a SentenceTransformer \citep{starnsformer} in our case) fine-tuned to minimize an objective consisting of six contrastive loss components. The goal of the objective is to enhance the similarity with approximate counterfactuals and matches while reducing similarity with misspecified counterfactuals and matches. 

Given a causal graph, an intervention on a treatment variable $T:t \rightarrow t'$ and an example $\vx_t$, we define the four following sets (see toy examples in Table~\ref{tab:set_exmamples}): 
\begin{enumerate}
    \item $\XCF$ - This set contains \textit{Counterfactuals}, obtained by modifying $\vx_t$ through the intervention on $T$, while the other variables are held unchanged. We utilize an LLM (ChatGPT) to simulate several CFs (up to ten). 
    
    \item $\XM$ - This set contains \textit{Matched samples}. These instances share identical values for the adjustment variables (that satisfy the back-door criterion, see Appendix \S\ref{app:formulations}). Our study employs a smaller model to predict their values and construct a set of valid matches.
    
    \item $\XMiCF$ - This set contains \textit{Misspecified Counterfactuals}. These examples are derived from editing $\vx_t$ through interventions on variables other than the treatment variable. The term \textit{misspecification} is well-known in the causal literature and indicates a wrong assumption of the DGP \citep{miss}. In our case, a wrong intervention.
    
    \item $\XMiM$ - This set contains \textit{Misspecified Matched samples}. This is the complementary set to $\XM$, containing all those instances that do not qualify as valid matches due to differing values of the adjustment variables. 
\end{enumerate}

We next formulate the \textit{contrastive loss} \citet{contrastive}, which aims to attract some input $\vx$ to the ``positive set'' $\sX_+$, and separate it from the ``negative set'' $\sX_-$. $\tau$ is a temperature hyperparameter.
\begin{align}
\label{eq:contrastive}
    \mathcal{L}(\vx, \sX_+, \sX_-) = -\log \left[ \frac{\sum_{\vx_+}{\exp(s(\phi(\vx), \phi(\vx_+)) / \tau)}}{\sum_{\vx_+}{\exp(s(\phi(\vx), \phi(\vx_+)) / \tau)} + \sum_{\vx_-}{\exp(s(\phi(\vx), \phi(\vx_-)) / \tau})} \right]
\end{align}
The final loss consists of six contrastive loss components:
\begin{align}
    \mathcal{L}(\vx_t) = & \mathcal{L}(\vx_t, \XCF, \XMiM) + \mathcal{L}(\vx_t, \XCF, \XMiCF) + \mathcal{L}(\vx_t, \XCF, \XM)  \notag \\ 
 + & \mathcal{L}(\vx_t, \XM, \XMiM) + \mathcal{L}(\vx_t, \XM, \XMiCF) \label{eq:loss} \\
    + & \mathcal{L}(\vx_t, \XMiCF, \XMiM) \notag 
\end{align}

We hypothesize (and verify) that omitting components from the objective risks favoring inaccurate matches, as we highlight in the ablation study at \S\ref{sub:ablation}. Accordingly, the objective prioritizes text matches that closely resemble CF approximations within a candidate set, defaulting to adjusting variable values in their absence. When choosing between misspecified CFs or misspecified matches, the methods favor the latter, given their potential shared traits with the CF, such as syntax or style linked to exogenous variables. Formally, given a query example $\vx_t$ and four matching candidates $\xCF \in \XCF, \xM \in \XM, \xMiCF \in \XMiCF$, the candidates' ranking, which is based on their similarity with $\vx_t$, should be $\xMiM \preceq  \xMiCF \preceq \xM \preceq \xCF$. This desirable order is not arbitrary and is a direct product of the six components, ensuring robustness to variations of the candidate set. In addition, we investigate in the ablation study \S\ref{sub:ablation} the applicability of our matching method without pre-identified concept annotations in the training dataset (i.e., completely unsupervised setup). To this end, we utilize an LLM and predict the concept values in a zero-shot manner. 

\paragraph{Training procedure.} We start with a small set of textual examples (the train set) labeled with the adjusting variables (e.g., review concepts such as food and service). Initially, we employ a small Encoder-only model (RoBERTa) and train a concept predictor for each variable. The predictor models are then harnessed to construct the four sets: By predicting the concept values of each training example, we divide them (according to their concept values) and construct the $\XM$ and $\XMiM$ sets. 

For constructing $\XCF$ and $\XMiCF$, we use a few-shot LLM (ChatGPT) to generate approximate CFs and misspecified CFs (see prompts at \S\ref{app_sub:prompts}). We filter out misspecified CFs from $\XCF$ if the concept predictors indicate an adjusted variable was also changed during the treatment intervention. Additionally, we use simple rules to filter unsuccessful generations (e.g., empty strings, ``As an AI LM...''). 

To train our language representation model (aimed to explain a specific concept), we proceed as follows: For every training example $\vx_t$, we randomly sample four examples: $\xCF \in \XCF, \xM \in \XM, \xMiCF \in \XMiCF$. The training goal is to minimize the objective of Eq.~\ref{eq:loss}. We repeat this process for several epochs (15) and select the checkpoint that achieves the lowest loss on the dev. set.

\section{Experimental setup}
\label{sec:experimental}

\subsection{Explanation method evaluation}
\label{sub:explanation_pipeline}

\textbf{Causal Estimation-Based Benchmark (CEBaB).} CEBaB \citep{cebab} comprises high-quality, labeled CFs designed to benchmark model explanation methods. This benchmark originated from thousands of original restaurant reviews obtained from OpenTable. For every original review, human annotators were tasked to edit the review and write an approximate CF that reflects a specific intervention, e.g., ``change the service concept from positive to negative''. Each entry (original reviews and approximate CFs) in CEBaB received a 5-star sentiment rating, labeled by five annotators. Furthermore, every text was annotated at a conceptual level, as positive, negative or neutral w.r.t. four central mediating concepts: Food, Service, Ambiance, and Noise (see Figure~\ref{fig:causal_graph}). CEBaB contains two train sets, exclusive ($N=1463$) and inclusive (which we do not use), development ($N=1672$), and test ($N=1688$) sets. We randomly split the exclusive set into two equal sets: the train set (for training the causal representation model and the concept predictors) and a matching candidates set.

\textbf{Evaluation pipeline.} A high-level explanation method that provides an estimator of the $\texttt{ICaCE}$ can be evaluated using benchmarks like CEBaB, i.e., an interventional dataset consisting of examples, interventions, and a corresponding human-written ground-truth CF for each example and intervention. 
For a given example $\vx_t$ and an intervention $T:t \rightarrow t'$, using the example the corresponding ground-truth CF $\widetilde{\vx}^h_{t'}$ we can estimate the golden individual causal effect: $y^h = \icace(\vx_t, \widetilde{\vx}^h_{t'})$

For the same example $\vx_t$ and intervention, the explanation method estimates the individual causal effect $y^m$. In the case of a CF-based explanation method, the estimation is computed with the approximated CF: $y^m = \icace(\vx_t, \widetilde{\vx}^m_{t'})$. For the generative approach, we generate a CF $\widetilde{\vx}^m_{t'}$ using an LLM and calculate $\icace(\vx_t, \widetilde{\vx}^m_{t'})$. For matching, given an intervention $T \leftarrow t'$ and a set of candidates that share the new treatment value: $\sD(T=t') = \left\{\vx|\vx \in \sD, T(\vx)=t' \right\}$, we utilize a matching model (e.g., our causal representation model) to represent each candidate with a high-dimensional continuous vector. Then, given a query example $\vx_t$, we extract its representation, rank the candidates according to their cosine similarity, select a match, and calculate the $\icace(\vx_t, m_1(\vx_t))$.

\textbf{Evaluation measures.} Given the two estimations, $y^h$ and $y^m$, we then calculate the distance between the two using three metrics introduced in \citet{cebab} $L2$ distance, Cosine distance, and norm difference.
\begin{align}
\label{eq:metrics}
    \texttt{L2}(y^h, y^m) =& \lVert y^h - y^m \rVert_2 
    \\ 
    \texttt{Cos}(y^h, y^m) =&  1-\frac{(y^h)^T y^m}{\lVert y^h \rVert_2 \lVert y^m \rVert_2} 
    \\
    \texttt{ND}(y^h, y^m) =& \left|\lVert y^h \rVert_2 - \lVert y^m \rVert_2 \right|
\end{align}
Finally, the distance is plugged into Eq.\ref{eq:error} to provide the estimation error of the explanation method:
\begin{align}
\label{eq:error}
   \err(f, m, T, t, t') = \frac{1}{|\sD(T \leftarrow t')|}\sum_{(\vx_t,\widetilde{\vx}^h_{t'}) \in \sD} {\texttt{Dist} \left( \icace(\vx_t, \widetilde{\vx}^h_{t'}), \icace(\vx_t, \widetilde{\vx}^m_{t'}) \right)} 
\end{align}
The $\err$ scores we present in this study are the mean over 24 interventions: four concepts with three values (six interventions for each concept). 

\subsection{Models and baselines}
\label{sub:baselines}

In \Cref{app:implementations} we provide additional implementation details, including hyparameters and prompts.

\textbf{Generative approach.} We employ three different CF generation techniques: \textit{Zero-shot Generative}, \textit{Few-shot Generative} (with three demonstrations) and \emph{Fine-tune Generative}. The first two are based on the Decoder-only ChatGPT \citep{chatgpt}, while the third is based on an Encoder-decoder T5-base model \citep{t5} trained using a parallel dataset, comprised of original reviews and their human-written CFs (the inclusive train set of CEBaB). The zero-shot and few-shot prompts are described in Appendix \S\ref{app_sub:prompts}. 

\textbf{Matching baselines.} We benchmark against six matching methods: (1) \textit{Random Match}, which randomly selects an example from the candidate set; 
(2) \textit{Propensity} \citep{propensity}, which first estimates the propensity score $P(T=t'|x)$ using a concept predictor, and then conducts matching based on this score;
(3) \textit{Approx}, which randomly selects an example from $\XM$. Notice that $\XM$ can be an empty set, thus, unlike the other baselines, matching may not be performed;
(4) \textit{PT RoBERTa}, which utilizes a pre-trained LM to represent texts and finds matches using their cos similarity;
(5) \textit{PT S-Transformer}, same as PT RoBERTa, expect the backbone model is a SentenceTransformer model \citep{starnsformer} trained to maximize semantic textual similarity; 
and (6) \textit{FT S-Transformer}, a SentenceTransformer fine-tuned to predict the five-star sentiment; \\
Notice that \citet{cebab} found that Approx outperforms all seven tested (non-matching) baselines (see Figure~\ref{fig:cebab_barplot}). Therefore we do not include them in this study.

\textbf{Interpreted models.} We interpret five different models of varying sizes. The first three are Encoder-only models fine-tuned using only the original reviews from the train set of CEBaB to predict the five-star sentiment. These models include DistilBERT \citep{distilbert}, BERT \citep{bert}, and RoBERTa \citep{roberta}. The other two are the zero-shot chat versions of the Decoder-only Llama-2 model \citep{llama2} with 7 and 13 billion parameters. We extract the five-star sentiment distribution of the Llama models using the prompt described in Figure~\ref{fig:rating}. 


\section{Results}
\label{sec:results}

\begin{table}[t]
\caption{Comparison between different methods and baselines. The columns present $\err$ scores (Eq.~\ref{eq:error}) when explaining different five-class sentiment models. The sub-columns present three different measures: Euclidean distance (\textit{L2}), Cosine distance (\textit{Cos}), and norm difference (\textit{ND}). The top table presents scores using a single match ($K=1$), and the bottom table when $K=10$. The \textit{Generative} rows are not matching methods and thus are not comparable to such methods. Another non-comparable row is the first, which presents the performance of our causal model with a candidate set that also includes ground-truth (GT) CFs. \textbf{Numbers are means over 24 interventions, $\downarrow$ is better.}}
\centering
\large
\begin{adjustbox}{width=0.97\textwidth}
\begin{tabular}{l|ccc|ccc|ccc|ccc|ccc|c}
\toprule
 & \multicolumn{3}{c|}{\textbf{DistilBERT}} & \multicolumn{3}{c|}{\textbf{BERT}} & \multicolumn{3}{c|}{\textbf{RoBERTa}} & \multicolumn{3}{c|}{\textbf{Llama2-7b}} & \multicolumn{3}{c|}{\textbf{Llama2-13b}} & \multirow{2}{*}{\textbf{AVG}}
\\
$K=1$ & \underline{\textit{L2}} & \underline{\textit{Cos}} & \underline{\textit{ND}} & \underline{\textit{L2}} & \underline{\textit{Cos}} & \underline{\textit{ND}} & \underline{\textit{L2}} & \underline{\textit{Cos}} & \underline{\textit{ND}} & \underline{\textit{L2}} & \underline{\textit{Cos}} & \underline{\textit{ND}} & \underline{\textit{L2}} & \underline{\textit{Cos}} & \underline{\textit{ND}} & \\
\midrule
  Causal Model (+GT) &  \textbf{.13} &            \textbf{.09} &  \textbf{.06} & \textbf{.14} & \textbf{.10} & \textbf{.07} & \textbf{.13} &          \textbf{.09} & \textbf{.06} & \textbf{.12} &          \textbf{.07} & \textbf{.06} &  \textbf{.11} &            \textbf{.07} &  \textbf{.06} & \textbf{.09} \\
 \midrule
 Fine-tune Generative &           .38 &   .28 &           .21 &          .42 & .28 &          .23 &          .39 & .27 &          .21 &          .36 & .23 &          .22 &           .31 &   .21 &  .19 &          .28 \\
 Few-shot Generative &           .42 &            .32 &           .23 &          .45 &          .29 &          .23 &          .41 &          .30 &          .21 &          .38 &          .26 &          .23 &           .33 &            .23 &           .20 &          .30 \\
Zero-shot Generative &           .43 &            .35 &           .24 &          .47 &          .35 &          .25 &          .44 &          .34 &          .23 &          .40 &          .27 &          .25 &           .35 &            .26 &           .21 &          .32 \\
        \midrule
        Random Match &           .88 &            .64 &           .47 &          .96 &          .62 &          .49 &          .95 &          .63 &          .49 &          .84 &          .54 &          .50 &           .84 &            .55 &           .49 &          .66 \\
          Propensity &           .90 &            .68 &           .48 &          .99 &          .69 &          .56 &          .99 &          .68 &          .56 &          .88 &          .61 &          .53 &           .86 &            .58 &           .52 &          .70 \\
              Approx &           .74 &            .60 &           .41 &          .81 &          .58 &          .45 &          .79 & .60 &          .44 &          .71 & .49 &          .43 &           .70 &            .54 &           .49 &          .58 \\
          PT RoBERTa &           .75 &            .61 &           .39 &          .83 &          .59 &          .42 &          .81 &          .61 &          .41 &          .74 &          .53 &          .43 &           .73 &            .51 &           .43 &          .59 \\
    PT S-Transformer &           .75 &            .60 &           .40 &          .83 &          .60 &          .45 &          .81 &          .61 &          .43 &          .74 &          .55 &          .44 &           .69 &            .52 &           .41 &          .59 \\
    FT S-Transformer &           .72 &            .78 &           .46 &          .82 &          .79 &          .59 &          .80 &          .84 &          .56 &          .73 &          .63 &          .45 &           .68 &            .63 &           .39 &          .66 \\
        Causal Model &  \textbf{.66} &   \textbf{.55} &  \textbf{.36} & \textbf{.70} & \textbf{.55} & \textbf{.39} & \textbf{.68} & \textbf{.56} & \textbf{.37} & \textbf{.64} &          \textbf{.47} & \textbf{.39} &  \textbf{.59} &   \textbf{.45} &  \textbf{.36} & \textbf{.52} \\
\midrule

 & \multicolumn{3}{c|}{\textbf{DistilBERT}} & \multicolumn{3}{c|}{\textbf{BERT}} & \multicolumn{3}{c|}{\textbf{RoBERTa}} & \multicolumn{3}{c|}{\textbf{Llama2-7b}} & \multicolumn{3}{c|}{\textbf{Llama2-13b}} & \multirow{2}{*}{\textbf{AVG}} \\

$K=10$ & \underline{\textit{L2}} & \underline{\textit{Cos}} & \underline{\textit{ND}} & \underline{\textit{L2}} & \underline{\textit{Cos}} & \underline{\textit{ND}} & \underline{\textit{L2}} & \underline{\textit{Cos}} & \underline{\textit{ND}} & \underline{\textit{L2}} & \underline{\textit{Cos}} & \underline{\textit{ND}} & \underline{\textit{L2}} & \underline{\textit{Cos}} & \underline{\textit{ND}} & \\
\midrule
 Fine-tune Generative &  \textbf{.32} &   \textbf{.21} &  \textbf{.20} & \textbf{.36} & \textbf{.23} & \textbf{.23} & \textbf{.34} & \textbf{.21} & \textbf{.21} & \textbf{.30} & \textbf{.18} & \textbf{.20} &  \textbf{.27} &   \textbf{.16} &  \textbf{.18} & \textbf{.24} \\
 Few-shot Generative &           .38 &            .30 &           .22 &          .42 &          .28 &          .24 &          .38 &          .29 & \textbf{.21} &          .34 &          .24 &          .22 &           .30 &            .22 &           .19 &          .28 \\
Zero-shot Generative &           .38 &            .32 &           .23 &          .42 &          .31 &          .25 &          .40 &          .31 &          .23 &          .36 &          .24 &          .23 &           .31 &            .22 &           .19 &          .29 \\
        \midrule
        Random Match &           .70 &            .52 &           .40 &          .78 &          .51 &          .47 &          .77 &          .51 &          .45 &          .67 &          .45 &          .40 &           .67 &            .45 &           .40 &          .54 \\
          Propensity &           .72 &            .54 &           .43 &          .80 &          .52 &          .51 &          .79 &          .52 &          .50 &          .68 &          .47 &          .42 &           .69 &            .47 &           .43 &          .57 \\
              Approx &           .61 &   .49 &           .37 &          .68 & .48 &          .43 &          .66 & .48 &          .41 &          .58 & .40 &          .37 &           .57 &  .41 &           .36 &          .49 \\
          PT RoBERTa &           .64 &            .50 &           .36 &          .71 &          .49 &          .44 &          .70 &          .49 &          .42 &          .61 &          .44 &          .37 &           .60 &            .43 &           .36 &          .51 \\
    PT S-Transformer &           .64 &            .51 &           .38 &          .72 &          .50 &          .45 &          .71 &          .50 &          .44 &          .61 &          .45 &          .38 &           .59 &            .44 &           .36 &          .51 \\
    FT S-Transformer &           .61 &            .56 &           .43 &          .69 &          .56 &          .50 &          .67 &          .56 &          .47 &          .60 &          .48 &          .41 &           .56 &            .50 &           .36 &          .53 \\
        Causal Model &  \textbf{.56} &   \textbf{.47} &  \textbf{.35} & \textbf{.63} & \textbf{.46} & \textbf{.41} & \textbf{.60} & \textbf{.45} & \textbf{.38} & \textbf{.55} &          \textbf{.40} & \textbf{.36} &  \textbf{.52} &            \textbf{.38} &  \textbf{.33} & \textbf{.46} \\
\bottomrule
\end{tabular}
\end{adjustbox}
\label{tab:baselines}
\end{table}
\vspace{-0.25cm}

Our main results are provided in Table~\ref{tab:baselines}. In Appendix \S\ref{sub:ablation} we provide a thorough ablation study, demonstrating the effect of each of our design choices, including, among others, the impact of variations of the candidate set and the causal model objective (Eq.~\ref{eq:loss}). Our three main findings are:

\paragraph{1. Counterfactual generation models are the SOTA model-agnostic explainers.} The first promise of this study is that CF generation models are strong explainers. As can be seen from Table~\ref{tab:baselines}, the generative models achieve much lower errors compared to the other methods, making them the SOTA model-agnostic explainers. In addition, the table provides two additional findings: (1) Few-shot prompts are better than zero-shot prompts for generating CFs; and (2) The small fine-tuned Encoder-decoder T5 is the best generative model. This suggests that when a parallel dataset consisting of pairs of input examples and their human-written CFs is available, then it is better to fine-tune a small model. Moreover, the small model is computationally (and financially) less expensive than LLMs (but much slower than matching methods). Nevertheless, consider that collecting a parallel dataset for fine-tuning is labor intensive.

\paragraph{2. The causal representation model is the best-performing matching method.} 
Table~\ref{tab:baselines} also sheds light on the performance of our novel causal representation method (described in \S\ref{sub:causal_representation}, \textit{our causal model} in short) in comparison to various explainers and matching baselines. Notably, our causal model consistently outperforms all the matching baselines, achieving substantially lower errors across five explained models. Noteworthy, \citet{cebab} proposed the Approx method (the only matching method they examined) and found it outperforms common NLP explainers. Our results show that all matching baselines (including Approx, excluding Random Match and Propensity) are competitive and perform similarly. Nevertheless, they fall short of our causal representation model. 

Although the generative approach exhibits better scores than the matching approach, it is important to note that the quality of the $\icace$ estimation of any matching method is highly dependent on the matching set. Therefore, we also include the $\err$ scores for our causal model when the candidate set contains the ground-truth human-written CFs (see the first row in Table~\ref{tab:baselines}). In that case, our causal model outperforms even the generative models. This suggests that our method is potentially the SOTA explainer (although this is impractical since access to CFs is a strong assumption).

Finally, Figure~\ref{fig:kth}, which plots the $\err$ score when selecting the $k$\textsuperscript{th} match, $m_k(x)$, reveals three desirable properties of our causal model: (1) The lowest $\err$ score belongs to the first match, indicating that, on average, our causal model indeed selects the best match; (2) A monotonically increasing pattern (i.e., a larger $k$ results in a higher $\err$) demonstrates a strong association between representation similarity and explainability capacity: The more similar a match is, the more accurate the $\icace$ estimator becomes; and (3) The top-ranked matches selected by our causal model outperform other baselines, achieving notably lower $\err$ scores. 

\begin{figure}[t!]
    \begin{minipage}[b]{.48\textwidth}
    \centering
    \centering
    \includegraphics[width=1\textwidth]{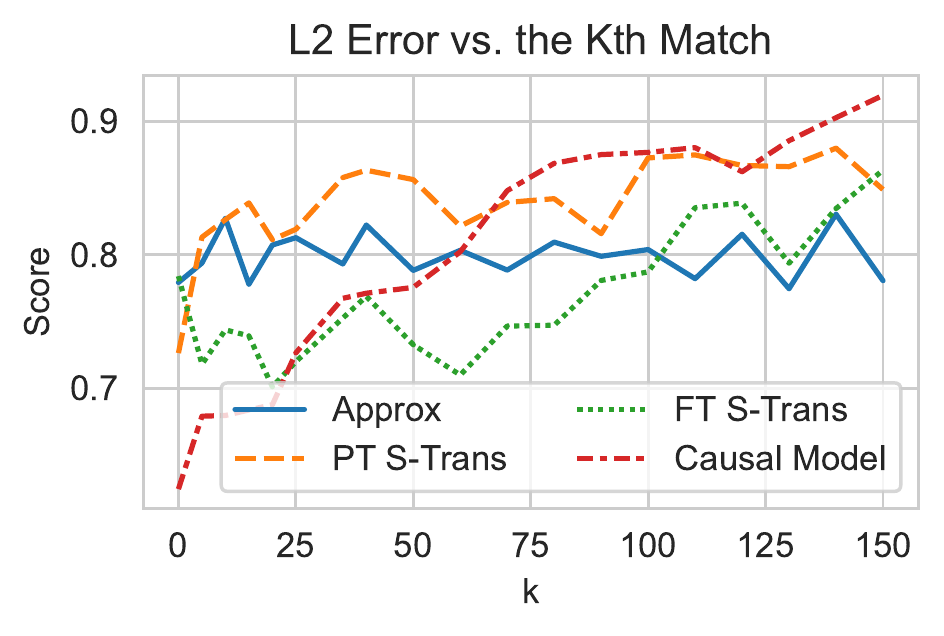}
    \caption{The L2 $\err$ score for explaining the DistilBERT model (Y-axis) as a function of selecting the $k$\textsuperscript{th} match, $m_k(x)$. For Figures \ref{fig:kth} and \ref{fig:topk}, we present average $\err$ scores only for interventions with a candidate set of size larger than 250, making the numbers differ from Table~\ref{tab:baselines}.}
    \label{fig:kth}
    \end{minipage}
    \hfill
    \begin{minipage}[b]{.48\textwidth}
    \centering
    \includegraphics[width=1\textwidth]{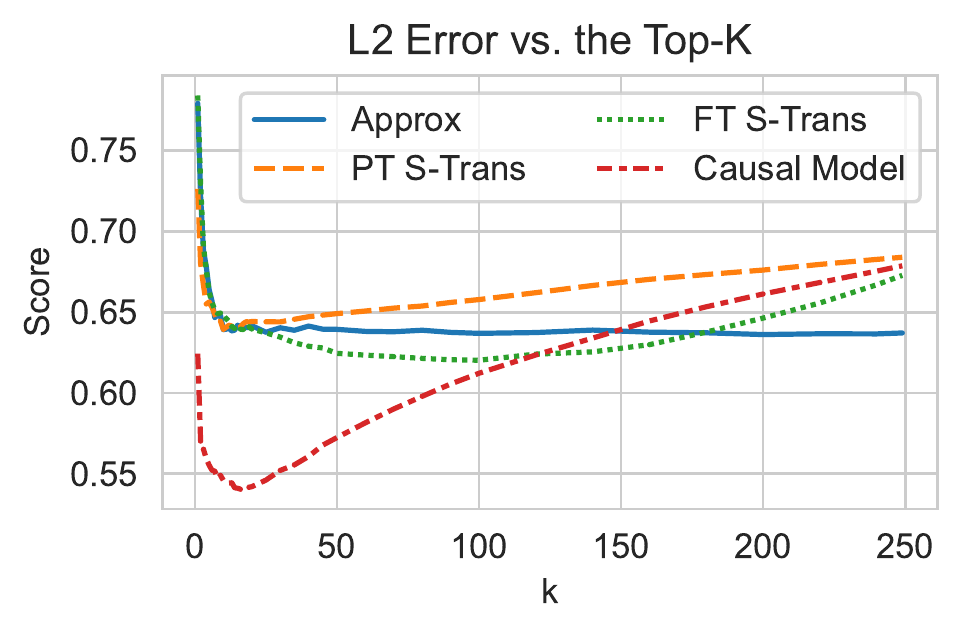}
    \captionof{figure}{The L2 $\err$ score for explaining the DistilBERT model (Y-axis) as a function of Top-$K$ matching. The figure illustrates the beneficial impact of considering multiple matches. However, beyond a certain point, adding more matches negatively affects causal estimation.}
    \label{fig:topk}
    \end{minipage}
\end{figure}

\paragraph{3. Top-$K$ matching improves the explanation capabilities of every method.} When comparing the top rows of Table~\ref{tab:baselines} ($k=1$) to the bottom rows ($k=10$), it is easy to observe that Top-$K$ matching lowers the $\err$ of every examined explanation method. This observation also holds for the generative models, which accordingly generate multiple CFs via sampling. The definitive results, combined with the simplicity and cost-effectiveness of the Top-$K$ extension (especially for matching methods, see Appendix \S\ref{app_sub:efficiency}), suggest that top-$K$ should be a standard for explainers. Given this, the question that arises is \textit{what should the value of K be}?

In Figure~\ref{fig:topk}, we plot the L2 $\err$ as a function of $k$ for our method and three other baselines when explaining DistilBERT (the trends are similar for the other metrics and explained models). The prominent `\checkmark' shape curve of the causal model is desirable and indicates that the model selects good matches at first. Then, around $k=20$, the error begins to surge and only after $k=100$ that it matches the error at $k=1$. Conversely, other baselines rank candidates less effectively and exhibit a more gradual increase after their minimum. Nonetheless, the Top-$K$ approach has a beneficial effect for all methods and a broad range of values can be picked before the effect fades. We arbitrarily opted for $k=10$ in this study, even though $k=20$ would have been more advantageous for our causal model. A future direction could explore how to tune the $k$ parameter for each individual example.

\subsection{Ablation study}
\label{sub:ablation}

\begin{table}[t!]
\begin{minipage}[b]{.62\textwidth}
\caption{Ablation study. Row 1: Our model. Row 2: With a different backbone. Row 3: Causal S-Transformer (ours) without filtering misspecified CFs from $\XCF$. Row 4: Without human-annotated concepts (i.e., concepts are predicted by a zero-shot LLM). Rows 5-10: Different variations of the objective (Eq.~\ref{eq:loss}) when discarding components that include the specified sets. Columns present different matching candidate sets: original, original with ground-truth CFs, and original with misspecified CFs. \textbf{Numbers are the mean over 5 explained models and 24 interventions. $\downarrow$ is better.}}
\label{tab:ablations}
\centering
\Large
\begin{adjustbox}{width=1\textwidth}
\begin{tabular}{ll|ccc|ccc|ccc}
\toprule
 & & \multicolumn{3}{c|}{\textbf{Original}} & \multicolumn{3}{c|}{\textbf{+ GT CFs}} & \multicolumn{3}{c}{\textbf{+ Miss. CFs}} \\
& & \underline{\textit{L2}} & \underline{\textit{Cos}} & \underline{\textit{norm}} & \underline{\textit{L2}} & \underline{\textit{Cos}} & \underline{\textit{norm}} & \underline{\textit{L2}} & \underline{\textit{Cos}} & \underline{\textit{norm}} \\
\midrule
1 &    Causal S-Trans. &     .65 &         .52 &            .38 &   .13 &       .09 &          .06 &        .70 &            .58 &               .46 \\
2 &          Causal Roberta &     .66 &         .52 &            .37 &   .16 &       .11 &          .08 &        .70 &            .59 &               .46 \\
3 &           w/o filtering &     .66 &         .51 &            .38 &   .12 &       .08 &          .05 &        \cellcolor{lightgray}.77 &            \cellcolor{lightgray}.63 &               \cellcolor{lightgray}.52 \\
4 &    w/o labels &     .65 &         .52 &            .36 &   .13 &       .09 &          .06 &        .67 &            .59 &               .43 \\
\midrule
5 &              w/o $\XCF$ &     .67 &         .54 &            .39 &   \cellcolor{lightgray}.26 &        \cellcolor{lightgray}.18 &           \cellcolor{lightgray}.13 &        .69 &            .58 &               .45 \\
6 &               w/o $\XM$ &     .66 &         .51 &            .37 &   .09 &       .06 &          .04 &        \cellcolor{lightgray}.81 &            \cellcolor{lightgray}.68 &               \cellcolor{lightgray}.57 \\
7 &            w/o $\XMiCF$ &     .66 &         .51 &            .38 &   .06 &       .04 &          .03 &        \cellcolor{lightgray}.83 &            \cellcolor{lightgray}.68 &               \cellcolor{lightgray}.60 \\
8 &             w/o $\XMiM$ &     .66 &         .51 &            .38 &    \cellcolor{lightgray}.16 &        \cellcolor{lightgray}.11 &           \cellcolor{lightgray}.07 &        .70 &            .59 &               .46 \\
9 &    w/o $\XM \cup \XMiM$ &     .66 &         .53 &            .39 &    \cellcolor{lightgray}.19 &        \cellcolor{lightgray}.13 &           \cellcolor{lightgray}.09 &        \cellcolor{lightgray}.74 &            \cellcolor{lightgray}.61 &               \cellcolor{lightgray}.50 \\
10 &  w/o $\XCF \cup \XMiCF$ &     .66 &         .52 &            .38 &    .15 &        .09 &           .07 &        .72 &            .60 &               .49 \\
\bottomrule
\end{tabular}
\end{adjustbox}
\end{minipage}
\hfill
\begin{minipage}[t]{.32\textwidth}
\centering
\includegraphics[width=1\textwidth]{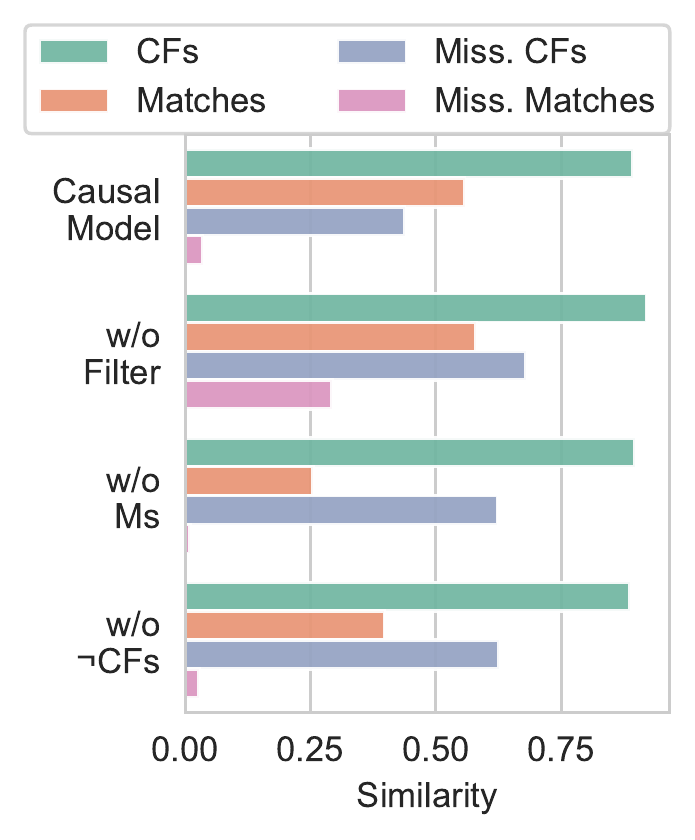}
\captionof{figure}{Average similarity between query examples and examples from the four sets (the bars), plotted for different variants of the causal model (Y-axis). 
}
\label{fig:similarities}
\end{minipage}
\end{table}

This subsection aims to shed more light on the effect of each of our design choices. To this end, we examine variations or omissions of components from our causal representation model. Noteworthy, with the original candidate set, which is relatively small, all of the ablation models are competitive, and the performance difference is insignificant. Therefore, we also experiment with variations of the candidate set that reveal the pitfalls of each ablation model. Table~\ref{tab:ablations} presents the performance when using three candidate sets: (1) The original candidate set, containing only original reviews from the matching split; (2) The original set augmented by the ground-truth human-written CFs; and (3) The original set augmented by the misspecified model-generated CFs.  

We start by examining the \textit{backbone encoder model} of the causal model: SentenceTransformer and RoBERTa. Comparing the first two rows of Table~\ref{tab:ablations}, we observe that both encoders are competitive and can be utilized as the backbone. We decided to use SentenceTransformer because its development loss was lower. By comparing the first row to the third, we examine the impact \textit{filtering unsuccessful or misspecified CFs} from $\XCF$. Without filtering, the causal model prioritizes misspecified CFs above valid matches (see Figure~\ref{fig:similarities}), and as a result, when misspecified CFs are present in the candidate set, the performance degrades (third row, right column, gray cells).

In row 4, we investigate the applicability of our method without human-annotated concepts in the training dataset (i.e., completely unsupervised setup). Utilizing an LLM (ChatGPT), we predicted concept values in a zero-shot manner. Row 4 reveals that the performance remains consistent and on par with models trained on human-annotated datasets. Surprisingly, it performs better in some metrics (although insignificant), likely because the LLM predicts annotations that are sometimes missed due to disagreements between annotators. 

We next examine the impact of \textit{discarding components from the model objective Eq~\ref{eq:loss}}. To this end, for each set of examples, $\XCF$, $\XM$, $\XMiCF$, and $\XMiM$, we train an encoder model while ignoring the three components from the objective that utilize examples from the discarded set. In addition, we also train two encoder models using only model-generated CFs (w/o $\XM$ and $\XMiM$), and without any (w/o $\XCF$ and $\XMiCF$, i.e., training only with human-written reviews -- not CFs).

The behavior of each ablation model is as one would expect. For example, when $\XCF$ is discarded from the objective (row 4), the ablation model struggles to identify CFs and the performance declines when the matching sets include ground-truth CFs. Conversely, when $\XMiCF$ or $\XM$ are excluded (rows 6, 7), the model fails to distinguish between CFs and misspecified ones, often favoring the latter over valid matches. Consequently, the performance drops when the candidate set contains misspecified CFs. \textit{Only our causal model consistently performs well across both candidate sets.} Notably, we can see that even without model-generated CFs (row 9), the causal model performs well. Although the causal estimation is less precise, \textit{achieving a good explainer without utilizing an LLM is possible.}

Finally, Figure~\ref{fig:similarities} presents the average similarity between query examples and the four sets for the causal and selected ablative models. As can be seen, our causal model learns the desirable ranking order: $\xMiM \preceq  \xMiCF \preceq \xM \preceq \xCF$. In contrast, the model learns to favor misspecified CFs over matches when discarding $\XM$ or $\XMiCF$ or when not performing filtering of misspecified CFs from $\XCF$. Although these ablation models have a competitive performance to our model and some outperform it when the candidate set contains ground-truth CFs, they fail when the set contains misspecified CFs, highlighting the importance of learning the right ranking order. 

\section{New setup: explaining stance detection models}
\label{app:new_setup}

\begin{table}[t]
\caption{Examples from the new Stance Detection setup.
In the first stage of the dataset construction (top table), we split the original tweets into the train, dev, matching, and test sets. 
In the second stage (middle table), we keep 30\% of the original tweets, and for the remaining 70\%, we randomly assign values for the three writer's concepts. In this example, we ask GPT-4 to edit the tweet ``to make it sound as if a teenage female wrote it''. In the third (bottom table), we generate CFs for the test set (which are used for benchmarking). 
Here, we ask GPT-4 to change the Job concept to Farmer.}
\label{tab:new_setup_examples}
\centering
\footnotesize
\begin{tabularx}{0.98\textwidth}{ X }
\toprule
\textbf{S1: Original tweet.} I need feminism because "what were you wearing" shouldn't be a question when I tell my stepmom I was cat called. \#SemST \\ 
$Subject: \texttt{Feminism} \hfill Age: \texttt{?} \hfill Gender: \texttt{?} \hfill Job: \texttt{?} \hfill Label: \texttt{Favor}$ \\
\midrule
\textbf{S2: Modified tweet.} OMG, like, I totally need feminism cuz 'what were you wearing?' shouldn't even be a thing when I tell my stepmom some guy shouted stuff at me. \#JustSaying \\
$Subject: \texttt{Feminism} \hfill Age \leftarrow \texttt{Teen} \hfill Gender \leftarrow \texttt{\female} \hfill Job: \texttt{?} \hfill Label: \texttt{Favor}$ \\
\midrule
\textbf{S3: CF (test set).} Gosh, I reckon I'd holler for feminism, cause 'what were ya wearin?' ought not be a question when I tell ma about some fella yelled stuff at me while I was out in the fields. \#JustSaying \\
$Subject: \texttt{Feminism} \hfill Age: \texttt{Teen} \hfill Gender: \texttt{\female} \hfill Job \leftarrow \texttt{Farmer} \hfill Label: \texttt{Favor}$ \\
\bottomrule
\end{tabularx}
\end{table}

While we aim to validate our earlier conclusions across multiple benchmarks, CEBaB remains the sole high-quality benchmark for NLP explainers. On the other hand, we show that LLMs generate CFs that resemble human-written CFs. Consequently, we harness LLM capabilities to emulate CEBaB and propose a new benchmark for model explanation, eliminating the high cost of manually annotating such benchmarks. We decided to focus on Stance Detection, which is an important and well-studied NLP task that determines the position (support, oppose, neutral) expressed by the writer towards a particular topic (e.g., abortions, climate change) \citep{stance_detection_survey, stance_detection_miss, stance_detection1}. This task is intriguing because understanding public opinions and beliefs can guide decision-making, policy formulation, and marketing strategies. Estimating the causal effect of high-level concepts like gender or age may shed light on the societal biases and perceptions that drive social trends.

We rely on the stance detection dataset from the SemEval 2016 shared task \citep{semeval16}. This dataset contains tweets about five subjects: Abortion, atheism, climate change, feminism, and Hillary Clinton. Humans annotated the stance of the tweets with three possible labels: support, oppose, and neutral. 
For simplicity of the new setup, we embrace the same causal graph of CEBaB (see Figure~\ref{fig:causal_graph}), but replace the four review concepts with new relevant concepts: (1) Tweet's subject (specified above); (2) Writer's age - teenage, elder or unknown; (3) Writer's gender - female, male or unknown; and (4) Writer's Job - farmer, professor or unknown. The four concepts have an exogenous variable $U$ as their parent and are the direct cause of the text $X$ and the label (stance) $Y$, which is naturally affected by the four concepts. The explanation methods should estimate the effect of 9 interventions: changing the value of the three writer's concepts (age, gender, job) to three values (we do not explain the tweet's subject concept). 

\textbf{Constructing the new setup.} We follow three steps for constructing the new benchmark. See the examples in Table~\ref{tab:new_setup_examples}. In the first stage, we randomly split the data into four sets: train ($N=1000$), matching ($N=2250$), dev ($N=250$), and test ($N=500$). Each subject is precisely one-fifth of each split. For each set and subject, we keep 30\% of the original tweet and assign an `unknown' value to the writer's concepts. For the remaining 70\%, we randomly sample two of the three writer's concepts and assign new values (e.g., male, unknown age, professor). Noteworthy, we do not uniformly sample the writer's profile since we want to introduce correlations between the concepts and the label. Table~\ref{tab:new_setup_probs} presents the joint probabilities of different concept and label values. 

In the second stage, we utilize GPT-4 to generate new tweets according to the new value assignment of 70\% of the tweets. The prompts we used are described in Figure~\ref{fig:stance_r1}. 
In the third and final stage, we utilize GPT-4 to generate the ground-truth CFs for the test set, which enables calculating the $\err$ for evaluating explanation methods. Accordingly, for each test example, we prompt GPT-4 and generate a single CF for each of the six possible interventions (3 writer's concepts with two possible values that are different from the assignment). The prompt is provided in Figure~\ref{fig:stance_r2}. In addition, we ask GPT-4 to predict the new label of the tweet (after the edit) and add this information to the dataset.

\begin{table}[t]
\caption{Results for the new stance detection setup we introduce in an out-of-distribution scenario (for each test example, the candidate set compromise texts with different subjects). We explain two stance detection models: \textbf{New} - RoBERTa fine-tuned on new stance labels extracted by GPT-4; and \textbf{Original} - DistilBERT fine-tuned with the stance labels of the original tweets. The description of the rows is given in the caption of Table~\ref{tab:baselines}. \textbf{Numbers are the mean over 18 interventions, $\downarrow$ is better.}}
\label{tab:new_results}
\centering
\small
\begin{adjustbox}{width=0.92\textwidth}
\begin{tabular}{l|ccc|ccc|ccc|ccc|c}
\toprule
& \multicolumn{6}{c|}{\textbf{$K=1$}} & \multicolumn{6}{c|}{\textbf{$K=10$}} &  \\
\midrule
 & \multicolumn{3}{c|}{\textbf{New Labels}} & \multicolumn{3}{c|}{\textbf{Original}}  & \multicolumn{3}{c|}{\textbf{New Labels}} & \multicolumn{3}{c|}{\textbf{Original}} &\multirow{2}{*}{\textbf{AVG}} \\
 & \underline{\textit{L2}} & \underline{\textit{Cos}} & \underline{\textit{ND}} & \underline{\textit{L2}} & \underline{\textit{Cos}} & \underline{\textit{ND}} & \underline{\textit{L2}} & \underline{\textit{Cos}} & \underline{\textit{ND}} & \underline{\textit{L2}} & \underline{\textit{Cos}} & \underline{\textit{ND}} &  \\
\midrule
  Causal Model (+GT) &  \textbf{.08} &            \textbf{.16} &  \textbf{.06} & \textbf{.10} & \textbf{.13} & \textbf{.08} & - & - & - & - & - & - & \textbf{0.10} \\
 \midrule
Zero-shot Generative &           .22 &            .74 &           .16 &          .36 &          .65 &          .30 &          .19 &          .68 &          .15 &          .34 &          .61 &          .30 &         .39  \\
\midrule
        Random Match &           .57 &            .85 &           .45 &          .85 &          .77 &          .71 &          .46 &          .82 &          .36 &          .76 &          .73 &          .65 &       .66  \\
          Propensity &           .55 &            .87 &           .44 &          .83 &          .80 &          .70 &          .44 &          .83 &          .34 &          .74 &          .74 &          .64 &           .66  \\
              Approx &           .54 &            .87 &           .43 &          .73 &          .75 &          \textbf{.60} &          .43 & .82 &          .34 &          .65 & .72 &          \textbf{.56} &      .66 \\
          PT RoBERTa &           .47 &            .89 &           .36 &          .76 &          .75 &          .63 &          .41 &          .84 &          .31 &          .70 &          .73 &          .60 &           .62   \\
    PT S-Transformer &           .47 &            .86 &           .36 &          .77 &          .75 &          .64 &          \textbf{.40} &          .81 &          \textbf{.30} &          .73 &          .73 &          .63 &          .62  \\
        Causal Model &  \textbf{.46} &   \textbf{.84} &  \textbf{.36} & \textbf{.71} & \textbf{.74} & \textbf{.60} & \textbf{.40} & \textbf{.80} & \textbf{.30} & \textbf{.64} &          \textbf{.71} & .58 &  \textbf{.59}   \\
\bottomrule
\end{tabular}
\end{adjustbox}
\end{table}

\textbf{Experimental details.}
We interpret two stance detection models. The first one is a RoBERTa fine-tuned on the new stance labels extracted by GPT-4. The second model is DistilBERT fine-tuned on the stance labels of the original tweets.
Noteworthy, the causal effect of the writer's concepts is more pronounced in the second model. This arises because we designed the dataset to have a large correlation between the original labels and the concepts, which the fine-tuned model captures. In contrast, when fine-tuning with the new labels produced by GPT-4, this causal effect becomes weak. This is mainly because GPT-4 tends to predict the neutral label far more frequently than its representation within the original labels (64.8\% vs. 25.5\%, refer to Table~\ref{tab:new_setup_probs}), thereby diminishing the correlation we initially established. 

In contrast to CEBaB, our new setup enables testing of our causal matching method in an out-of-distribution scenario. A prevalent approach in the stance detection literature is assessing model robustness against distribution shifts arising from changes in the subject of the texts. For instance, a model might be trained on texts discussing abortion and then tested on texts about climate change \citep{sd_ood2, pada, sd_ood1}. In our context, such a distribution shift might happen when the subject of the test samples is distinct from the subjects within the matching candidate set. Consequently, we conduct experiments to gauge the performance of our method under circumstances where the distribution shift is evident (i.e., for each test example, the matching set compromises only candidates with different subjects). 

For the CF generation approach, we employ ChatGPT (and not GPT-4, which generates the ground-truth CFs of the test set) and only with zero-shot prompts. We benchmark our causal representation model with the same matching baselines from \S\ref{sub:baselines} with the same hyperparameters and training procedure described in \S\ref{app:implementations}. 

\textbf{Results.} We report the results on Table~\ref{tab:new_results}. As can be seen, we reproduce our main findings from the previous section on the new benchmark, confirming the robustness of our conclusions, even in out-of-distribution settings. Specifically, the generative approach outperforms the matching methods (although consider that in this setup, the ground-truth CFs are also model-generated). In addition, the causal representation model surpasses the other matching baseline and outperforms the generative model when the candidate set contains ground-truth CFs. Finally, the Top-$K$ technique universally improves all the explainers. 
\section{Requirements and limitations} 
\label{sec:limitations}

In this section, we discuss the setup requirements for estimating the causal effect according to our two approaches and the limitations these requirements may pose. In addition, we discuss and demonstrate the applicability of our approaches to more complex causal graphs and setups than CEBaB. 

\subsection{Requirements} 
\label{sub:req}

\noindent\textbf{Causal graph.}
The first requirement for any causal estimation method is access to the causal graph describing the concepts and the relationships between them. Since the causal graph is typically derived by a domain expert, specifying it might seem like a limitation preventing generalization and automation of our explanation methods. 
However, notice that causal graph specification is a requirement for accurate causal estimation \citep{shpitser2006identification, feder2021causalm, feder_causal_2021}, this also aligns with our theoretical result: Explanation methods that are unaware of the causal mechanism might be unfaithful. In some fields (such as healthcare), causal explanations are crucial, making the effort to construct a causal graph a worthwhile endeavour. Consider an NLP model that recommends medical treatments based on symptoms described in the medical record. In that case, the clinician can not rely on correlational interpretations of the model recommendations. Finally, following future research and technological advances, it is reasonable to assume that the reliance on human experts for concept and causal graph discovery will reduce, and this process will gradually become more and more automatic.

\noindent\textbf{Annotated training dataset}
In the generative approach, a training dataset is not essential, except when performing few-shot prompting. In this case, a small selection of text examples, interventions, and counterfactuals is required.

Conversely, the general requirement of any matching method is the availability of a candidate set from which the method selects a match. The candidate set should contain an annotation of the treatment concept. This is because the method needs to select a match annotated with the corresponding target intervention value; otherwise, it will fail to provide an accurate estimation. In our setup, the candidate set is randomly sampled from the training set. For training our causal representation method, we also need a training set annotated with the adjusted variables. This set is used for: (1) Constructing the $\XCF$ and $\XMiCF$ sets by prompting an LLM to generate CFs of examples from the training set; (2) Training concept predictors - which are used for filtering misspecified CFs; (3) Constructing the $\XM$ and $\XMiM$ sets. 

Although a candidate set and a training dataset with annotations are used in our experiments, we believe they are not required and can be easily generated by an LLM. We demonstrate this approach in our ablation study (subsection \S\ref{sub:ablation}), where we train our matching method with concept annotations predicted by an LLM. According to the results, the performance remains consistent and on par with models trained on human annotations.

\subsection{Complex causal graphs} 
\label{sub:complex}

\begin{figure}[t!]
    \begin{minipage}[b]{.45\textwidth}
    \centering
    \includegraphics[width=0.8\textwidth]{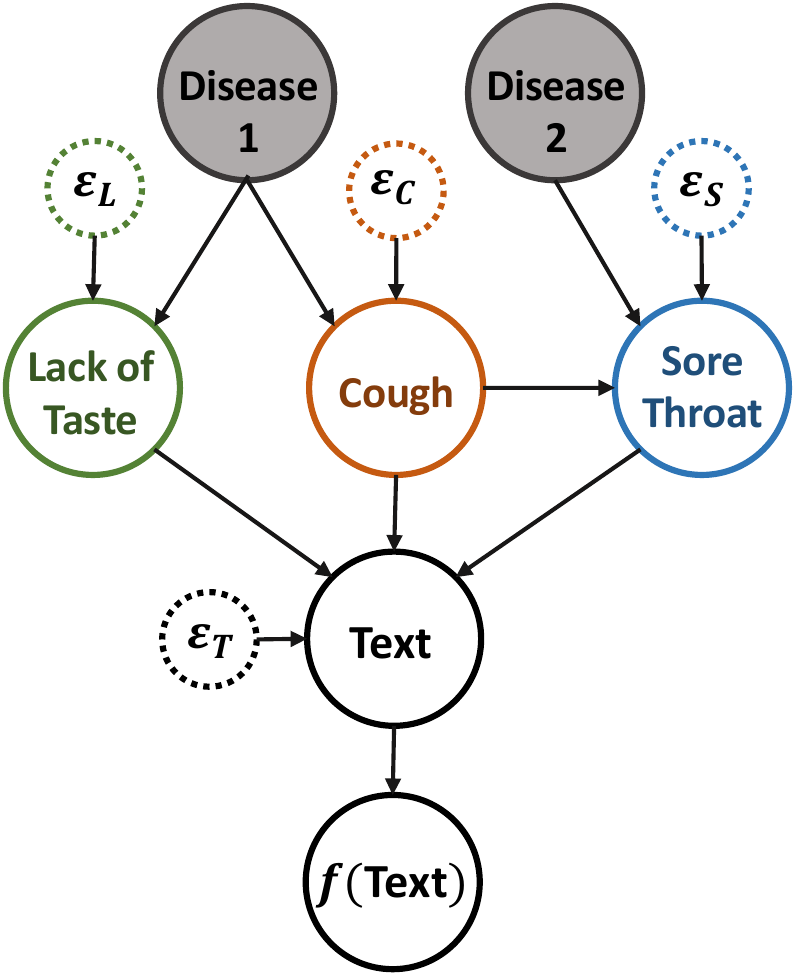}
    \end{minipage}
    \hfill
    \begin{minipage}[b]{.53\textwidth}
    \begin{tcolorbox}[colback=black!5!white,colframe=black!75!black,title=An original health query written by a patient describing its symptoms.] Over the past few days, I've noticed that my cough has become quite severe. It's persistent and seems to be more intense than a usual cough. Along with this, I've started experiencing a sore throat. It's uncomfortable and makes swallowing a bit painful. Another strange thing I've noticed is a change in my sense of taste. Foods don't taste the same as they used to; it's like the flavors are muted or just a little off. 
    \end{tcolorbox}
    \end{minipage}
    \caption{The causal graph of our motivating case study of a patient-doctor interaction in an online health consultation. 
    A text-based health query ($Text$, see example on the right) is analyzed by a doctor with the assistance of an NLP classifier that outputs a diagnosis of possible 
    diseases. In this causal graph, three symptoms affect the patient's text, with a notable interrelationship between $Cough$ and $Sore \space\space Throat$. The exogenous variables impacting the symptoms are $\varepsilon_L, \varepsilon_C, \varepsilon_S, \varepsilon_T$.}
    \label{fig:complex_graph}
\end{figure}

\vspace{-0.25cm}

Our choice of CEBaB was driven by its unique status as the only non-synthetic interventional dataset tailored for benchmarking concept-level explanation methods in NLP. 
In this subsection, we discuss and elaborate on the applicability of our theorem and methods to other causal graphs beyond CEBaB. 

First, our theorem is designed to be flexible and does not rely on a specific type of causal graph. Our proof is based on a minor modification to the graph, adding a confounder, which can be applied to graphs of any complexity. The key observation is that while non-causal methods may struggle with this change, causal methods remain faithful. Thus, our theorem holds regardless of the complexity of the causal graph as long the causal effect can be identified from the graph. Second, our methods are also applicable to any identifiable causal graph. The identification requirement is the only limitation of applying our methods. However, without further assumptions on the causal mechanism, it is also a limitation of any causal estimation method. For an example of a causal graph that is non-identifiable see Figure 4 in \citet{crash_course}. 

We next provide proof of concepts that strong LLMs (such as GPT-4) can succeed in generating CFs for graphs that differ from CEBaB, especially those involving mediators. By incorporating a mediator into the causal graph, we aim to explore situations where an intervention leads to changes in more than just the outcome text. Our case study is based on the causal graph illustrated in Figure~\ref{fig:complex_graph} and involves an online health consultation between a patient and a doctor. The patient submits a written health query, and the doctor then reviews this query, aided by an NLP classifier that estimates the likelihood of various diseases. Based on an explanation method that quantifies the effect of each symptom on the prediction of the disease, the doctor can continue her interaction with the patient and concentrate on specific symptoms for a more accurate diagnosis.

According to the causal graph in Figure~\ref{fig:complex_graph}, there are three symptoms: \textit{Lack of Taste}, \textit{Cough}, and \textit{Sore Throat}. While \textit{Disease 1} causes a Lack of Taste and Cough, \textit{Disease 2} causes a Sore Throat. However, any combination of disease and symptoms is possible due to the involvement of noise originating from the exogenous variables. In addition, an intensive cough can also cause a sore throat. Hence, when Cough is the concept we explain, Lack of Taste is a confounder, while Sore Throat is a mediator. The original query of the patient is provided on the right side of Figure~\ref{fig:complex_graph}, describing a patient suffering from all three symptoms.

In the examples below, we show how we can utilize GPT-4 to generate CFs by intervening on the cough symptom to estimate the direct causal effect (red box) or the total causal effect (green box). 

\begin{tcolorbox}[colback=red!5!white,colframe=red!75!black,title=\textbf{GPT-4 prompt for direct effect estimation:} Generate a counterfactual by making the patient's cough weaker. Keep the symptom of lack of taste and sore throat fixed.]
\underline{\textbf{Ex1:}} Over the past few days, I've noticed \textbf{a mild cough. It's not very persistent and seems less intense than a usual cough.} Along with this, I've started experiencing a sore throat. It's uncomfortable and makes swallowing a bit painful. Another strange thing I've noticed is a change in my sense of taste. Foods don't taste the same as they used to; it's like the flavors are muted or just a little off. 
\tcblower
\underline{\textbf{Ex2:}} Over the past few days, I've noticed \textbf{a mild cough. It's intermittent and not as intense as a usual cough.} Along with this, I've started experiencing a sore throat. It's uncomfortable and makes swallowing a bit painful. Another strange thing I've noticed is a change in my sense of taste. Foods don't taste the same as they used to; it's like the flavors are muted or just a little off. 
\end{tcolorbox}

\begin{tcolorbox}[colback=teal!5!white,colframe=teal!75!black,title=\textbf{GPT-4 prompt for total effect estimation:} Generate a counterfactual by making the patient's cough weaker. Keep the symptom of lack of taste fixed. Notice that a cough could impact the sore throat.]
\underline{\textbf{Ex1:}} Over the past few days, I've noticed \textbf{a mild cough. It's not very persistent and seems less intense than a usual cough. Despite the mildness of the cough, I've started experiencing a sore throat.} It's uncomfortable and makes swallowing a bit painful, though not severely so. What's more peculiar is a change in my sense of taste. Foods don't taste the same as they used to; it's like the flavors are muted or just a little off. 
\tcblower
\underline{\textbf{Ex2:}} Over the past few days, I've noticed that \textbf{my cough is present but not particularly severe. It's more like a mild, occasional cough rather than a persistent or intense one. Along with this, there's a slight soreness in my throat, but it's not too uncomfortable and doesn't cause much pain when swallowing.} Another strange thing I've noticed is a change in my sense of taste. Foods don't taste the same as they used to; it's like the flavors are muted or just a little off. 
\end{tcolorbox}

To generate approximate CFs for estimating the direct causal effect (red box), we adjust for the Lack of Taste and the Sore Throat concepts since we are only interested in the direct path between Cough and the text. We do this by prompting GPT-4 to keep these variables fixed. The red box shows GPT-4 successfully generates CFs by modifying the text describing the cough symptom while leaving the remaining text fixed. 

In the case of total effect estimation (green box), we should not adjust for the mediator concept - Sore Throat. This is because a change in Cough may or may not cause a change in Sore Throat (which also affects the text). We can achieve this by informing GPT-4 that a change in cough could also impact sore throat. As the two CFs in the green box show, GPT-4 sometimes ignores a potential change in the sore throat symptom (Ex1) and sometimes modifies the relevant text (Ex2). This demonstrates that strong LLMs like GPT-4 can handle complex situations when an intervention not only directly affects the text but also may impact other concepts. 

Notice, however, that relying on a single CF might lead to a biased causal effect estimation. Therefore, generating multiple CFs is crucial.  
This perfectly aligns with our theoretical definition of an approximated CF explanation method which considers an approximation error. Accordingly, utilizing multiple approximated CF for estimating the causal effect may lower the variance of the estimator and make it more robust. This also explains why this technique (of Top-K matching) universally improves the performance of any CF-based explanation method in our study. Notice that in the case of mediators, LLMs may not model accurately the conditional distribution of the mediator given the intervention (e.g., $P(SoreThroat|Cough)$ in our causal graph). We leave this challenge to future research, although a solution might be reweighing the CFs according to this probability. 

Finally, our work underscores a novel contribution in utilizing LLMs for generating CFs. As we demonstrate, this can facilitate the creation of new interventional datasets representing complex causal graphs and significantly advance the research in causal explanations and benchmark construction.
\section{Discussion}
\label{sec:discussion}


In this study, we explored the terrain of explaining the impact of high-level concepts on the predictions of NLP models. We focused on model-agnostic techniques, which do not require access to the explained model during the training time and hence can explain numerous models, a property that is crucial for model selection, debugging, and deployment of safer, transparent, and accountable systems. We provided a theoretical framework linking faithfulness to causality, showing that CF approximation explanation methods are always order-faithful. We further motivate the utilization of LLMs for generating CFs, either as part of the explanation or for learning an embedding space that enables effective matching. We hope our theoretical and empirical results might broadly impact the field, and suggest these four following directions:

\textbf{Explanation methods should be causal-inspired.} 
Throughout this paper, our discussions as well as the theoretical and empirical findings, collectively underscore a pivotal message: true interpretability is inextricably linked with causality. As NLP systems become part of our daily lives and influence critical decision-making processes in sectors (like law, healthcare, politics, and education), the imperative for faithful explanations has never been greater. This realization calls for a paradigm shift towards exploring and developing more causal-inspired explanation methods.

\textbf{Understanding when LLM-generated counterfactuals fail.} Our empirical results demonstrate that the best method for explaining the causal effect of high-level concepts on model prediction is by employing multiple (Top-$K$) LLM-generated CFs. However, this does not mean that the problem of interpretability has been solved. There is an impending need for the community to design more challenging benchmarks to identify the areas where the LLMs fall short, misunderstand causality, and fail to provide correct CF explanations (for example, see \citet{llm_causality}). Moreover, as we emphasize throughout the paper, utilizing an LLM at inference time is frequently impractical because of its prolonged latency, high financial costs, or privacy concerns, which leads us to the next point.

\textbf{Closing the gap of efficient explanation methods.} As the pursuit of understanding model behavior intensifies, it is paramount that we do not trade off efficiency for quality. The noticeable performance gap between the two approaches we introduced in the paper (the generative and the matching approaches) underscores a ripe avenue for research. For example, the candidate set is crucial for the success of the matching approach, and given the right examples within this set, it has the potential to surpass its generative counterpart. Therefore, an interesting line of work could explore techniques for enriching the candidate set. Bridging this gap could lead to methods that encapsulate the best of both worlds: high quality and operational efficiency.

\textbf{Constructing new CF-based benchmarks.} As we shift to causal-inspired explanation methods, how we evaluate them must evolve in tandem. Accordingly, it is imperative to benchmark explanations against the true causal effect. High-quality CFs, which can serve as a proxy for ground truth in this context, are the basis for such evaluation, as exemplified by the CEBaB benchmark \citep{cebab}. Nevertheless, the traditional approach for constructing such benchmarks relies on human experts and is both financially costly, labor-intensive and fraught with inherent difficulties due to the cognitive demands of the task. Our work demonstrates that LLMs can effectively facilitate this process. We encourage the community to craft benchmarks that resonate with real-world scenarios. These LLM-guided benchmarks can serve as catalyzers for appraising efficient methods as we work towards bridging the existing performance gap.

\textbf{Ethics statement.}
No human subjects were directly involved in our research. While explanation methods offer deeper transparency of NLP models, researchers and practitioners should remain vigilant and mindful. These methods, if misused, could over-emphasize or amplify biases present in the model or data. Our novel dataset, derived from tweets taken from SemEval16 \citep{semeval16}, could inherently contain societal biases, including, but not limited to, social prejudices or racism. We acknowledge that tweets can sometimes reflect or amplify societal sentiments, both positive and negative. Furthermore, by leveraging GPT-4 to edit these tweets (e.g., changing the gender of the writer), we recognize the potential for introducing additional biases.

\textbf{Reproducibility.}
We have made comprehensive efforts to ensure that researchers can effectively reproduce the results presented in our paper: 
\textit{(1) Theoretical Foundations:} All pertinent theoretical results, encompassing definitions, assumptions, and proofs, are comprehensively detailed in Appendix~\ref{sec:theorem}. 
\textit{(2) Prompts:} The specific prompts we employed for generating examples and counterfactuals are laid out in Appendix~\ref{app_sub:prompts}. 
\textit{(3) Code:} The code utilized in our study is accessible as an attachment to our paper, bundled within a ZIP file. The hyperparameters we used are described in~\ref{app:implementations}. 
\textit{(4) Future Code Documentation:} We recognize the importance of a structured and comprehensively documented codebase. To this end, we are preparing an organized and well-documented GitHub repository. A link to this repository will be accessible in the foreseeable future. 
\textit{(5) Data and Model Access:} We plan to upload our datasets to the HuggingFace hub, including the new dataset we curated and all the model-generated counterfactuals. In addition, the models we interpreted within the paper will also be hosted.

\bibliography{iclr2024_conference}

\begin{thebibliography}{88}
\providecommand{\natexlab}[1]{#1}
\providecommand{\url}[1]{\texttt{#1}}
\expandafter\ifx\csname urlstyle\endcsname\relax
  \providecommand{\doi}[1]{doi: #1}\else
  \providecommand{\doi}{doi: \begingroup \urlstyle{rm}\Url}\fi

\bibitem[Abraham et~al.(2022)Abraham, D'Oosterlinck, Feder, Gat, Geiger, Potts,
  Reichart, and Wu]{cebab}
Eldar~David Abraham, Karel D'Oosterlinck, Amir Feder, Yair~Ori Gat, Atticus
  Geiger, Christopher Potts, Roi Reichart, and Zhengxuan Wu.
\newblock Cebab: Estimating the causal effects of real-world concepts on {NLP}
  model behavior.
\newblock In \emph{NeurIPS}, 2022.
\newblock URL
  \url{http://papers.nips.cc/paper\_files/paper/2022/hash/701ec28790b29a5bc33832b7bdc4c3b6-Abstract-Conference.html}.

\bibitem[Adebayo et~al.(2018)Adebayo, Gilmer, Muelly, Goodfellow, Hardt, and
  Kim]{sanity_check}
Julius Adebayo, Justin Gilmer, Michael Muelly, Ian~J. Goodfellow, Moritz Hardt,
  and Been Kim.
\newblock Sanity checks for saliency maps.
\newblock In Samy Bengio, Hanna~M. Wallach, Hugo Larochelle, Kristen Grauman,
  Nicol{\`{o}} Cesa{-}Bianchi, and Roman Garnett (eds.), \emph{Advances in
  Neural Information Processing Systems 31: Annual Conference on Neural
  Information Processing Systems 2018, NeurIPS 2018, December 3-8, 2018,
  Montr{\'{e}}al, Canada}, pp.\  9525--9536, 2018.
\newblock URL
  \url{https://proceedings.neurips.cc/paper/2018/hash/294a8ed24b1ad22ec2e7efea049b8737-Abstract.html}.

\bibitem[Amodei et~al.(2016)Amodei, Olah, Steinhardt, Christiano, Schulman, and
  Mané]{amodei_concrete_2016}
Dario Amodei, Chris Olah, Jacob Steinhardt, Paul Christiano, John Schulman, and
  Dan Mané.
\newblock Concrete {Problems} in {AI} {Safety}.
\newblock \emph{arXiv:1606.06565 [cs]}, 2016.
\newblock URL \url{http://arxiv.org/abs/1606.06565}.

\bibitem[Antoniak \& Mimno(2021)Antoniak and Mimno]{antoniak-mimno-2021-bad}
Maria Antoniak and David Mimno.
\newblock Bad seeds: Evaluating lexical methods for bias measurement.
\newblock In \emph{Proceedings of the 59th Annual Meeting of the Association
  for Computational Linguistics and the 11th International Joint Conference on
  Natural Language Processing (Volume 1: Long Papers)}, pp.\  1889--1904,
  Online, August 2021. Association for Computational Linguistics.
\newblock \doi{10.18653/v1/2021.acl-long.148}.
\newblock URL \url{https://aclanthology.org/2021.acl-long.148}.

\bibitem[Balke \& Pearl(1994)Balke and Pearl]{p_cfs}
Alexander Balke and Judea Pearl.
\newblock Probabilistic evaluation of counterfactual queries.
\newblock In Barbara Hayes{-}Roth and Richard~E. Korf (eds.), \emph{Proceedings
  of the 12th National Conference on Artificial Intelligence, Seattle, WA, USA,
  July 31 - August 4, 1994, Volume 1}, pp.\  230--237. {AAAI} Press / The {MIT}
  Press, 1994.
\newblock URL \url{http://www.aaai.org/Library/AAAI/1994/aaai94-035.php}.

\bibitem[Ben{-}David et~al.(2022)Ben{-}David, Oved, and Reichart]{pada}
Eyal Ben{-}David, Nadav Oved, and Roi Reichart.
\newblock {PADA:} example-based prompt learning for on-the-fly adaptation to
  unseen domains.
\newblock \emph{Trans. Assoc. Comput. Linguistics}, 10:\penalty0 414--433,
  2022.
\newblock \doi{10.1162/tacl\_a\_00468}.
\newblock URL \url{https://doi.org/10.1162/tacl\_a\_00468}.

\bibitem[Benedetto et~al.(2018)Benedetto, Head, Angelini, and
  Blackstone]{propensity}
Umberto Benedetto, Stuart~J Head, Gianni~D Angelini, and Eugene~H Blackstone.
\newblock Statistical primer: propensity score matching and its alternatives.
\newblock \emph{European Journal of Cardio-Thoracic Surgery}, 53\penalty0
  (6):\penalty0 1112--1117, 2018.

\bibitem[Binder et~al.(2016)Binder, Montavon, Lapuschkin, M{\"u}ller, and
  Samek]{Binder16}
Alexander Binder, Gr{\'e}goire Montavon, Sebastian Lapuschkin, Klaus-Robert
  M{\"u}ller, and Wojciech Samek.
\newblock Layer-wise relevance propagation for neural networks with local
  renormalization layers.
\newblock In \emph{International Conference on Artificial Neural Networks},
  pp.\  63--71. Springer, 2016.

\bibitem[Calderon et~al.(2022)Calderon, Ben-David, Feder, and
  Reichart]{calderon2022docogen}
Nitay Calderon, Eyal Ben-David, Amir Feder, and Roi Reichart.
\newblock Docogen: Domain counterfactual generation for low resource domain
  adaptation.
\newblock In \emph{Proceedings of the 60th Annual Meeting of the Association
  for Computational Linguistics (Volume 1: Long Papers)}, pp.\  7727--7746,
  2022.

\bibitem[Calderon et~al.(2023)Calderon, Mukherjee, Reichart, and
  Kantor]{kd_nlg}
Nitay Calderon, Subhabrata Mukherjee, Roi Reichart, and Amir Kantor.
\newblock A systematic study of knowledge distillation for natural language
  generation with pseudo-target training.
\newblock In Anna Rogers, Jordan~L. Boyd{-}Graber, and Naoaki Okazaki (eds.),
  \emph{Proceedings of the 61st Annual Meeting of the Association for
  Computational Linguistics (Volume 1: Long Papers), {ACL} 2023, Toronto,
  Canada, July 9-14, 2023}, pp.\  14632--14659. Association for Computational
  Linguistics, 2023.
\newblock \doi{10.18653/v1/2023.acl-long.818}.
\newblock URL \url{https://doi.org/10.18653/v1/2023.acl-long.818}.

\bibitem[Cinelli et~al.(2022)Cinelli, Forney, and Pearl]{crash_course}
Carlos Cinelli, Andrew Forney, and Judea Pearl.
\newblock A crash course in good and bad controls.
\newblock \emph{Sociological Methods \& Research}, pp.\  00491241221099552,
  2022.

\bibitem[Clark et~al.(2019)Clark, Khandelwal, Levy, and
  Manning]{clark-etal-2019-bert}
Kevin Clark, Urvashi Khandelwal, Omer Levy, and Christopher~D. Manning.
\newblock What does {BERT} look at? {An} analysis of {BERT}{'}s attention.
\newblock In \emph{Proceedings of the 2019 ACL Workshop BlackboxNLP: Analyzing
  and Interpreting Neural Networks for NLP}, pp.\  276--286, Florence, Italy,
  August 2019. Association for Computational Linguistics.
\newblock URL \url{https://www.aclweb.org/anthology/W19-4828}.

\bibitem[Conneau et~al.(2018)Conneau, Kruszewski, Lample, Barrault, and
  Baroni]{conneau-etal-2018-cram}
Alexis Conneau, German Kruszewski, Guillaume Lample, Lo{\"\i}c Barrault, and
  Marco Baroni.
\newblock What you can cram into a single {\$}{\&}!{\#}* vector: Probing
  sentence embeddings for linguistic properties.
\newblock In \emph{Proceedings of the 56th Annual Meeting of the Association
  for Computational Linguistics (Volume 1: Long Papers)}, pp.\  2126--2136,
  Melbourne, Australia, July 2018. Association for Computational Linguistics.
\newblock \doi{10.18653/v1/P18-1198}.
\newblock URL \url{https://www.aclweb.org/anthology/P18-1198}.

\bibitem[Dasgupta et~al.(2022)Dasgupta, Frost, and Moshkovitz]{faithfulness1}
Sanjoy Dasgupta, Nave Frost, and Michal Moshkovitz.
\newblock Framework for evaluating faithfulness of local explanations.
\newblock In Kamalika Chaudhuri, Stefanie Jegelka, Le~Song, Csaba
  Szepesv{\'{a}}ri, Gang Niu, and Sivan Sabato (eds.), \emph{International
  Conference on Machine Learning, {ICML} 2022, 17-23 July 2022, Baltimore,
  Maryland, {USA}}, volume 162 of \emph{Proceedings of Machine Learning
  Research}, pp.\  4794--4815. {PMLR}, 2022.
\newblock URL \url{https://proceedings.mlr.press/v162/dasgupta22a.html}.

\bibitem[Deng et~al.(2022)Deng, Pan, and Clavel]{sd_ood1}
Ruofan Deng, Li~Pan, and Chlo{\'{e}} Clavel.
\newblock Domain adaptation for stance detection towards unseen target on
  social media.
\newblock In \emph{10th International Conference on Affective Computing and
  Intelligent Interaction, {ACII} 2022, Nara, Japan, October 18-21, 2022}, pp.\
   1--8. {IEEE}, 2022.
\newblock \doi{10.1109/ACII55700.2022.9953818}.
\newblock URL \url{https://doi.org/10.1109/ACII55700.2022.9953818}.

\bibitem[Devlin et~al.(2019)Devlin, Chang, Lee, and Toutanova]{bert}
Jacob Devlin, Ming{-}Wei Chang, Kenton Lee, and Kristina Toutanova.
\newblock {BERT:} pre-training of deep bidirectional transformers for language
  understanding.
\newblock In Jill Burstein, Christy Doran, and Thamar Solorio (eds.),
  \emph{Proceedings of the 2019 Conference of the North American Chapter of the
  Association for Computational Linguistics: Human Language Technologies,
  {NAACL-HLT} 2019, Minneapolis, MN, USA, June 2-7, 2019, Volume 1 (Long and
  Short Papers)}, pp.\  4171--4186. Association for Computational Linguistics,
  2019.
\newblock \doi{10.18653/v1/n19-1423}.
\newblock URL \url{https://doi.org/10.18653/v1/n19-1423}.

\bibitem[Elazar et~al.(2021)Elazar, Ravfogel, Jacovi, and
  Goldberg]{elazar2021amnesic}
Yanai Elazar, Shauli Ravfogel, Alon Jacovi, and Yoav Goldberg.
\newblock Amnesic probing: Behavioral explanation with amnesic counterfactuals.
\newblock \emph{Transactions of the Association for Computational Linguistics},
  9:\penalty0 160--175, 2021.

\bibitem[Feder et~al.(2021{\natexlab{a}})Feder, Keith, Manzoor, Pryzant,
  Sridhar, Wood-Doughty, Eisenstein, Grimmer, Reichart, Roberts,
  et~al.]{feder_causal_2021}
Amir Feder, Katherine~A Keith, Emaad Manzoor, Reid Pryzant, Dhanya Sridhar,
  Zach Wood-Doughty, Jacob Eisenstein, Justin Grimmer, Roi Reichart, Margaret~E
  Roberts, et~al.
\newblock Causal inference in natural language processing: Estimation,
  prediction, interpretation and beyond.
\newblock \emph{arXiv preprint arXiv:2109.00725}, 2021{\natexlab{a}}.

\bibitem[Feder et~al.(2021{\natexlab{b}})Feder, Oved, Shalit, and
  Reichart]{feder2021causalm}
Amir Feder, Nadav Oved, Uri Shalit, and Roi Reichart.
\newblock Causalm: Causal model explanation through counterfactual language
  models.
\newblock \emph{Computational Linguistics}, 47\penalty0 (2):\penalty0 333--386,
  2021{\natexlab{b}}.

\bibitem[Feder et~al.(2021{\natexlab{c}})Feder, Oved, Shalit, and
  Reichart]{feder_causalm_2020}
Amir Feder, Nadav Oved, Uri Shalit, and Roi Reichart.
\newblock {C}ausa{LM}: Causal model explanation through counterfactual language
  models.
\newblock \emph{Computational Linguistics}, 47\penalty0 (2):\penalty0 333--386,
  June 2021{\natexlab{c}}.
\newblock \doi{10.1162/coli_a_00404}.
\newblock URL \url{https://aclanthology.org/2021.cl-2.13}.

\bibitem[Finlayson et~al.(2021)Finlayson, Mueller, Gehrmann, Shieber, Linzen,
  and Belinkov]{finlayson-etal-2021-causal}
Matthew Finlayson, Aaron Mueller, Sebastian Gehrmann, Stuart Shieber, Tal
  Linzen, and Yonatan Belinkov.
\newblock Causal analysis of syntactic agreement mechanisms in neural language
  models.
\newblock In \emph{Proceedings of the 59th Annual Meeting of the Association
  for Computational Linguistics and the 11th International Joint Conference on
  Natural Language Processing (Volume 1: Long Papers)}, pp.\  1828--1843,
  Online, August 2021. Association for Computational Linguistics.
\newblock \doi{10.18653/v1/2021.acl-long.144}.
\newblock URL \url{https://aclanthology.org/2021.acl-long.144}.

\bibitem[Gardner et~al.(2020)Gardner, Artzi, Basmov, Berant, Bogin, Chen,
  Dasigi, Dua, Elazar, Gottumukkala, Gupta, Hajishirzi, Ilharco, Khashabi, Lin,
  Liu, Liu, Mulcaire, Ning, Singh, Smith, Subramanian, Tsarfaty, Wallace,
  Zhang, and Zhou]{gardner-etal-2020-evaluating}
Matt Gardner, Yoav Artzi, Victoria Basmov, Jonathan Berant, Ben Bogin, Sihao
  Chen, Pradeep Dasigi, Dheeru Dua, Yanai Elazar, Ananth Gottumukkala, Nitish
  Gupta, Hannaneh Hajishirzi, Gabriel Ilharco, Daniel Khashabi, Kevin Lin,
  Jiangming Liu, Nelson~F. Liu, Phoebe Mulcaire, Qiang Ning, Sameer Singh,
  Noah~A. Smith, Sanjay Subramanian, Reut Tsarfaty, Eric Wallace, Ally Zhang,
  and Ben Zhou.
\newblock Evaluating models{'} local decision boundaries via contrast sets.
\newblock In \emph{Findings of the Association for Computational Linguistics:
  EMNLP 2020}, pp.\  1307--1323, Online, November 2020. Association for
  Computational Linguistics.
\newblock \doi{10.18653/v1/2020.findings-emnlp.117}.
\newblock URL \url{https://aclanthology.org/2020.findings-emnlp.117}.

\bibitem[Garg et~al.(2019)Garg, Perot, Limtiaco, Taly, Chi, and
  Beutel]{garg2019counterfactual}
Sahaj Garg, Vincent Perot, Nicole Limtiaco, Ankur Taly, Ed~H Chi, and Alex
  Beutel.
\newblock Counterfactual fairness in text classification through robustness.
\newblock In \emph{Proceedings of the 2019 AAAI/ACM Conference on AI, Ethics,
  and Society}, pp.\  219--226, 2019.

\bibitem[Gat et~al.(2022)Gat, Calderon, Reichart, and Hazan]{fi_explain}
Itai Gat, Nitay Calderon, Roi Reichart, and Tamir Hazan.
\newblock A functional information perspective on model interpretation.
\newblock In Kamalika Chaudhuri, Stefanie Jegelka, Le~Song, Csaba
  Szepesv{\'{a}}ri, Gang Niu, and Sivan Sabato (eds.), \emph{International
  Conference on Machine Learning, {ICML} 2022, 17-23 July 2022, Baltimore,
  Maryland, {USA}}, volume 162 of \emph{Proceedings of Machine Learning
  Research}, pp.\  7266--7278. {PMLR}, 2022.
\newblock URL \url{https://proceedings.mlr.press/v162/gat22a.html}.

\bibitem[Geiger et~al.(2020)Geiger, Richardson, and
  Potts]{geiger-etal-2020-neural}
Atticus Geiger, Kyle Richardson, and Christopher Potts.
\newblock Neural natural language inference models partially embed theories of
  lexical entailment and negation.
\newblock In \emph{Proceedings of the Third BlackboxNLP Workshop on Analyzing
  and Interpreting Neural Networks for NLP}, pp.\  163--173, Online, November
  2020. Association for Computational Linguistics.
\newblock \doi{10.18653/v1/2020.blackboxnlp-1.16}.
\newblock URL \url{https://aclanthology.org/2020.blackboxnlp-1.16}.

\bibitem[Geiger et~al.(2021)Geiger, Lu, Icard, and Potts]{geiger2021causal}
Atticus Geiger, Hanson Lu, Thomas Icard, and Christopher Potts.
\newblock Causal abstractions of neural networks.
\newblock \emph{Advances in Neural Information Processing Systems}, 34, 2021.

\bibitem[Ghorbani et~al.(2019)Ghorbani, Abid, and Zou]{fragile}
Amirata Ghorbani, Abubakar Abid, and James~Y. Zou.
\newblock Interpretation of neural networks is fragile.
\newblock In \emph{The Thirty-Third {AAAI} Conference on Artificial
  Intelligence, {AAAI} 2019, The Thirty-First Innovative Applications of
  Artificial Intelligence Conference, {IAAI} 2019, The Ninth {AAAI} Symposium
  on Educational Advances in Artificial Intelligence, {EAAI} 2019, Honolulu,
  Hawaii, USA, January 27 - February 1, 2019}, pp.\  3681--3688. {AAAI} Press,
  2019.
\newblock \doi{10.1609/aaai.v33i01.33013681}.
\newblock URL \url{https://doi.org/10.1609/aaai.v33i01.33013681}.

\bibitem[Goodman \& Flaxman(2017)Goodman and Flaxman]{right_exp}
Bryce Goodman and Seth~R. Flaxman.
\newblock European union regulations on algorithmic decision-making and a
  "right to explanation".
\newblock \emph{{AI} Mag.}, 38\penalty0 (3):\penalty0 50--57, 2017.
\newblock \doi{10.1609/aimag.v38i3.2741}.
\newblock URL \url{https://doi.org/10.1609/aimag.v38i3.2741}.

\bibitem[Goyal et~al.(2019)Goyal, Feder, Shalit, and Kim]{cace}
Yash Goyal, Amir Feder, Uri Shalit, and Been Kim.
\newblock Explaining classifiers with causal concept effect (cace).
\newblock \emph{CoRR}, abs/1907.07165, 2019.
\newblock URL \url{http://arxiv.org/abs/1907.07165}.

\bibitem[Guidotti et~al.(2019)Guidotti, Monreale, Ruggieri, Turini, Giannotti,
  and Pedreschi]{exp_survey}
Riccardo Guidotti, Anna Monreale, Salvatore Ruggieri, Franco Turini, Fosca
  Giannotti, and Dino Pedreschi.
\newblock A survey of methods for explaining black box models.
\newblock \emph{{ACM} Comput. Surv.}, 51\penalty0 (5):\penalty0 93:1--93:42,
  2019.
\newblock \doi{10.1145/3236009}.
\newblock URL \url{https://doi.org/10.1145/3236009}.

\bibitem[Hardalov et~al.(2021)Hardalov, Arora, Nakov, and Augenstein]{sd_ood2}
Momchil Hardalov, Arnav Arora, Preslav Nakov, and Isabelle Augenstein.
\newblock Cross-domain label-adaptive stance detection.
\newblock In Marie{-}Francine Moens, Xuanjing Huang, Lucia Specia, and
  Scott~Wen{-}tau Yih (eds.), \emph{Proceedings of the 2021 Conference on
  Empirical Methods in Natural Language Processing, {EMNLP} 2021, Virtual Event
  / Punta Cana, Dominican Republic, 7-11 November, 2021}, pp.\  9011--9028.
  Association for Computational Linguistics, 2021.
\newblock \doi{10.18653/v1/2021.emnlp-main.710}.
\newblock URL \url{https://doi.org/10.18653/v1/2021.emnlp-main.710}.

\bibitem[Hardalov et~al.(2022)Hardalov, Arora, Nakov, and
  Augenstein]{stance_detection_miss}
Momchil Hardalov, Arnav Arora, Preslav Nakov, and Isabelle Augenstein.
\newblock A survey on stance detection for mis- and disinformation
  identification.
\newblock In Marine Carpuat, Marie{-}Catherine de~Marneffe, and Iv{\'{a}}n
  Vladimir~Meza Ru{\'{\i}}z (eds.), \emph{Findings of the Association for
  Computational Linguistics: {NAACL} 2022, Seattle, WA, United States, July
  10-15, 2022}, pp.\  1259--1277. Association for Computational Linguistics,
  2022.
\newblock \doi{10.18653/v1/2022.findings-naacl.94}.
\newblock URL \url{https://doi.org/10.18653/v1/2022.findings-naacl.94}.

\bibitem[Jacovi \& Goldberg(2020)Jacovi and Goldberg]{jacovi2020towards}
Alon Jacovi and Yoav Goldberg.
\newblock Towards faithfully interpretable {NLP} systems: How should we define
  and evaluate faithfulness?
\newblock In \emph{Proceedings of the 58th Annual Meeting of the Association
  for Computational Linguistics}, pp.\  4198--4205, Online, July 2020.
  Association for Computational Linguistics.
\newblock \doi{10.18653/v1/2020.acl-main.386}.
\newblock URL \url{https://aclanthology.org/2020.acl-main.386}.

\bibitem[Jain \& Wallace(2019)Jain and Wallace]{jain2019attention}
Sarthak Jain and Byron~C Wallace.
\newblock Attention is not explanation.
\newblock \emph{arXiv preprint arXiv:1902.10186}, 2019.

\bibitem[Jha et~al.(2020)Jha, Lovering, and Pavlick]{jha2020does}
Rohan Jha, Charles Lovering, and Ellie Pavlick.
\newblock Does data augmentation improve generalization in nlp?
\newblock \emph{arXiv preprint arXiv:2004.15012}, 2020.

\bibitem[Kaushik et~al.(2019)Kaushik, Hovy, and Lipton]{kaushik2019learning}
Divyansh Kaushik, Eduard Hovy, and Zachary~C Lipton.
\newblock Learning the difference that makes a difference with
  counterfactually-augmented data.
\newblock \emph{arXiv preprint arXiv:1909.12434}, 2019.

\bibitem[Kaushik et~al.(2020{\natexlab{a}})Kaushik, Hovy, and
  Lipton]{kaushik_learning_2020}
Divyansh Kaushik, Eduard Hovy, and Zachary~C. Lipton.
\newblock Learning the {Difference} that {Makes} a {Difference} with
  {Counterfactually}-{Augmented} {Data}.
\newblock \emph{arXiv:1909.12434 [cs, stat]}, February 2020{\natexlab{a}}.
\newblock URL \url{http://arxiv.org/abs/1909.12434}.
\newblock arXiv: 1909.12434.

\bibitem[Kaushik et~al.(2020{\natexlab{b}})Kaushik, Setlur, Hovy, and
  Lipton]{kaushik2020explaining}
Divyansh Kaushik, Amrith Setlur, Eduard Hovy, and Zachary~C Lipton.
\newblock Explaining the efficacy of counterfactually-augmented data.
\newblock \emph{arXiv preprint arXiv:2010.02114}, 2020{\natexlab{b}}.

\bibitem[Kiciman et~al.(2023)Kiciman, Ness, Sharma, and Tan]{llm_causality}
Emre Kiciman, Robert Ness, Amit Sharma, and Chenhao Tan.
\newblock Causal reasoning and large language models: Opening a new frontier
  for causality.
\newblock \emph{CoRR}, abs/2305.00050, 2023.
\newblock \doi{10.48550/arXiv.2305.00050}.
\newblock URL \url{https://doi.org/10.48550/arXiv.2305.00050}.

\bibitem[Kindermans et~al.(2018)Kindermans, Sch{\"{u}}tt, Alber, M{\"{u}}ller,
  Erhan, Kim, and D{\"{a}}hne]{learning_explain}
Pieter{-}Jan Kindermans, Kristof~T. Sch{\"{u}}tt, Maximilian Alber,
  Klaus{-}Robert M{\"{u}}ller, Dumitru Erhan, Been Kim, and Sven D{\"{a}}hne.
\newblock Learning how to explain neural networks: Patternnet and
  patternattribution.
\newblock In \emph{6th International Conference on Learning Representations,
  {ICLR} 2018, Vancouver, BC, Canada, April 30 - May 3, 2018, Conference Track
  Proceedings}. OpenReview.net, 2018.
\newblock URL \url{https://openreview.net/forum?id=Hkn7CBaTW}.

\bibitem[K{\"{u}}{\c{c}}{\"{u}}k \& Can(2021)K{\"{u}}{\c{c}}{\"{u}}k and
  Can]{stance_detection_survey}
Dilek K{\"{u}}{\c{c}}{\"{u}}k and Fazli Can.
\newblock Stance detection: {A} survey.
\newblock \emph{{ACM} Comput. Surv.}, 53\penalty0 (1):\penalty0 12:1--12:37,
  2021.
\newblock \doi{10.1145/3369026}.
\newblock URL \url{https://doi.org/10.1145/3369026}.

\bibitem[K{\"u}nzel et~al.(2019)K{\"u}nzel, Sekhon, Bickel, and
  Yu]{kunzel2019metalearners}
S{\"o}ren~R K{\"u}nzel, Jasjeet~S Sekhon, Peter~J Bickel, and Bin Yu.
\newblock Metalearners for estimating heterogeneous treatment effects using
  machine learning.
\newblock \emph{Proceedings of the national academy of sciences}, 116\penalty0
  (10):\penalty0 4156--4165, 2019.

\bibitem[Li et~al.(2020)Li, Liu, Li, Li, Huang, and Shi]{nmt_eval}
Jierui Li, Lemao Liu, Huayang Li, Guanlin Li, Guoping Huang, and Shuming Shi.
\newblock Evaluating explanation methods for neural machine translation.
\newblock In Dan Jurafsky, Joyce Chai, Natalie Schluter, and Joel~R. Tetreault
  (eds.), \emph{Proceedings of the 58th Annual Meeting of the Association for
  Computational Linguistics, {ACL} 2020, Online, July 5-10, 2020}, pp.\
  365--375. Association for Computational Linguistics, 2020.
\newblock \doi{10.18653/v1/2020.acl-main.35}.
\newblock URL \url{https://doi.org/10.18653/v1/2020.acl-main.35}.

\bibitem[Liu et~al.(2022)Liu, Li, Guo, Kong, Li, and Wang]{rethinking_faith}
Yibing Liu, Haoliang Li, Yangyang Guo, Chenqi Kong, Jing Li, and Shiqi Wang.
\newblock Rethinking attention-model explainability through faithfulness
  violation test.
\newblock In Kamalika Chaudhuri, Stefanie Jegelka, Le~Song, Csaba
  Szepesv{\'{a}}ri, Gang Niu, and Sivan Sabato (eds.), \emph{International
  Conference on Machine Learning, {ICML} 2022, 17-23 July 2022, Baltimore,
  Maryland, {USA}}, volume 162 of \emph{Proceedings of Machine Learning
  Research}, pp.\  13807--13824. {PMLR}, 2022.
\newblock URL \url{https://proceedings.mlr.press/v162/liu22i.html}.

\bibitem[Liu et~al.(2019)Liu, Ott, Goyal, Du, Joshi, Chen, Levy, Lewis,
  Zettlemoyer, and Stoyanov]{roberta}
Yinhan Liu, Myle Ott, Naman Goyal, Jingfei Du, Mandar Joshi, Danqi Chen, Omer
  Levy, Mike Lewis, Luke Zettlemoyer, and Veselin Stoyanov.
\newblock Roberta: {A} robustly optimized {BERT} pretraining approach.
\newblock \emph{CoRR}, abs/1907.11692, 2019.
\newblock URL \url{http://arxiv.org/abs/1907.11692}.

\bibitem[Lundberg \& Lee(2017)Lundberg and Lee]{lundberg_unified_2017}
Scott~M. Lundberg and Su-In Lee.
\newblock A unified approach to interpreting model predictions.
\newblock In \emph{Proceedings of the 31st {International} {Conference} on
  {Neural} {Information} {Processing} {Systems}}, {NIPS}'17, pp.\  4768--4777,
  Red Hook, NY, USA, December 2017. Curran Associates Inc.
\newblock ISBN 978-1-5108-6096-4.

\bibitem[Lyu et~al.(2022)Lyu, Apidianaki, and
  Callison{-}Burch]{faithful_survey}
Qing Lyu, Marianna Apidianaki, and Chris Callison{-}Burch.
\newblock Towards faithful model explanation in {NLP:} {A} survey.
\newblock \emph{CoRR}, abs/2209.11326, 2022.
\newblock \doi{10.48550/arXiv.2209.11326}.
\newblock URL \url{https://doi.org/10.48550/arXiv.2209.11326}.

\bibitem[Manning et~al.(2020)Manning, Clark, Hewitt, Khandelwal, and
  Levy]{Manning-etal:2020}
Christopher~D. Manning, Kevin Clark, John Hewitt, Urvashi Khandelwal, and Omer
  Levy.
\newblock Emergent linguistic structure in artificial neural networks trained
  by self-supervision.
\newblock \emph{Proceedings of the National Academy of Sciences}, 117\penalty0
  (48):\penalty0 30046--30054, 2020.
\newblock ISSN 0027-8424.
\newblock \doi{10.1073/pnas.1907367117}.
\newblock URL \url{https://www.pnas.org/content/117/48/30046}.

\bibitem[Mao et~al.(2021)Mao, Cha, Gupta, Wang, Yang, and
  Vondrick]{mao2021generative}
Chengzhi Mao, Augustine Cha, Amogh Gupta, Hao Wang, Junfeng Yang, and Carl
  Vondrick.
\newblock Generative interventions for causal learning.
\newblock In \emph{Proceedings of the IEEE/CVF Conference on Computer Vision
  and Pattern Recognition}, pp.\  3947--3956, 2021.

\bibitem[Mohammad et~al.(2016)Mohammad, Kiritchenko, Sobhani, Zhu, and
  Cherry]{semeval16}
Saif~M. Mohammad, Svetlana Kiritchenko, Parinaz Sobhani, Xiao{-}Dan Zhu, and
  Colin Cherry.
\newblock Semeval-2016 task 6: Detecting stance in tweets.
\newblock In Steven Bethard, Daniel~M. Cer, Marine Carpuat, David Jurgens,
  Preslav Nakov, and Torsten Zesch (eds.), \emph{Proceedings of the 10th
  International Workshop on Semantic Evaluation, SemEval@NAACL-HLT 2016, San
  Diego, CA, USA, June 16-17, 2016}, pp.\  31--41. The Association for Computer
  Linguistics, 2016.
\newblock \doi{10.18653/v1/s16-1003}.
\newblock URL \url{https://doi.org/10.18653/v1/s16-1003}.

\bibitem[Molnar(2020)]{molnar2020interpretable}
Christoph Molnar.
\newblock \emph{Interpretable machine learning}.
\newblock Lulu. com, 2020.

\bibitem[Ouyang et~al.(2022)Ouyang, Wu, Jiang, Almeida, Wainwright, Mishkin,
  Zhang, Agarwal, Slama, Ray, Schulman, Hilton, Kelton, Miller, Simens, Askell,
  Welinder, Christiano, Leike, and Lowe]{chatgpt}
Long Ouyang, Jeffrey Wu, Xu~Jiang, Diogo Almeida, Carroll~L. Wainwright, Pamela
  Mishkin, Chong Zhang, Sandhini Agarwal, Katarina Slama, Alex Ray, John
  Schulman, Jacob Hilton, Fraser Kelton, Luke Miller, Maddie Simens, Amanda
  Askell, Peter Welinder, Paul~F. Christiano, Jan Leike, and Ryan Lowe.
\newblock Training language models to follow instructions with human feedback.
\newblock In \emph{NeurIPS}, 2022.
\newblock URL
  \url{http://papers.nips.cc/paper\_files/paper/2022/hash/b1efde53be364a73914f58805a001731-Abstract-Conference.html}.

\bibitem[Pearl(2009)]{causality}
Judea Pearl.
\newblock \emph{Causality}.
\newblock Cambridge university press, 2009.

\bibitem[Pruthi et~al.(2022)Pruthi, Bansal, Dhingra, Soares, Collins, Lipton,
  Neubig, and Cohen]{teacher_eval}
Danish Pruthi, Rachit Bansal, Bhuwan Dhingra, Livio~Baldini Soares, Michael
  Collins, Zachary~C. Lipton, Graham Neubig, and William~W. Cohen.
\newblock Evaluating explanations: How much do explanations from the teacher
  aid students?
\newblock \emph{Trans. Assoc. Comput. Linguistics}, 10:\penalty0 359--375,
  2022.
\newblock \doi{10.1162/tacl\_a\_00465}.
\newblock URL \url{https://doi.org/10.1162/tacl\_a\_00465}.

\bibitem[Raffel et~al.(2019)Raffel, Shazeer, Roberts, Lee, Narang, Matena,
  Zhou, Li, and Liu]{t5}
Colin Raffel, Noam Shazeer, Adam Roberts, Katherine Lee, Sharan Narang, Michael
  Matena, Yanqi Zhou, Wei Li, and Peter~J. Liu.
\newblock Exploring the limits of transfer learning with a unified text-to-text
  transformer.
\newblock \emph{CoRR}, abs/1910.10683, 2019.
\newblock URL \url{http://arxiv.org/abs/1910.10683}.

\bibitem[Reimers \& Gurevych(2019)Reimers and Gurevych]{starnsformer}
Nils Reimers and Iryna Gurevych.
\newblock Sentence-bert: Sentence embeddings using siamese bert-networks.
\newblock In Kentaro Inui, Jing Jiang, Vincent Ng, and Xiaojun Wan (eds.),
  \emph{Proceedings of the 2019 Conference on Empirical Methods in Natural
  Language Processing and the 9th International Joint Conference on Natural
  Language Processing, {EMNLP-IJCNLP} 2019, Hong Kong, China, November 3-7,
  2019}, pp.\  3980--3990. Association for Computational Linguistics, 2019.
\newblock \doi{10.18653/v1/D19-1410}.
\newblock URL \url{https://doi.org/10.18653/v1/D19-1410}.

\bibitem[Ribeiro et~al.(2016)Ribeiro, Singh, and Guestrin]{ribeiro_why_2016}
Marco~Tulio Ribeiro, Sameer Singh, and Carlos Guestrin.
\newblock "{Why} {Should} {I} {Trust} {You}?": {Explaining} the {Predictions}
  of {Any} {Classifier}.
\newblock In \emph{Proceedings of the 22nd {ACM} {SIGKDD} {International}
  {Conference} on {Knowledge} {Discovery} and {Data} {Mining}}, pp.\
  1135--1144, San Francisco California USA, August 2016. ACM.
\newblock ISBN 978-1-4503-4232-2.
\newblock \doi{10.1145/2939672.2939778}.
\newblock URL \url{https://dl.acm.org/doi/10.1145/2939672.2939778}.

\bibitem[Ribeiro et~al.(2020)Ribeiro, Wu, Guestrin, and
  Singh]{ribeiro-etal-2020-beyond}
Marco~Tulio Ribeiro, Tongshuang Wu, Carlos Guestrin, and Sameer Singh.
\newblock Beyond accuracy: Behavioral testing of {NLP} models with
  {C}heck{L}ist.
\newblock In \emph{Proceedings of the 58th Annual Meeting of the Association
  for Computational Linguistics}, pp.\  4902--4912, Online, July 2020.
  Association for Computational Linguistics.
\newblock \doi{10.18653/v1/2020.acl-main.442}.
\newblock URL \url{https://www.aclweb.org/anthology/2020.acl-main.442}.

\bibitem[Riley et~al.(2020)Riley, Constant, Guo, Kumar, Uthus, and
  Parekh]{riley2020textsettr}
Parker Riley, Noah Constant, Mandy Guo, Girish Kumar, David Uthus, and Zarana
  Parekh.
\newblock Textsettr: Label-free text style extraction and tunable targeted
  restyling.
\newblock \emph{arXiv preprint arXiv:2010.03802}, 2020.

\bibitem[Rosenberg et~al.(2021)Rosenberg, Gat, Feder, and
  Reichart]{rosenberg2021vqa}
Daniel Rosenberg, Itai Gat, Amir Feder, and Roi Reichart.
\newblock Are vqa systems rad? measuring robustness to augmented data with
  focused interventions.
\newblock In \emph{Proceedings of the 59th Annual Meeting of the Association
  for Computational Linguistics and the 11th International Joint Conference on
  Natural Language Processing (Volume 2: Short Papers)}, pp.\  61--70, 2021.

\bibitem[Ross et~al.(2021)Ross, Wu, Peng, Peters, and Gardner]{ross2021tailor}
Alexis Ross, Tongshuang Wu, Hao Peng, Matthew~E Peters, and Matt Gardner.
\newblock Tailor: Generating and perturbing text with semantic controls.
\newblock \emph{arXiv preprint arXiv:2107.07150}, 2021.

\bibitem[Rudin(2019)]{faithfulness3}
Cynthia Rudin.
\newblock Stop explaining black box machine learning models for high stakes
  decisions and use interpretable models instead.
\newblock \emph{Nat. Mach. Intell.}, 1\penalty0 (5):\penalty0 206--215, 2019.
\newblock \doi{10.1038/s42256-019-0048-x}.
\newblock URL \url{https://doi.org/10.1038/s42256-019-0048-x}.

\bibitem[Sanh et~al.(2019)Sanh, Debut, Chaumond, and Wolf]{distilbert}
Victor Sanh, Lysandre Debut, Julien Chaumond, and Thomas Wolf.
\newblock Distilbert, a distilled version of {BERT:} smaller, faster, cheaper
  and lighter.
\newblock \emph{CoRR}, abs/1910.01108, 2019.
\newblock URL \url{http://arxiv.org/abs/1910.01108}.

\bibitem[Saphra \& Lopez(2019)Saphra and
  Lopez]{saphra-lopez-2019-understanding}
Naomi Saphra and Adam Lopez.
\newblock Understanding learning dynamics of language models with {SVCCA}.
\newblock In \emph{Proceedings of the 2019 Conference of the North {A}merican
  Chapter of the Association for Computational Linguistics: Human Language
  Technologies, Volume 1 (Long and Short Papers)}, pp.\  3257--3267,
  Minneapolis, Minnesota, June 2019. Association for Computational Linguistics.
\newblock \doi{10.18653/v1/N19-1329}.
\newblock URL \url{https://www.aclweb.org/anthology/N19-1329}.

\bibitem[Shalit et~al.(2017)Shalit, Johansson, and Sontag]{ite}
Uri Shalit, Fredrik~D. Johansson, and David~A. Sontag.
\newblock Estimating individual treatment effect: generalization bounds and
  algorithms.
\newblock In Doina Precup and Yee~Whye Teh (eds.), \emph{Proceedings of the
  34th International Conference on Machine Learning, {ICML} 2017, Sydney, NSW,
  Australia, 6-11 August 2017}, volume~70 of \emph{Proceedings of Machine
  Learning Research}, pp.\  3076--3085. {PMLR}, 2017.
\newblock URL \url{http://proceedings.mlr.press/v70/shalit17a.html}.

\bibitem[Shekhar et~al.(2017)Shekhar, Pezzelle, Klimovich, Herbelot, Nabi,
  Sangineto, and Bernardi]{shekhar-etal-2017-foil}
Ravi Shekhar, Sandro Pezzelle, Yauhen Klimovich, Aur{\'e}lie Herbelot, Moin
  Nabi, Enver Sangineto, and Raffaella Bernardi.
\newblock {FOIL} it! find one mismatch between image and language caption.
\newblock In \emph{Proceedings of the 55th Annual Meeting of the Association
  for Computational Linguistics (Volume 1: Long Papers)}, pp.\  255--265,
  Vancouver, Canada, July 2017. Association for Computational Linguistics.
\newblock \doi{10.18653/v1/P17-1024}.
\newblock URL \url{https://aclanthology.org/P17-1024}.

\bibitem[Shpitser \& Pearl(2006)Shpitser and Pearl]{shpitser2006identification}
Ilya Shpitser and Judea Pearl.
\newblock Identification of conditional interventional distributions.
\newblock In \emph{22nd Conference on Uncertainty in Artificial Intelligence,
  UAI 2006}, pp.\  437--444, 2006.

\bibitem[Shpitser et~al.(2010)Shpitser, VanderWeele, and Robins]{adjustment}
Ilya Shpitser, Tyler~J. VanderWeele, and James~M. Robins.
\newblock On the validity of covariate adjustment for estimating causal
  effects.
\newblock In Peter Gr{\"{u}}nwald and Peter Spirtes (eds.), \emph{{UAI} 2010,
  Proceedings of the Twenty-Sixth Conference on Uncertainty in Artificial
  Intelligence, Catalina Island, CA, USA, July 8-11, 2010}, pp.\  527--536.
  {AUAI} Press, 2010.
\newblock URL
  \url{https://dslpitt.org/uai/displayArticleDetails.jsp?mmnu=1\&smnu=2\&article\_id=2078\&proceeding\_id=26}.

\bibitem[Shrikumar et~al.(2017)Shrikumar, Greenside, and Kundaje]{Shrikumar16}
Avanti Shrikumar, Peyton Greenside, and Anshul Kundaje.
\newblock Learning important features through propagating activation
  differences.
\newblock In \emph{International conference on machine learning}, pp.\
  3145--3153. PMLR, 2017.

\bibitem[Sia et~al.(2023)Sia, Belyy, Almahairi, Khabsa, Zettlemoyer, and
  Mathias]{logical_cf}
Suzanna Sia, Anton Belyy, Amjad Almahairi, Madian Khabsa, Luke Zettlemoyer, and
  Lambert Mathias.
\newblock Logical satisfiability of counterfactuals for faithful explanations
  in {NLI}.
\newblock In Brian Williams, Yiling Chen, and Jennifer Neville (eds.),
  \emph{Thirty-Seventh {AAAI} Conference on Artificial Intelligence, {AAAI}
  2023, Thirty-Fifth Conference on Innovative Applications of Artificial
  Intelligence, {IAAI} 2023, Thirteenth Symposium on Educational Advances in
  Artificial Intelligence, {EAAI} 2023, Washington, DC, USA, February 7-14,
  2023}, pp.\  9837--9845. {AAAI} Press, 2023.
\newblock \doi{10.1609/aaai.v37i8.26174}.
\newblock URL \url{https://doi.org/10.1609/aaai.v37i8.26174}.

\bibitem[Song et~al.(2020)Song, Tan, Qin, Lu, and Liu]{mpnet}
Kaitao Song, Xu~Tan, Tao Qin, Jianfeng Lu, and Tie{-}Yan Liu.
\newblock Mpnet: Masked and permuted pre-training for language understanding.
\newblock In Hugo Larochelle, Marc'Aurelio Ranzato, Raia Hadsell,
  Maria{-}Florina Balcan, and Hsuan{-}Tien Lin (eds.), \emph{Advances in Neural
  Information Processing Systems 33: Annual Conference on Neural Information
  Processing Systems 2020, NeurIPS 2020, December 6-12, 2020, virtual}, 2020.
\newblock URL
  \url{https://proceedings.neurips.cc/paper/2020/hash/c3a690be93aa602ee2dc0ccab5b7b67e-Abstract.html}.

\bibitem[Soulos et~al.(2020)Soulos, McCoy, Linzen, and
  Smolensky]{soulos-etal-2020-discovering}
Paul Soulos, R.~Thomas McCoy, Tal Linzen, and Paul Smolensky.
\newblock Discovering the compositional structure of vector representations
  with role learning networks.
\newblock In \emph{Proceedings of the Third BlackboxNLP Workshop on Analyzing
  and Interpreting Neural Networks for NLP}, pp.\  238--254, Online, November
  2020. Association for Computational Linguistics.
\newblock \doi{10.18653/v1/2020.blackboxnlp-1.23}.
\newblock URL \url{https://aclanthology.org/2020.blackboxnlp-1.23}.

\bibitem[Stuart(2010)]{matching}
Elizabeth~A Stuart.
\newblock Matching methods for causal inference: A review and a look forward.
\newblock \emph{Statistical science: a review journal of the Institute of
  Mathematical Statistics}, 25\penalty0 (1):\penalty0 1, 2010.

\bibitem[Sundararajan et~al.(2017)Sundararajan, Taly, and Yan]{sundararajan17a}
Mukund Sundararajan, Ankur Taly, and Qiqi Yan.
\newblock Axiomatic attribution for deep networks.
\newblock In \emph{Proceedings of the 34th International Conference on Machine
  Learning - Volume 70}, ICML'17, pp.\  3319–3328. JMLR.org, 2017.

\bibitem[Sushil et~al.(2018)Sushil, Suster, Luyckx, and Daelemans]{rep_biomed}
Madhumita Sushil, Simon Suster, Kim Luyckx, and Walter Daelemans.
\newblock Patient representation learning and interpretable evaluation using
  clinical notes.
\newblock \emph{J. Biomed. Informatics}, 84:\penalty0 103--113, 2018.
\newblock \doi{10.1016/j.jbi.2018.06.016}.
\newblock URL \url{https://doi.org/10.1016/j.jbi.2018.06.016}.

\bibitem[Tenney et~al.(2019)Tenney, Das, and Pavlick]{tenney-etal-2019-bert}
Ian Tenney, Dipanjan Das, and Ellie Pavlick.
\newblock {BERT} rediscovers the classical {NLP} pipeline.
\newblock In \emph{Proceedings of the 57th Annual Meeting of the Association
  for Computational Linguistics}, pp.\  4593--4601, Florence, Italy, July 2019.
  Association for Computational Linguistics.
\newblock URL \url{https://www.aclweb.org/anthology/P19-1452}.

\bibitem[Touvron et~al.(2023)Touvron, Martin, Stone, Albert, Almahairi, Babaei,
  Bashlykov, Batra, Bhargava, Bhosale, Bikel, Blecher, Canton{-}Ferrer, Chen,
  Cucurull, Esiobu, Fernandes, Fu, Fu, Fuller, Gao, Goswami, Goyal, Hartshorn,
  Hosseini, Hou, Inan, Kardas, Kerkez, Khabsa, Kloumann, Korenev, Koura,
  Lachaux, Lavril, Lee, Liskovich, Lu, Mao, Martinet, Mihaylov, Mishra,
  Molybog, Nie, Poulton, Reizenstein, Rungta, Saladi, Schelten, Silva, Smith,
  Subramanian, Tan, Tang, Taylor, Williams, Kuan, Xu, Yan, Zarov, Zhang, Fan,
  Kambadur, Narang, Rodriguez, Stojnic, Edunov, and Scialom]{llama2}
Hugo Touvron, Louis Martin, Kevin Stone, Peter Albert, Amjad Almahairi, Yasmine
  Babaei, Nikolay Bashlykov, Soumya Batra, Prajjwal Bhargava, Shruti Bhosale,
  Dan Bikel, Lukas Blecher, Cristian Canton{-}Ferrer, Moya Chen, Guillem
  Cucurull, David Esiobu, Jude Fernandes, Jeremy Fu, Wenyin Fu, Brian Fuller,
  Cynthia Gao, Vedanuj Goswami, Naman Goyal, Anthony Hartshorn, Saghar
  Hosseini, Rui Hou, Hakan Inan, Marcin Kardas, Viktor Kerkez, Madian Khabsa,
  Isabel Kloumann, Artem Korenev, Punit~Singh Koura, Marie{-}Anne Lachaux,
  Thibaut Lavril, Jenya Lee, Diana Liskovich, Yinghai Lu, Yuning Mao, Xavier
  Martinet, Todor Mihaylov, Pushkar Mishra, Igor Molybog, Yixin Nie, Andrew
  Poulton, Jeremy Reizenstein, Rashi Rungta, Kalyan Saladi, Alan Schelten, Ruan
  Silva, Eric~Michael Smith, Ranjan Subramanian, Xiaoqing~Ellen Tan, Binh Tang,
  Ross Taylor, Adina Williams, Jian~Xiang Kuan, Puxin Xu, Zheng Yan, Iliyan
  Zarov, Yuchen Zhang, Angela Fan, Melanie Kambadur, Sharan Narang,
  Aur{\'{e}}lien Rodriguez, Robert Stojnic, Sergey Edunov, and Thomas Scialom.
\newblock Llama 2: Open foundation and fine-tuned chat models.
\newblock \emph{CoRR}, abs/2307.09288, 2023.
\newblock \doi{10.48550/arXiv.2307.09288}.
\newblock URL \url{https://doi.org/10.48550/arXiv.2307.09288}.

\bibitem[Vansteelandt et~al.(2012)Vansteelandt, Bekaert, and Claeskens]{miss}
Stijn Vansteelandt, Maarten Bekaert, and Gerda Claeskens.
\newblock On model selection and model misspecification in causal inference.
\newblock \emph{Statistical methods in medical research}, 21\penalty0
  (1):\penalty0 7--30, 2012.

\bibitem[Viehmann et~al.(2023)Viehmann, Beck, Maurer, Quiring, and
  Gurevych]{stance_detection1}
Christina Viehmann, Tilman Beck, Marcus Maurer, Oliver Quiring, and Iryna
  Gurevych.
\newblock Investigating opinions on public policies in digital media: Setting
  up a supervised machine learning tool for stance classification.
\newblock \emph{Communication Methods and Measures}, 17\penalty0 (2):\penalty0
  150--184, 2023.

\bibitem[Vig et~al.(2020)Vig, Gehrmann, Belinkov, Qian, Nevo, Singer, and
  Shieber]{vig2020causal}
Jesse Vig, Sebastian Gehrmann, Yonatan Belinkov, Sharon Qian, Daniel Nevo,
  Yaron Singer, and Stuart~M. Shieber.
\newblock Investigating gender bias in language models using causal mediation
  analysis.
\newblock In Hugo Larochelle, Marc'Aurelio Ranzato, Raia Hadsell,
  Maria{-}Florina Balcan, and Hsuan{-}Tien Lin (eds.), \emph{Advances in Neural
  Information Processing Systems 33: Annual Conference on Neural Information
  Processing Systems 2020, NeurIPS 2020, December 6-12, 2020, virtual}, 2020.
\newblock URL
  \url{https://proceedings.neurips.cc/paper/2020/hash/92650b2e92217715fe312e6fa7b90d82-Abstract.html}.

\bibitem[Wang \& Liu(2021)Wang and Liu]{contrastive}
Feng Wang and Huaping Liu.
\newblock Understanding the behaviour of contrastive loss.
\newblock In \emph{{IEEE} Conference on Computer Vision and Pattern
  Recognition, {CVPR} 2021, virtual, June 19-25, 2021}, pp.\  2495--2504.
  Computer Vision Foundation / {IEEE}, 2021.
\newblock \doi{10.1109/CVPR46437.2021.00252}.
\newblock URL
  \url{https://openaccess.thecvf.com/content/CVPR2021/html/Wang\_Understanding\_the\_Behaviour\_of\_Contrastive\_Loss\_CVPR\_2021\_paper.html}.

\bibitem[Wiegreffe et~al.(2021)Wiegreffe, Marasovic, and
  Smith]{measuring_assoc}
Sarah Wiegreffe, Ana Marasovic, and Noah~A. Smith.
\newblock Measuring association between labels and free-text rationales.
\newblock In Marie{-}Francine Moens, Xuanjing Huang, Lucia Specia, and
  Scott~Wen{-}tau Yih (eds.), \emph{Proceedings of the 2021 Conference on
  Empirical Methods in Natural Language Processing, {EMNLP} 2021, Virtual Event
  / Punta Cana, Dominican Republic, 7-11 November, 2021}, pp.\  10266--10284.
  Association for Computational Linguistics, 2021.
\newblock \doi{10.18653/v1/2021.emnlp-main.804}.
\newblock URL \url{https://doi.org/10.18653/v1/2021.emnlp-main.804}.

\bibitem[Wu et~al.(2021)Wu, Ribeiro, Heer, and Weld]{wu2021polyjuice}
Tongshuang Wu, Marco~Tulio Ribeiro, Jeffrey Heer, and Daniel~S Weld.
\newblock Polyjuice: Automated, general-purpose counterfactual generation.
\newblock \emph{arXiv preprint arXiv:2101.00288}, 2021.

\bibitem[Wu et~al.(2023)Wu, D'Oosterlinck, Geiger, Zur, and Potts]{cpm}
Zhengxuan Wu, Karel D'Oosterlinck, Atticus Geiger, Amir Zur, and Christopher
  Potts.
\newblock Causal proxy models for concept-based model explanations.
\newblock In Andreas Krause, Emma Brunskill, Kyunghyun Cho, Barbara Engelhardt,
  Sivan Sabato, and Jonathan Scarlett (eds.), \emph{International Conference on
  Machine Learning, {ICML} 2023, 23-29 July 2023, Honolulu, Hawaii, {USA}},
  volume 202 of \emph{Proceedings of Machine Learning Research}, pp.\
  37313--37334. {PMLR}, 2023.
\newblock URL \url{https://proceedings.mlr.press/v202/wu23b.html}.

\bibitem[Yin et~al.(2022)Yin, Shi, Hsieh, and Chang]{stability}
Fan Yin, Zhouxing Shi, Cho{-}Jui Hsieh, and Kai{-}Wei Chang.
\newblock On the sensitivity and stability of model interpretations in {NLP}.
\newblock In Smaranda Muresan, Preslav Nakov, and Aline Villavicencio (eds.),
  \emph{Proceedings of the 60th Annual Meeting of the Association for
  Computational Linguistics (Volume 1: Long Papers), {ACL} 2022, Dublin,
  Ireland, May 22-27, 2022}, pp.\  2631--2647. Association for Computational
  Linguistics, 2022.
\newblock \doi{10.18653/v1/2022.acl-long.188}.
\newblock URL \url{https://doi.org/10.18653/v1/2022.acl-long.188}.

\bibitem[Zeiler \& Fergus(2014)Zeiler and Fergus]{Zeiler2014}
Matthew~D. Zeiler and Rob Fergus.
\newblock Visualizing and understanding convolutional networks.
\newblock In David Fleet, Tomas Pajdla, Bernt Schiele, and Tinne Tuytelaars
  (eds.), \emph{Computer Vision -- ECCV 2014}, pp.\  818--833, Cham, 2014.
  Springer International Publishing.

\bibitem[Zhou \& Wu(2023)Zhou and Wu]{zhou2023implicit}
Xiaoling Zhou and Ou~Wu.
\newblock Implicit counterfactual data augmentation for deep neural networks.
\newblock \emph{arXiv preprint arXiv:2304.13431}, 2023.

\bibitem[Zmigrod et~al.(2019)Zmigrod, Mielke, Wallach, and
  Cotterell]{zmigrod-etal-2019-counterfactual}
Ran Zmigrod, Sabrina~J. Mielke, Hanna Wallach, and Ryan Cotterell.
\newblock Counterfactual data augmentation for mitigating gender stereotypes in
  languages with rich morphology.
\newblock In \emph{Proceedings of the 57th Annual Meeting of the Association
  for Computational Linguistics}, pp.\  1651--1661, Florence, Italy, July 2019.
  Association for Computational Linguistics.
\newblock \doi{10.18653/v1/P19-1161}.
\newblock URL \url{https://aclanthology.org/P19-1161}.

\end{thebibliography}
\bibliographystyle{iclr2024_conference}
\clearpage
\appendix

\clearpage
\section{A faithful explanation}
\label{sec:theorem}

In this section, we prove our main theorem from \S\ref{sub:counterfactuals}. We start by providing definitions for explanation methods and then continue to define the order-faithfulness property. For simplicity, assume the image of the explained models is $\sR$. 

\begin{definition}[Approximated Counterfactual]\label{def:cf}
Given a DGP $\gG$, an explained model $f$, an example $\vx$ whose treatment value is $T(\vx)=t$, and an intervention $T:t \rightarrow t'$, we call $\tilde{\vx}_{t'}$ \textbf{approximated counterfactual} if:
\begin{equation*}
\begin{aligned}
 & f(\tilde{\vx}_{t'}) = 
\begin{cases} 
 f(\vx) & t' = t \\
 f(\vx_{T=t'}) + \epsilon & otherwise
\end{cases}
 & \E[\epsilon] = 0 
\end{aligned}
\end{equation*}
Where $\vx_{T=t'}$ is the golden CF of $\vx$ in $\gG$ and $\epsilon$ is an approximation error. 
\end{definition}
This definition also suggests why the Top-$K$ technique improves the causal effect estimation: Averaging the prediction of $K$ approximated CFs reduces the variance of the approximation error, making the estimator more robust.

\begin{definition}[Approximated Counterfactual Explanation Method]\label{def:cf_exp}
Given a dataset $\sD$ sampled from a DGP $\gG$, an explained model $f$ and an intervention $T:t \rightarrow t'$, the \textbf{approximated counterfactual explanation method} $S_{CF}$ is defined to be:
\[
S_{CF}(f,T, t, t') = \frac{1}{|\sD|}\sum_{x \in \sD}{f(\tilde{\vx}_{t'}) - f(\tilde{\vx}_{t})}
\]
\end{definition}

\begin{definition}[Non-causal Explanation Method]\label{def:nc_exp}
Given a dataset $\sD$ sampled from a DGP $\gG$, an explained model $f$ and an intervention $T:t \rightarrow t'$, let $D[X, T, f]$ be defined to be a set that contains for each unit from $\sD$ a triplet of model input, model output, and treatment assignment:
\[
D[X, f, T] = \{\left(x(\vu), f(x(\vu), T(\vu))\right) | \vu \in \sD\}
\]
An explanation method is called \textbf{non-causal explanation method} $S_{NC}$ if it is a function of $D[X, f, T]$:
\[
S_{NC}(f, T, t, t') = h\left(D[X, f, T]\right)
\]
\end{definition}
For simplicity, we notate the triplet $\left(x(\vu), f(x(\vu), T(\vu))\right)$ with $\left(\vx, f(\vx), T(\vx)\right)$. According to Def.~\ref{def:nc_exp}, the training data of $S_{NC}$ is $D[X, f, T]$, meaning it can be, for example, an unbiased estimator of: $\E_{\gG}[f(x)|T=t'] - \E_{\gG}[f(x)|T=t]$. However, it may also overlook other concepts than the treatment, such as confounding concepts, which may hint why $S_{NC}$ potentially fails.

\begin{definition}[Order-Faithfulness]\label{def:faith}
Given an i.i.d. dataset $\sD$ sampled from a DGP $\gG$, an explained model $f$ and a pair of interventions $C_1: c_1 \rightarrow c'_1, C_2: c_2 \rightarrow c'_2$, an explanation method $S$ is \textbf{order-faithful} if: 
\begin{align}
    \E_{\sD \sim \gG}[S(f, C_1, c_1, c'_1)] > \E_{\sD \sim \gG}[S(f, C_2, c_2, c'_2)] \Longleftrightarrow \texttt{CaCE}_{f}(C_1, c_1, c'_1) > \texttt{CaCE}_{f}(C_2, c_2, c'_2) 
\end{align}
\end{definition}

\begin{lemma}\label{lem:s_cf}
For an explained model $f$, $S_{CF}$ is order-faithful for any DGP $\gG$ and any interventions.
\end{lemma}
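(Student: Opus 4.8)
\textbf{Proof plan for Lemma~\ref{lem:s_cf}.}

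The plan is to show that the expectation of $S_{CF}$ equals $\texttt{CaCE}_f$ exactly, after which order-faithfulness is immediate since the defining bi-implication in Definition~\ref{def:faith} becomes a comparison between two identical quantities. First I would unpack the definition of $S_{CF}$ from Definition~\ref{def:cf_exp} and take the expectation over $\sD \sim \gG$: by linearity of expectation and the i.i.d.\ assumption, $\E_{\sD \sim \gG}[S_{CF}(f,T,t,t')] = \E_{\vx \sim \gG}\big[f(\tilde\vx_{t'}) - f(\tilde\vx_{t})\big]$, where the inner expectation is now over a single draw $\vx$ together with whatever randomness is in the approximation. Note the outer sample $\vx$ is drawn from $\gG$ without conditioning on its treatment value, so its treatment $T(\vx)$ is itself random; this matches the structure of the $\texttt{CaCE}_f$ definition in Eq.~\eqref{eq:formal_cace}.

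Next I would apply Definition~\ref{def:cf} (Approximated Counterfactual) to each of $f(\tilde\vx_{t'})$ and $f(\tilde\vx_{t})$. Writing $f(\tilde\vx_{t'}) = f(\vx_{T=t'}) + \epsilon'$ and $f(\tilde\vx_{t}) = f(\vx_{T=t}) + \epsilon$ with $\E[\epsilon'] = \E[\epsilon] = 0$ (handling the degenerate cases $t'=T(\vx)$ or $t=T(\vx)$, where the error is exactly zero, as special instances of the same formula), the error terms vanish in expectation. This leaves $\E_{\vx \sim \gG}\big[f(\vx_{T=t'}) - f(\vx_{T=t})\big]$. The final identification step is to recognize that $\E_{\vx \sim \gG}[f(\vx_{T=t'})]$, where $\vx_{T=t'}$ is the golden counterfactual obtained by setting $T \leftarrow t'$ in the structural model for a randomly drawn unit, is precisely $\E_{\vx' \sim \gG}[f(\vx')\mid do(T=t')]$, and similarly for $t$. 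Hence $\E_{\sD \sim \gG}[S_{CF}(f,T,t,t')] = \texttt{CaCE}_f(T,t,t')$.

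With this equality in hand, order-faithfulness follows trivially: for any two interventions $C_1: c_1 \to c'_1$ and $C_2: c_2 \to c'_2$,
\[
\E_{\sD \sim \gG}[S_{CF}(f,C_1,c_1,c'_1)] > \E_{\sD \sim \gG}[S_{CF}(f,C_2,c_2,c'_2)]
\iff \texttt{CaCE}_f(C_1,c_1,c'_1) > \texttt{CaCE}_f(C_2,c_2,c'_2),
\]
since each side of the bi-implication is a comparison of the same pair of real numbers. This holds for every DGP $\gG$ and every pair of interventions, as required.

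The main obstacle is not any calculation but a modeling subtlety: making precise the claim that averaging golden counterfactuals of units sampled from $\gG$ reproduces the interventional expectation $\E[f(\vx')\mid do(T=t')]$. This requires being careful that the golden CF $\vx_{T=t'}$ in Definition~\ref{def:cf} is defined via the same structural equations and the same exogenous noise as the observed $\vx$ (so that the population of $(\vx, \vx_{T=t'})$ pairs, marginalized to the second coordinate, has exactly the interventional law). I would state this as the consistency/composition assumption already implicit in the paper's use of the $do$-operator, and invoke it explicitly at the identification step; everything else is bookkeeping with linearity of expectation and the zero-mean error condition.
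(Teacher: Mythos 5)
Your plan is essentially the paper's own proof: both show $\E_{\sD \sim \gG}[S_{CF}] = \texttt{CaCE}_f$ by invoking the zero-mean error in Definition~\ref{def:cf}, identifying the golden counterfactual's expectation with the interventional expectation under the $do$-operator, and then noting that order-faithfulness is immediate from the unbiasedness. Your explicit remark about the consistency of the golden CF with the structural equations is a point the paper leaves implicit, but the argument is the same.
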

\begin{proof}
We start by connecting the approximated CF to the do operator.
From Def.~\ref{def:cf}, it follows that the expected prediction of an approximated CF is equal to the interventional one:
\[
\E_{\vx \sim \gG}[f(\tilde{\vx}_{t'})] = \E_{\vx \sim \gG}[f(\vx_{T=t'}) + \epsilon] = \E_{\vx \sim \gG}[f(\vx_{T=t'})] = \E_{\vx \sim \gG}\left[f(\vx)|do(T=t')\right]
\]
Combining this result with the Def.~\ref{def:cf_exp} and the fact that $P_{\sD}=P_{\gG}$, we get: 
\begin{equation*}
\begin{aligned}
\E_{\sD \sim \gG}[S_{CF}(f, T, t, t')]
&= \E_{\sD \sim \gG}\left[\frac{1}{|\sD|}\sum_{x \in \sD}{f(\tilde{\vx}_{t'}) - f(\tilde{\vx}_{t})}\right] \\
&= \E_{x \sim \sG}[f(\tilde{\vx}_{t'}) - f(\tilde{\vx}_{t})] \\
&= \E_{x \sim \gG}[f(\tilde{\vx}_{t'})] - \E_{x \sim \gG}[f(\tilde{\vx}_{t})] \\
&= \E_{\vx \sim \gG}\left[f(\vx)|do(T=t')\right] - \E_{\vx \sim \gG}\left[f(\vx)|do(T=t)\right] \\
&= \texttt{CaCE}_f(T, t, t')
\end{aligned}
\end{equation*}
Thus, $S_{CF}$ is an unbiased estimator of $\texttt{CaCE}_f$, and it holds that:
\[
\E_{\sD \sim \gG}[S_{CF}(f, C_1, c_1, c'_1)] - \E_{\sD \sim \gG}[S_{CF}(f, C_2, c_2, c'_2)] = \texttt{CaCE}_{f}(C_1, c_1, c'_1) - \texttt{CaCE}_{f}(C_2, c_2, c'_2)
\]
Meaning that $S_{CF}$ is order-faithful for any $\gG$ and any interventions.
\end{proof}

\begin{theorem}
For an explained model $f$, (1) The approximated CF explanation method $S_{CF}$ is order-faithful for every DGP $\gG$ and pair of interventions; (2) For every DGP $\gG$, there exist a DGP $\gG'$ resulting from a small modification of $\gG$, a model $f'$, and a pair of interventions $C_1: c_1 \rightarrow c'_1, C_2: c_2 \rightarrow c'_2$ such that the non-causal explanation method $S_{NC}$ is not order-faithful.
\end{theorem}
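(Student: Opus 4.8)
Part~(1) is exactly Lemma~\ref{lem:s_cf}, so all the work is in Part~(2). The plan hinges on one structural feature of Definition~\ref{def:nc_exp}: $S_{NC}(f,T,t,t') = h\bigl(D[X,f,T]\bigr)$ is a functional of the joint law of the triples $(\vx, f(\vx), T(\vx))$ only. Hence if two DGPs induce the \emph{same} law on $D[X,f',T]$ for a common model $f'$, then every non-causal method returns the identical score on both, for every intervention. I would therefore manufacture, from an arbitrary $\gG$, a ``neighbour'' $\gG'$ that is observationally indistinguishable in exactly this sense, yet for which the ordering of two causal concept effects is reversed; applying order-faithfulness (Definition~\ref{def:faith}) in $\gG$ and then in $\gG'$ to the (necessarily equal) pair of scores then forces a contradiction, so $S_{NC}$ cannot be order-faithful for both $\gG$ and $\gG'$.

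For the construction: fix $\gG$, choose a treatment variable $T$ that non-degenerately influences $X$, and take the pair of interventions $C_1 : a \to b$ and $C_2 : b \to a$ on $T$, for two values $a,b$ with $P_\gG(X \mid do(T{=}a)) \neq P_\gG(X \mid do(T{=}b))$; since $\texttt{CaCE}_{f'}(C_1) = -\texttt{CaCE}_{f'}(C_2)$ for such a pair, only a single sign needs to be controlled. Pick a bounded model $f'$ so that $\texttt{CaCE}_{f'}(C_1) > 0$ (hence $> \texttt{CaCE}_{f'}(C_2)$) in $\gG$. Now form $\gG'$ by adjoining one unobserved node $W$ with edges $W \to T$ and $W \to X$, where in $\gG'$ one sets $T \equiv W$ (so $W$ merely duplicates $T$'s exogenous noise under a hidden name) and extends the structural equation for $X$ so that on the observed support $\{W = T\}$ it reproduces \emph{verbatim} the mechanism of $\gG$, while on the off-support region reached under $do(T{=}b)$ it outputs a value where $f'$ is (near-)minimal, and on the off-support region reached under $do(T{=}a)$ a value where $f'$ is (near-)maximal. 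Because $W = T$ almost surely in the observational regime, the law of $(X, f'(X), T)$ --- and thus of $D[X,f',T]$ --- is literally unchanged from $\gG$, so $S_{NC}$ returns identical scores in $\gG$ and $\gG'$. But $do(T{=}t)$ severs $W \to T$: now $W$ is drawn from its prior, the off-support branch is visited with positive probability, and one computes $\E_{\gG'}[f'(X)\mid do(T{=}b)] < \E_{\gG'}[f'(X)\mid do(T{=}a)]$, i.e. $\texttt{CaCE}_{f'}(C_1) < 0 < \texttt{CaCE}_{f'}(C_2)$ in $\gG'$ --- the ordering has flipped, while no other concept's effect and no observational conditional has changed. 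Assembling these facts yields the contradiction above.

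The crux --- and the only genuinely delicate point --- is reconciling two demands that pull in opposite directions: the \emph{observational} law of $(X, f'(X), T)$ must be frozen (since $h$ is an arbitrary functional, matching a few moments would not suffice), whereas the \emph{interventional} law $P(X \mid do(T))$ must move far enough to reverse a CaCE comparison. The ``deterministic confounder'' device is what resolves this: making $W$ coincide with $T$ observationally endows $X$'s structural equation with a region of its domain that is invisible to observation but reachable under the $do$-operator, and all the freedom needed to flip the ordering is parked there. With this in hand, the remainder is routine verification --- that $\gG'$ is a legitimate structural model (note that since $W$ is unobserved, $\gG'$ itself need not be identifiable; only the standing assumption that $\gG$ is identifiable is used), that $f'$ and the interventions can be chosen so the initial strict inequality holds in $\gG$, and that the perturbation is as small as desired, touching only an off-support branch of a single structural equation.
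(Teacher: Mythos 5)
Your Part (1) is handled exactly as in the paper (it is Lemma~\ref{lem:s_cf}), so the substance is Part (2), where you follow the same governing principle as the paper --- freeze the observational law of $(\vx, f'(\vx), T(\vx))$ so that any non-causal method returns identical scores, while moving the interventional law enough to disturb a CaCE comparison --- but you hide the discrepancy in a genuinely different place. The paper leaves the text mechanism untouched and modifies the explained model: it adds a hidden confounder $C_0$ that observationally coincides with the events $C_1=c_1$ / $C_1=c'_1$ and sets $f'(\vx) = f(\vx) - 2d\,\sI_{[C_1(\vx)=c_1]} + 2d\,\sI_{[C_0(\vx)=0]}$, so the indicators cancel observationally but decouple under $do(C_1=c'_1)$, pushing $\texttt{CaCE}_{f'}(C_1)$ below $\texttt{CaCE}_{f'}(C_2)$; this needs two distinct concepts, a root concept $C_1$, and (per the paper's footnote) a model with oracle access to latent concepts. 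You keep $f'$ a bona fide function of the text and park the discrepancy in the structural equation of $X$ on the observationally-null region $\{W \ne T\}$, using a single treatment and the reverse pair of interventions; that is arguably cleaner, and your closing dichotomy (identical scores, reversed CaCE order, hence $S_{NC}$ cannot be order-faithful in both $\gG$ and $\gG'$) is the same one the paper uses. Two cosmetic points: if $T$ has endogenous parents, let $W$ inherit $T$'s full structural equation rather than just its exogenous noise; and you need some concept whose interventional distributions on $X$ differ (the paper's analogous non-degeneracy is $d>0$).

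There is, however, one step that does not go through as written: the claim that ``one computes $\E_{\gG'}[f'(X)\mid do(T{=}b)] < \E_{\gG'}[f'(X)\mid do(T{=}a)]$'' for an arbitrary bounded $f'$ chosen only to satisfy $\texttt{CaCE}_{f'}(a\to b)>0$ in $\gG$. Under $do(T{=}t)$ in $\gG'$, on the on-support event $\{W=t\}$ the law of $X$ is the \emph{observational} conditional given $T=t$, so the two interventional means in $\gG'$ are mixtures of $\E_\gG[f'(X)\mid T=a]$ and $\E_\gG[f'(X)\mid T=b]$ with your off-support values. If $\gG$ is heavily confounded --- e.g.\ $T$ binary with $f'(X)$ observationally pinned near $\min f'$ given $T=a$ and near $\max f'$ given $T=b$, which is perfectly compatible with $\texttt{CaCE}_{f'}(a\to b)>0$ --- then with merely ``near-extremal'' off-support values the relevant difference is $-p_b\epsilon_1 - p_a\epsilon_2 < 0$: no reversal, no contradiction. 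The repair stays inside your framework: either take the off-support outputs to be fresh inputs of observational probability zero and define $f'$ there with values as extreme as needed (this changes neither the observational law nor $\texttt{CaCE}_{f'}$ in $\gG$), or choose $f'$ to attain its extremes and send the off-support branches exactly to the argmin/argmax, in which case the difference is bounded below by $(1-p_a-p_b)\,(\max f' - \min f') \ge 0$, and the boundary case of equal CaCEs in $\gG'$ still contradicts order-faithfulness because it cannot coexist with the strictly ordered scores inherited from $\gG$. With that step quantified, your construction is a valid alternative proof of Part (2).
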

\begin{proof}
From Lemma~\ref{lem:s_cf} we know that $S_{CF}$ is order-faithful for any DGP and (1) holds.

For part (2) of the theorem, let $\gG$ be a DGP with at least two concepts  $C_1, ..., C_K$ (in case $\gG$ has only one concept, we can add concept to the DGP and continue with the proof).
If $S_{NC}$ is not order-faithful for some two interventions, set $f'=f$ and we are done. Therefore, we assume that $S_{NC}$ is order-faithful for all interventions in $\gG$ and w.l.o.g. assume that: 

\begin{enumerate}[label=(\alph*)]
    \item $C_1$ has no parent nodes (except exogenous variables). It is guaranteed that at least one concept has no parents since the DGP is a DAG. 
    \item The causal effect of $C_1$ is bigger than of $C_2$: $\texttt{CaCE}_{f}(C_1, c_1, c'_1) > \texttt{CaCE}_{f}(C_2, c_2, c'_2)$. Otherwise, inverse the corresponding signs of the proof. 
\end{enumerate}

Proof sketch of the second part of the theorem: We will construct a new DGP $\gG'$ by introducing a new unobserved confounding concept $C_0$ (this is the ``small modification of $\gG$'' from the theorem) and a new explained model $f'$, such that:
\begin{equation}\label{eq:same_dist}
    \forall i\geq 1: P_{\gG'}(\vx, f'(\vx), C_i) = P_{\gG}(\vx, f(\vx), C_i) 
\end{equation} 
\begin{equation}\label{eq:inv_cace}
    \texttt{CaCE}_{f'}(C_1, c_1, c'_1) < \texttt{CaCE}_{f'}(C_2, c_2, c'_2)
\end{equation}
Notice that the first condition Eq.~\ref{eq:same_dist} ensures that $S_{NC}$ produces the same explanations also in the new DGP. This is because for an intervention of a concept $C_i$, $S_{NC}$ is a function of $D[X, f, C_i]$, and the joint distribution from which it is sampled, $P_{\gG'}(\vx, f'(\vx), C_i)$, is the same as in the original DGP $\gG$.

Moreover, according to the second condition Eq.~\ref{eq:inv_cace}, for the new $\gG'$ and  $f'$, the causal effect of $C_1$ and $C_2$ is reversed (compared to the original $\gG$ and $f$). However, $S_{NC}$ produces the same explanations as before (for $\gG$ and $f$), meaning it cannot be order-faithful also for $\gG'$ and $f'$.

We next show that such $\gG'$ and $f'$ exist.

\textbf{Construction:}
Let $f'$ be a new explained model and $\gG'$ be a copy of $\gG$ with a new node $C_0$, and new edges $(C_0, C_1), (C_0, f'(X))$ (i.e., $C_0$ is a confounder). $C_0$ accepts three values: 0, 1 and 2. 
Let $p = P_\gG(C_1 = c_1)$ and $p' = P_\gG(C_1 = c'_1)$. Define the following probabilities:
\begin{align*}
        & P_{\gG'}(C_0 = 0) = p \\
        & P_{\gG'}(C_0 = 1) = p' \\
        & P_{\gG'}(C_0 = 2) = 1 - p - p' \\
        & P_{\gG'}(C_1 = c_1 | C_0 = 0) = P_{\gG'}(C_1 = c_1' | C_0 = 1) = 1 \\
        & P_{\gG'}(C_1 = c_1 | C_0 = 2) = P_{\gG'}(C_1 = c_1' | C_0 = 2) = 0 \\
        \forall c \ne c_1, c_1': &P_{\gG'}(C_1 = c | C_0 = 2) = \frac{P_{\gG}(C_1 = c)}{1-p-p'} \\
\end{align*}
We conclude that the marginal distribution of $C_1$ is equal in both DGPs, $P_{\gG'}(C_1) = P_{\gG}(C_1)$:
\begin{align*}
        P_{\gG'}(C_1 = c_1) =& P_{\gG'}(C_1 = c_1 | C_0 = 0)P_{\gG'}(C_0 = 0) = p\cdot 1 = P_{\gG}(C_1 = c_1)\\
        P_{\gG'}(C_1 = c'_1) =& P_{\gG'}(C_1 = c'_1 | C_0 = 1)P_{\gG'}(C_0 = 1) = p' \cdot 1 = P_{\gG}(C_1 = c'_1)\\
        \forall c \ne c_1, c'_1: P_{\gG'}(C_1 = c) =& P_{\gG'}(C_1 = c| C_0 = 2)P_{\gG'}(C_0 = 2) \\
        =& (1 - p - p') \cdot \frac{P_{\gG}(C_1 = c)}{1-p-p'}=P_{\gG}(C_1 = c)
\end{align*}

Let $d$ be a difference in causal effects between $C_1$ and $C_2$ for $f$, which is positive following assumption (b) from the beginning of the proof.
\[
d = \texttt{CaCE}_{f}(C_1, c_1, c'_1) - \texttt{CaCE}_{f}(C_2, c_2, c'_2) > 0
\]
Let $f'$ be the new explained model. $f'$ has an oracle access\footnote{In practice, it can be that the unobserved variable $C_0$ controls for some modification $\psi(\vx)$ and the path $X \rightarrow f(X)$ is replaced in the new DGP $\gG'$ by $X \rightarrow \psi(X) \rightarrow f'(X)$ and $C_0 \rightarrow \psi(X)$. Nevertheless, $S_{NC}$ has only access to \textbf{realizations} of $X$ and $f'$, and therefore, for simplicity of the notations we assume an oracle.} to the underlying concepts and is defined to be:
\[
f'(\vx) = f(\vx) - 2d \cdot \sI_{[C_1(\vx) = c_1]} + 2d \cdot \sI_{[C_0(\vx) = 0]}
\]
Where $\sI$ is the indicator function. If $C_0(\vx)=0$ then $C_1(\vx) = c_1$ with probability 1 and: 
\[
f'(\vx)=f(\vx) - 2d \cdot \sI_{[True]} + 2d \cdot \sI_{[True]} = f(\vx). 
\]
Conversely, if $C_0(\vx)=1$ then $C_1(\vx) = c'_1$ with probability 1 and:
\[
f'(\vx)=f(\vx) - 2d \cdot \sI_{[False]} + 2d \cdot \sI_{[False]} = f(\vx)
\]
Finally, if $C_0(\vx)=2$ then $C_1(\vx) \ne c_1, c'_1$ and:
\[
f'(\vx)=f(\vx) - 2d \cdot \sI_{[False]} + 2d \cdot \sI_{[False]} = f(\vx)
\]
From the above equations, it is clear that in the DGP $\gG'$ without any interventions, for all values of $C_0$ given $\vx$ we have $f'(\vx) = f(\vx)$. Meaning that:
\[
P_{\gG'}(f'(X)|C_0, X)=P_{\gG'}(f'(X)|X)=P_{\gG}(f(X)|X)
\]

We have shown that $P_{\gG'}(C_1) = P_{\gG}(C_1)$. Since $\gG'$ is a copy of $\gG$ (except $C_0$), and since neither the marginal distribution of the concepts nor the generation process of $X$ did not change, then:
\begin{align*}
    \forall i=1,...,K: & P_{\gG}(C_i)=P_{\gG'}(C_i) \\
    \forall i=1,...,K: & P_\gG(X|C_i) = P_{\gG'}(X|C_i)
\end{align*}
Finally, from definition of $f$, it depends only on $\vx$, thus: $\forall i: P_{\gG}(f(X)|X, C_i) = P_{\gG}(f(X)|X)$ (the same holds for $f'$, and we have shown above that $P_{\gG'}(f'(X)|C_0, X)=P_{\gG'}(f'(X)|X)$). Therefore: 
\begin{align*}
    \forall i=1,..., K: &P_\gG(X, f(X), C_i) \\
    &= P_\gG(C_i) P_\gG(X | C_i) P_\gG(f(X)| X, C_i) \\
    &= P_\gG(C_i) P_\gG(X | C_i) P_\gG(f(X)| X) \\
    &= P_{\gG'}(C_i) P_{\gG'}(X | C_i) P_{\gG'}(f'(X)| X) \\
    &= P_{\gG'}(C_i) P_{\gG'}(X | C_i) P_{\gG'}(f'(X)| X, C_i) \\
    &=P_{\gG'}(X, f'(X), C_i) 
\end{align*}
Eq.~\ref{eq:same_dist} holds and accordingly, $P_{\gG'}(D[X, f, C_i]) = P_{\gG}(D[X, f, C_i])$, meaning that $S_{NC}$ produces the same explanations for $\gG$ and $\gG'$. Since $S_{NC}$ is order-faithful in $\gG$, and we know from our assumptions on $\gG$ that $\texttt{CaCE}_{f}(C_1, c_1, c'_1) > \texttt{CaCE}_{f}(C_2, c_2, c'_2)$, in follows then that: 
\[
\E_{\sD \sim \gG'}[S(f', C_1, c_1, c'_1)] > \E_{\sD \sim \gG'}[S(f', C_2, c_2, c'_2)]
\]

We next show that Eq.~\ref{eq:inv_cace} holds (reversed causal effect), thus $S_{NC}$ cannot be order-faithful in $\gG'$:
\begin{equation*}
    \begin{aligned}
        &\texttt{CaCE}_{f'}(C_1, c_1, c'_1) \\
        &= \texttt{CaCE}_{f}(C_1, c_1, c'_1) - 2d \cdot \texttt{CaCE}_{\sI[C_1(x) = c_1]}(C_1, c_1, c'_1) + 2d \cdot  \texttt{CaCE}_{\sI[C_0(x) = 1]} \\
        &= \texttt{CaCE}_{f}(C_1, c_1, c'_1) - 2d \\
        &=\texttt{CaCE}_{f}(C_1, c_1, c'_1) - (\texttt{CaCE}_{f}(C_1, c_1, c'_1) - \texttt{CaCE}_{f}(C_2, c_2, c'_2)) - d \\
        &= \texttt{CaCE}_{f}(C_2, c_2, c'_2) - d < \texttt{CaCE}_{f}(C_2, c_2, c'_2) = \texttt{CaCE}_{f'}(C_2, c_2, c'_2)
    \end{aligned}
\end{equation*}
The last equation is true as $f'$ is not affected by $C_2$ in any way except through the $f$.

\end{proof}


\clearpage
\section{Inference time efficiency}
\label{app_sub:efficiency}

\begin{table*}
\caption{Comparison between the inference time latency (in seconds) of different methods. The top rows present counterfactual generation models, and the middle and bottom tables for matching methods. Notice that all the baselines described in \S\ref{sub:baselines} have the same latency as the Causal Model and thus are not specified.}
\label{tab:latency}
\centering
\large
\begin{adjustbox}{width=0.97\textwidth}
\begin{tabular}{ll|ccc|ccc}
\toprule
\multirow{2}{*}{\textbf{Method}} & \multirow{2}{*}{\textbf{Backbone}} & \multicolumn{3}{c|}{\textbf{Explaining a single example}} & \multicolumn{3}{c}{\textbf{Explaining 100 examples}} \\
& & \underline{\textit{Top-$K=1$}} & \underline{\textit{Top-$K=10$}} & \underline{\textit{Top-$K=100$}} & \underline{\textit{Top-$K=1$}} & \underline{\textit{Top-$K=10$}} & \underline{\textit{Top-$K=100$}} \\

Fine-tune Generative & T5-Base & 0.84 & 1.03 & 3.09 & 84 & 102.2 & 308.2 \\
Zero-shot Generative & ChatGPT (turbo) & 2.45 & 2.95 & 4.47 & 245.3 & 295.3 & 447.2 \\
Few-shot Generative & ChatGPT (turbo) & 2.52 & 2.98 & 4.49 & 252.3 & 298.3 & 449.2 \\
\midrule
\multicolumn{2}{c|}{\textbf{250 candidates}} & \underline{\textit{Top-$K=1$}} & \underline{\textit{Top-$K=10$}} & \underline{\textit{Top-$K=100$}} & \underline{\textit{Top-$K=1$}} & \underline{\textit{Top-$K=10$}} & \underline{\textit{Top-$K=100$}} \\

Approx & -- & 0.86 & 0.86 & 0.86 & 1.95 & 1.95 & 1.95 \\
Causal Model & S-Transformer & 0.03 & 0.03 & 0.03 & 0.27 & 0.27 & 0.27 \\
\midrule
\multicolumn{2}{c|}{\textbf{1000 candidates}} & \underline{\textit{Top-$K=1$}} & \underline{\textit{Top-$K=10$}} & \underline{\textit{Top-$K=100$}} & \underline{\textit{Top-$K=1$}} & \underline{\textit{Top-$K=10$}} & \underline{\textit{Top-$K=100$}} \\

Approx & -- & 0.86 & 0.86 & 0.86 & 1.95 & 1.95 & 1.95 \\
Causal Model & S-Transformer & 0.03 & 0.03 & 0.03 & 0.27 & 0.27 & 0.27\\

\bottomrule
\end{tabular}
\end{adjustbox}
\end{table*}

As discussed in \S~\ref{sec:intro}, we seek three attributes in the explanation method: it should be model-agnostic, effective, and efficient. In the main body of this paper, we evaluate the effectiveness of two model-agnostic approaches: Counterfactual generation and matching. We demonstrate the effectiveness and precision of the generative models and our causal matching method compared to other baselines. In this subsection we focus on inference efficiency, an aspect that is vital for providing real-time explanations and explaining vast amounts of data.

To compare the computational efficiency of various explanation methods, we present the inference time latency in Table~\ref{tab:latency}. This latency represents the time (in seconds) required to compute the $\icace$ for either a single example or a batch of 100 examples. We measure this latency for Top-$K=1, 10, 100$ using the following methods: (1) The Approx baseline, which employs three fine-tuned RoBERTa models. We first use these models to predict the three confounder concepts of the query examples. Following this, we randomly select matches that correspond to the values of each query example; and (3) Our causal matching method, which first represents the query examples, then computes their cosine similarity with the candidates and finally finds the most similar matches. Notice that other matching baselines have the same latency as our method. All the query and candidate examples are 50 tokens in length, which is also the length of the model-generated counterfactuals. For the matching baselines, we utilized a candidate set comprised of 1000 examples.

As can be seen in Table~\ref{sec:related}, the latency of the explanation methods we evaluated varies dramatically between them. Interestingly, Top-$K$ does not influence the latency of the matching methods since it merely involves an argmax operation. Similarly, the size of the candidate set has no noticeable impact on latency. Our method also outpaces the Approx baseline, as it utilizes fewer models and avoids the need to filter the candidate set based on the values of the confounder concepts.

The generative baselines are significantly slower than the matching methods, primarily due to the autoregressive token-by-token generation process. For instance, when explaining a single query example, the fine-tuned model is around 30 times slower than matching methods for $K=1$ and 100 times slower for $K=100$. In contrast, it is three times faster than zero-shot and few-shot ChatGPT, which has an enormous number of parameters and also suffers from API latency. For batch processing (explaining 100 queries), the benefits of parallelism offered by GPUs make matching methods exceptionally fast. Our method is 1000 times faster than the generative baselines. 

In summary, our causal matching method consistently demonstrates superior inference efficiency across various scenarios, making it an ideal choice for applications requiring real-time explanations or the processing of large datasets.

\section{Additional formulations}
\label{app:formulations}



\paragraph{Causal paths.} The following three triplets (or $X$, $Y$ and $Z$) are the main patterns of a causal graph:

\begin{itemize}
    \item Chains (or mediators): $X \rightarrow Z \rightarrow Y$. Conditioning on $Z$ blocks the flow from $X$ to $Y$.
    \item Forks (or common causes, confounders): $X \leftarrow Z \rightarrow Y$. Conditioning on $Z$ blocks the flow from $X$ to $Y$.
    \item Colliders (or common effects): $X \rightarrow Z \leftarrow Y$. The flow from $X$ to $Y$ is blocked by default. However, conditioning on $Z$ opens the flow and induces
an association between $X$ and $Y$. 
\end{itemize}

\paragraph{Back-door criterion \citep{causality}.}
A set of variables $\sZ$ satisfies the back-door criterion relative to $(X,Y)$ in a directed acyclic graph $\gG$ if:

\begin{itemize}
    \item No node in $\sZ$ is a descendant of $X$.
    \item $Z$ blocks every path between $X$ and $Y$ that contains an arrow into $X$.
\end{itemize}

The adjustment criterion \citep{adjustment} was later devised to handle cases in which Z may explicitly contain
descendants of $X$; however, it is unnecessary for our causal graph described in Figure~\ref{fig:causal_graph}. We term the set $\sZ$, which satisfies the back-door criterion as the adjustment set.

\subsection{CEBaB - Causal Estimation-Based Benchmark (OpenTable Reviews)}
\label{app_sub:cebab}

\paragraph{The adjustment set of CEBaB.} In our causal graph (Figure~\ref{fig:causal_graph}), when estimating the causal effect of an aspect, for example, $S$, on the model $f$, the following paths should be taken into account ($F$ and $N$ w.l.o.g): 

\begin{itemize}
    \item $S \rightarrow X \rightarrow f(X)$
    \item $S \leftarrow U \rightarrow F \rightarrow X \rightarrow f(X)$
    \item $S \leftarrow U \rightarrow F \rightarrow Y \leftarrow N \rightarrow X \rightarrow f(X)$
    \item $S \rightarrow Y \leftarrow F \rightarrow X \rightarrow f(X)$
    \item $S \rightarrow Y \leftarrow F \leftarrow U \rightarrow N \rightarrow X \rightarrow f(X)$
    \item (when $Y \rightarrow X$) $S \rightarrow Y \rightarrow X \rightarrow f(X)$
    \item (when $Y \leftarrow X$) $S \rightarrow Y \leftarrow X \rightarrow f(X)$
\end{itemize}

The adjustment set is $F, N, A$, and $Y$ must not be adjusted. Accordingly, the set of matches $\XM$ should contain texts with the same aspect values (excluding the treatment) as the query example. Moreover, it clarifies why we use the term misspecified to describe the sets $\XMiM$ and $\XMiCF$. This is because they contain texts that at least one of their aspect values from the query is different or was changed to another.  
  
\paragraph{Increased precision when conditioning on $V$.} The exogenous variable $V$ is a direct cause of the text $X$ that mediates between concepts (e.g., $S$) and the model prediction $f(X)$. For example, the variable $V$ can account for the syntax, writing style, or length of $X$. Controlling for $V$ (hypothetically, since it is not observed) will not bias the causal effect estimation (of CaCE, and not of ICaCE, which by definition should control for $V$) since it does not open any back-door paths from $S$ to $Y$. On the other hand, controlling for $V$ can increase the estimation precision of CaCE \citep{crash_course}. 

Consider the case where the model $f$ learns spurious correlations between $V$ and $Y$ (e.g., the sentiment of long texts tends to be negative). We use the Approx matching technique (which controls all the adjusted variables: $F$, $N$, and $A$ and sample a unit from the control group which shares the same adjustment values as the query example) to calculate the $\cace$ (see Eq.\ref{eq:icace}). Since $V$ is independent of any other variable (except $X$), the $\cace$ estimator is not be biased asymptotically. However, when the queries and the candidate sets are finite (and small), the spurious correlations $f$ learned might be amplified by the non-representing sample. Controlling for $V$ (finding matches that also share the same attributes, e.g., writing style) mitigates this bias and increases the precision of the estimation. 

Therefore, utilizing counterfactuals while training the causal representation model used for matching is vital. In contrast to $\XM$, the set of counterfactuals $\XCF$ should share the same values of $V$, and the causal model learns to consider it when finding a match. 

\begin{figure*}[t]
    \centering
    \includegraphics[width=0.9\textwidth]{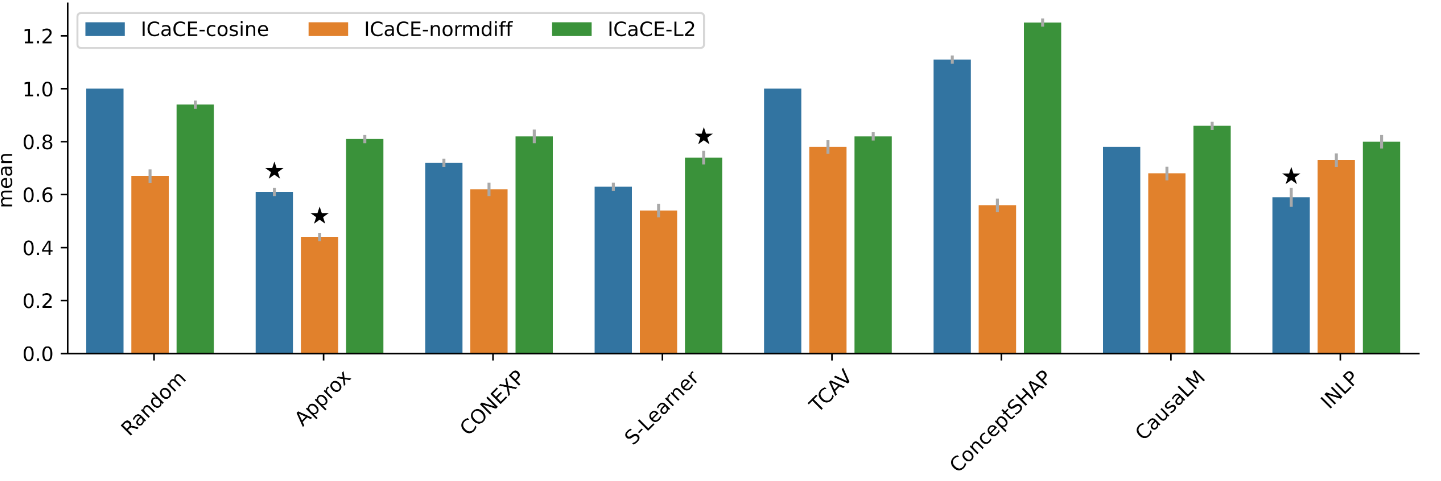}
    \caption{A comparison between different explanation methods adapted from \citet{cebab}. The best performing methods are the simple Approx baseline and the S-learner.}
    \label{fig:cebab_barplot}
\end{figure*}

\section{Implementation details}
\label{app:implementations}

We use $\tau=0.1$, train the causal representation models for 12 epochs, and select the model checkpoint which minimizes the objective of Eq.~\ref{eq:loss} (or its modified version for the ablation models). Notice that when using the $\err$ as the selection criterion, one might be able to select a more robust checkpoint. However, this violates the model-agnostic property, as calculating the $\err$ requires access to the explained model. The backbone SentenceTransformer of our model is MPNet (\texttt{all-mpnet-base-v2}) \citep{mpnet}.
We use a learning rate of 5e-6 for training the causal representation models. We use a learning rate of 1e-5 and a batch size of 16 for the fine-tuned baselines, explained models and concept predictors. The concept predictor models are used for filtering misspecified CFs (for constructing the $\XCF$ and $\XMiCF$ sets) and for the Approx and Propensity score baselines. We fine-tune a RoBERTa model using the annotated train set for each of the four concepts.

For generating counterfactuals, we use the OpenAI API with the following models: \texttt{gpt-3.5-turbo} (referred to as ChatGPT in the paper) and \texttt{gpt-4} (GPT-4). 

\subsection{Prompts and examples}
\label{app_sub:prompts}

\begin{figure*}[t]
    \centering
    \includegraphics[width=1.0\textwidth]{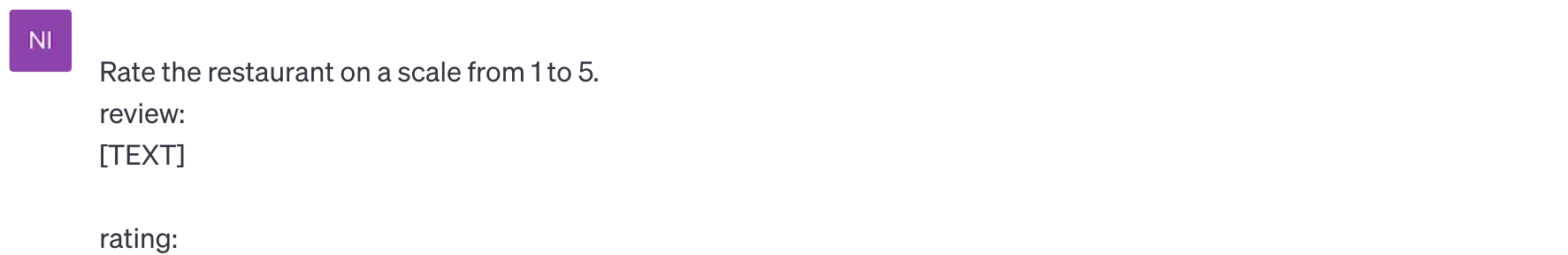}
    \caption{The prompt we use for rating CEBaB reviews and extracting the five-way sentiment distribution (of the Llamas models). We extract the next token probabilities of 1-5 tokens. \texttt{[TEXT]} is a review.}
    \label{fig:rating}
\end{figure*}
\begin{figure*}[t]
    \centering
    \includegraphics[width=1.0\textwidth]{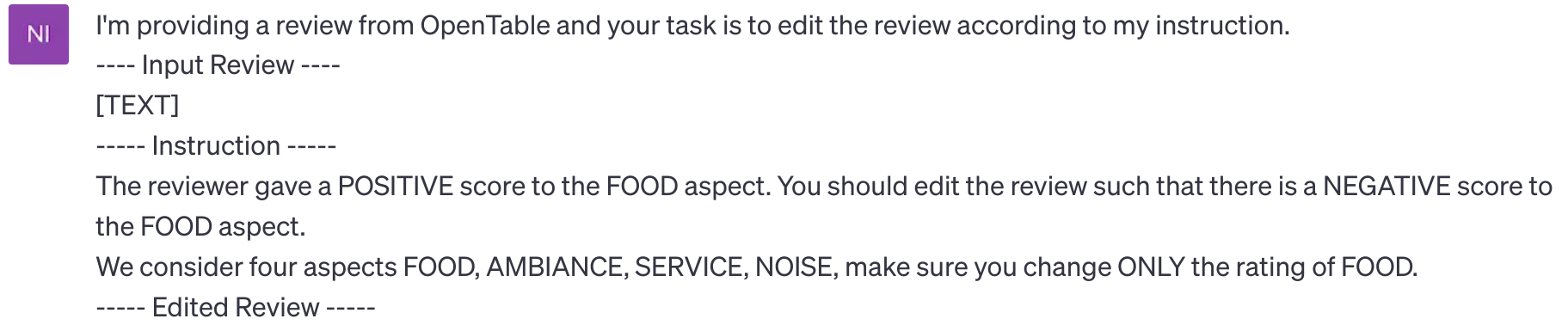}
    \caption{The zero-shot prompt we use for generating CFs for CEBaB examples. \texttt{[TEXT]} is the review. The instruction is changed according to the treatment (e.g., in this prompt, we change the value of the food aspect from positive to negative.}
    \label{fig:zero_shot}
\end{figure*}
\begin{figure*}[t]
    \centering
    \includegraphics[width=0.9\textwidth]{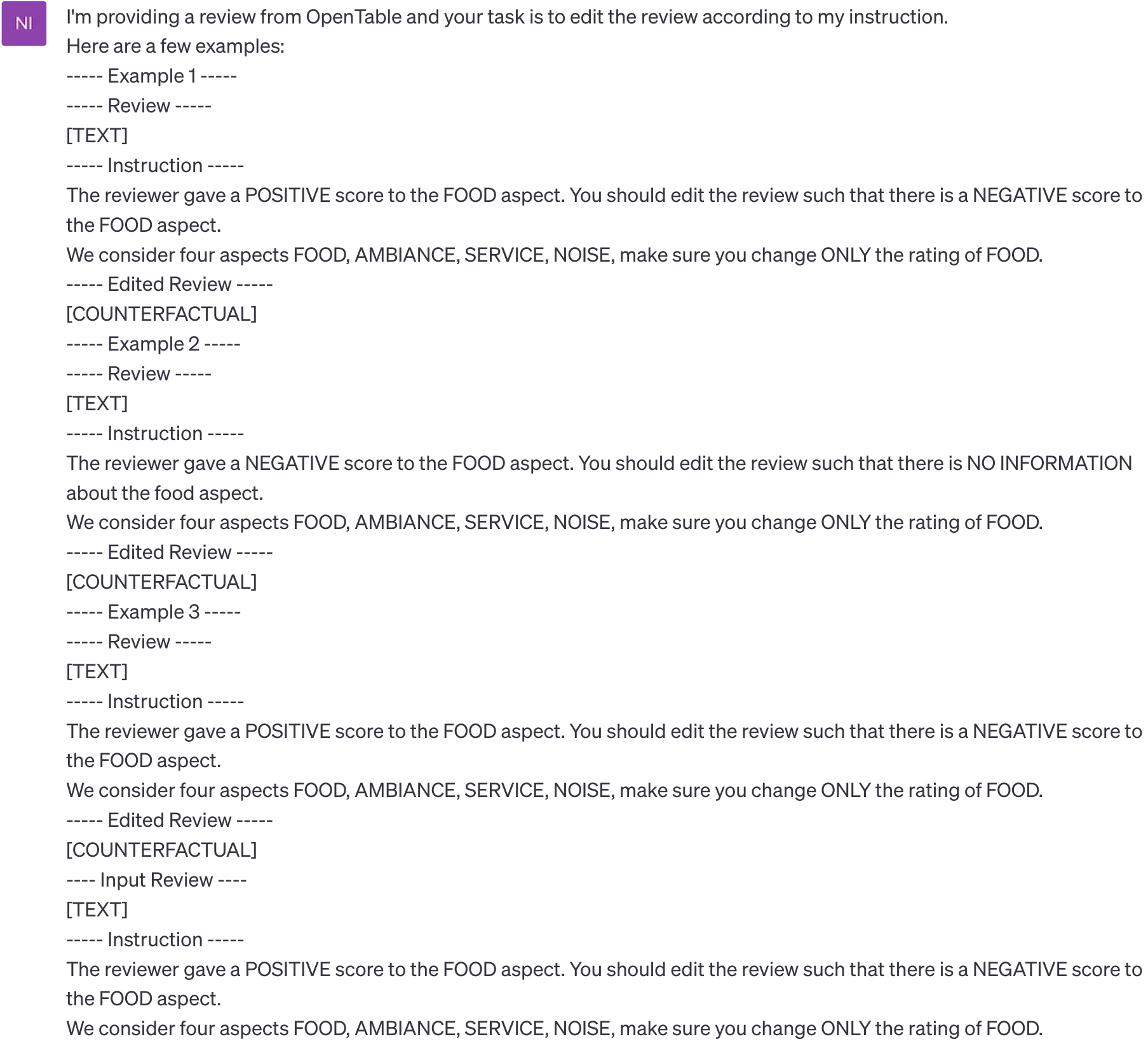}
    \caption{The few-shot prompt we use for generating CFs for CEBaB examples. \texttt{[TEXT]} is a review and \texttt{[COUNTERFACTUAL]} is a human-written CF. Every prompt consists of three demonstrations where the aspect of the treatment is changed. Since there are four aspects, only 12 human-written CFs are required.}
    \label{fig:few_shot}
\end{figure*}
\begin{figure*}[t]
    \centering
    \includegraphics[width=1.0\textwidth]{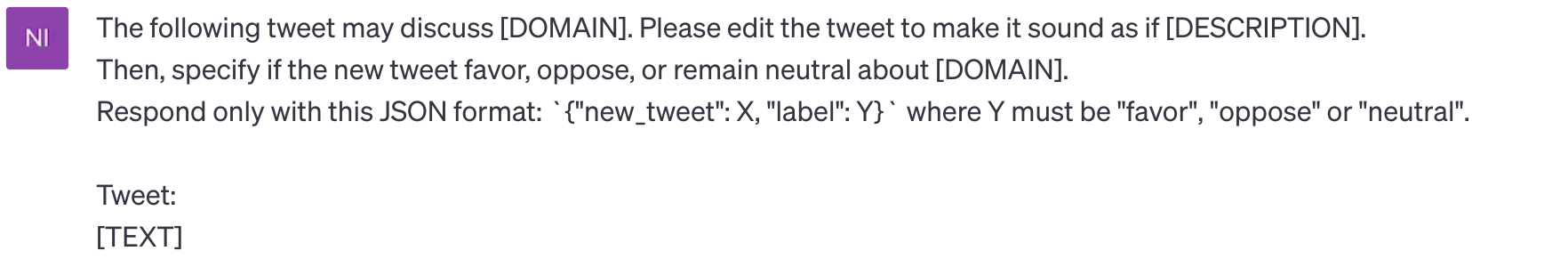}
    \caption{The prompt we use for generating examples for the new stance detection setup. Notice - these examples are used for enriching the dataset with more writer profiles. \texttt{[DOMAIN]} is the tweet topic/subject (e.g., abortions), and \texttt{[TEXT]} is the original tweet which we ask the LLM (GPT-4) to edit. The \texttt{[DESCRIPTION]} describes the new profile of the writer. For example, it can be: ``an elder (60+) farmer wrote it''. In addition, we ask the LLM to predict the label of the new tweet.}
    \label{fig:stance_r1}
\end{figure*}
\begin{figure*}[t]
    \centering
    \includegraphics[width=1.0\textwidth]{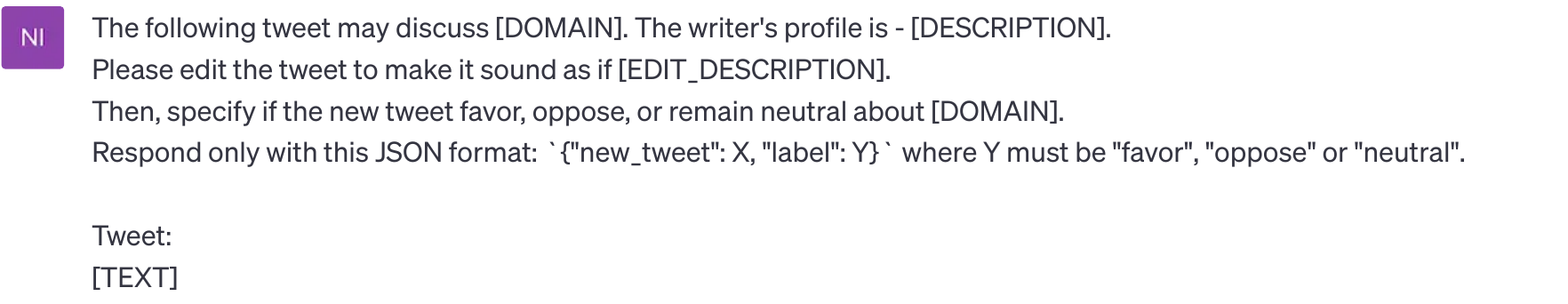}
    \caption{The prompt we use for generating ground-truth CFs for examples from the new stance detection setup. \texttt{[DOMAIN]} is the tweet topic/subject (e.g., abortions), and \texttt{[TEXT]} is the original tweet which we ask the LLM (GPT-4) to edit. The \texttt{[DESCRIPTION]} describes the old profile of the writer (e.g., ``Age: Unknown, Gender: Female, Job: professor''). The \texttt{[EDIT DESCRIPTION]} describes the new profile, for example, ``an elder (60+) farmer wrote it'',
    or ``the age, gender, or job of the writer is unknown''.
    In addition, we ask the LLM to predict the label of the new tweet.}
    \label{fig:stance_r2}
\end{figure*}
\begin{figure*}[t]
    \centering
    \includegraphics[width=1.0\textwidth]{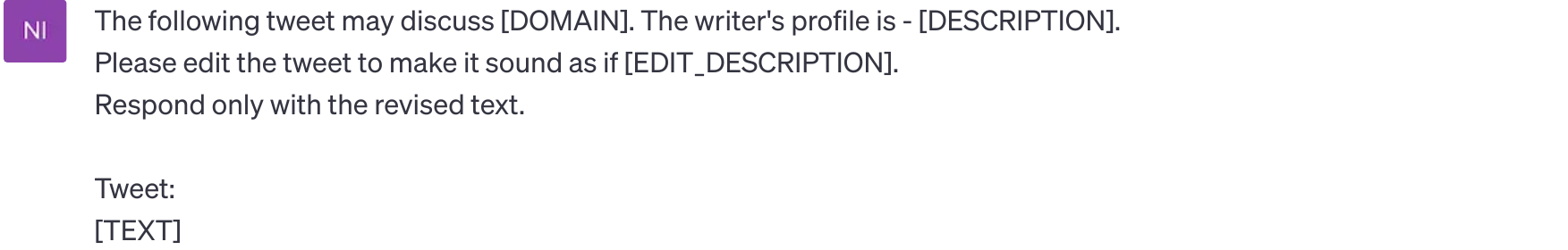}
    \caption{The zero-shot prompt we use for generating CFs for examples from the new stance detection setup. This prompt is used for benchmarking the LLM (ChatGPT), and not for generating instances for the test set. See Figure~\ref{fig:stance_r2} caption for more details.}
    \label{fig:stance_zero}
\end{figure*}
\begin{table}[t]
\caption{Fractions in percentages of the variables (joint probability) in the new setup (Stance Detection, \S\ref{app:new_setup}). For example, according to row \textbf{CC} and column \textbf{T}, 7.3\% of the data are tweets about Climate Change (Subject concept) written by a teenager (Age concept). There are a total of 4000 examples divided into the train ($N=1000$), matching ($N=2250$), dev ($N=250$), and test ($N=500$) sets. \textbf{Orig. Label} is the label of the original tweet before any modifications, and \textbf{Label} is the label predicted by GPT-4 after modifying the tweet. Legend: \textit{Subject} -- \textbf{Ab} (Abortions), \textbf{At} (Atheism), \textbf{CC} (Climate Change) \textbf{Fe} (Feminism) \textbf{HC} (Hillary Clinton); \textit{Age} -- \textbf{?} (Unknown), \textbf{T} (Teenager), \textbf{E} (Elder); \textit{Job} -- \textbf{?} (Unknown), \textbf{\female} (Female), \textbf{\male} (Male); \textit{Job} -- \textbf{?} (Unknown), \textbf{F} (Farmer), \textbf{P} (Professor); \textit{Label} -- \textbf{?} (Neutral), \textbf{X} (Oppose), \textbf{\checkmark} (Support); \textit{Orig. Label} -- \textbf{?} (Neutral), \textbf{X} (Oppose), \textbf{\checkmark} (Support).}
\label{tab:new_setup_probs}
\centering
\large
\begin{adjustbox}{width=1.0\textwidth}
\begin{tabular}{l|ccccc|ccc|ccc|ccc|ccc|ccc}
\toprule
& \multicolumn{5}{c|}{\textbf{Subject}} & \multicolumn{3}{c|}{\textbf{Age}} & \multicolumn{3}{c|}{\textbf{Gender}} & \multicolumn{3}{c|}{\textbf{Job}} & \multicolumn{3}{c|}{\textbf{Label}} & \multicolumn{3}{c}{\textbf{Orig. Label}} \\
& \textbf{Ab} & \textbf{At} & \textbf{CC} & \textbf{Fe} & \textbf{HC} & \textbf{?} & \textbf{T} & \textbf{E} & \textbf{?} & \textbf{\female} & \textbf{\male} & \textbf{?} & \textbf{F} & \textbf{P} & \textbf{?} & \textbf{\crossmark} & \textbf{\checkmark} & \textbf{?} & \textbf{\crossmark} & \textbf{\checkmark} \\
\midrule
\textbf{Ab} &              20.0 &             0.0 &                    0.0 &              0.0 &                     0.0 &          7.2 &                        3.6 &              9.1 &            10.1 &            3.8 &          6.2 &          8.6 &         7.2 &            4.1 &                13.4 &                 5.2 &               1.4 &                          4.9 &                         11.6 &                        3.5 \\
\textbf{At} &               0.0 &            20.0 &                    0.0 &              0.0 &                     0.0 &          7.7 &                        5.4 &              7.0 &             9.7 &            4.2 &          6.0 &          9.7 &         6.5 &            3.8 &                13.1 &                 5.6 &               1.3 &                          4.0 &                         12.4 &                        3.6 \\
\textbf{CC} &               0.0 &             0.0 &                   20.0 &              0.0 &                     0.0 &          8.7 &                        7.3 &              4.0 &             9.5 &            6.2 &          4.3 &          8.4 &         4.0 &            7.6 &                12.5 &                 1.0 &               6.5 &                          7.6 &                          0.8 &                       11.5 \\
\textbf{Fe} &               0.0 &             0.0 &                    0.0 &             20.0 &                     0.0 &          9.2 &                        5.4 &              5.4 &             8.3 &            4.6 &          7.2 &          8.4 &         6.6 &            4.9 &                13.1 &                 1.8 &               5.0 &                          3.4 &                         10.8 &                        5.8 \\
\textbf{HC} &               0.0 &             0.0 &                    0.0 &              0.0 &                    20.0 &          8.2 &                        4.5 &              7.3 &             9.4 &            4.1 &          6.5 &          8.0 &         8.4 &            3.6 &                12.7 &                 5.3 &               2.0 &                          5.5 &                         11.2 &                        3.3 \\
\midrule
\textbf{?} &               7.2 &             7.7 &                    8.7 &              9.2 &                     8.2 &         41.0 &                        0.0 &              0.0 &            11.2 &           13.0 &         16.8 &          8.4 &        18.1 &           14.6 &                28.2 &                 6.5 &               6.3 &                         11.0 &                         18.9 &                       11.1 \\
\textbf{T} &               3.6 &             5.4 &                    7.3 &              5.4 &                     4.5 &          0.0 &                       26.2 &              0.0 &            14.9 &            6.2 &          5.2 &         16.6 &         4.2 &            5.4 &                16.5 &                 2.6 &               7.1 &                          8.7 &                          4.9 &                       12.6 \\
\textbf{E} &               9.1 &             7.0 &                    4.0 &              5.4 &                     7.3 &          0.0 &                        0.0 &             32.8 &            20.9 &            3.6 &          8.2 &         18.2 &        10.4 &            4.1 &                20.1 &                 9.9 &               2.8 &                          5.8 &                         23.0 &                        4.0 \\
\midrule
\textbf{?} &              10.1 &             9.7 &                    9.5 &              8.3 &                     9.4 &         11.2 &                       14.9 &             20.9 &            47.0 &            0.0 &          0.0 &         11.6 &        21.2 &           14.1 &                30.4 &                 9.8 &               6.8 &                         11.4 &                         23.3 &                       12.3 \\
\textbf{\female} &               3.8 &             4.2 &                    6.2 &              4.6 &                     4.1 &         13.0 &                        6.2 &              3.6 &             0.0 &           22.9 &          0.0 &         14.4 &         3.4 &            5.1 &                14.0 &                 2.6 &               6.3 &                          6.1 &                          6.4 &                       10.4 \\
\textbf{\male} &               6.2 &             6.0 &                    4.3 &              7.2 &                     6.5 &         16.8 &                        5.2 &              8.2 &             0.0 &            0.0 &         30.1 &         17.2 &         8.2 &            4.8 &                20.6 &                 6.5 &               3.1 &                          8.0 &                         17.2 &                        5.0 \\
\midrule
\textbf{?} &               8.6 &             9.7 &                    8.4 &              8.4 &                     8.0 &          8.4 &                       16.6 &             18.2 &            11.6 &           14.4 &         17.2 &         43.2 &         0.0 &            0.0 &                26.0 &                 9.1 &               8.2 &                         11.4 &                         19.0 &                       12.7 \\
\textbf{F} &               7.2 &             6.5 &                    4.0 &              6.6 &                     8.4 &         18.1 &                        4.2 &             10.4 &            21.2 &            3.4 &          8.2 &          0.0 &        32.8 &            0.0 &                22.1 &                 8.2 &               2.5 &                          8.5 &                         21.8 &                        2.5 \\
\textbf{P} &               4.1 &             3.8 &                    7.6 &              4.9 &                     3.6 &         14.6 &                        5.4 &              4.1 &            14.1 &            5.1 &          4.8 &          0.0 &         0.0 &           24.0 &                16.8 &                 1.6 &               5.6 &                          5.5 &                          6.0 &                       12.5 \\
\midrule
\textbf{?} &              13.4 &            13.1 &                   12.5 &             13.1 &                    12.7 &         28.2 &                       16.5 &             20.1 &            30.4 &           14.0 &         20.6 &         26.0 &        22.1 &           16.8 &                64.8 &                 0.0 &               0.0 &                         24.3 &                         27.2 &                       13.3 \\
\textbf{\crossmark} &               5.2 &             5.6 &                    1.0 &              1.8 &                     5.3 &          6.5 &                        2.6 &              9.9 &             9.8 &            2.6 &          6.5 &          9.1 &         8.2 &            1.6 &                 0.0 &                19.0 &               0.0 &                          0.5 &                         17.4 &                        1.1 \\
\textbf{\checkmark} &               1.4 &             1.3 &                    6.5 &              5.0 &                     2.0 &          6.3 &                        7.1 &              2.8 &             6.8 &            6.3 &          3.1 &          8.2 &         2.5 &            5.6 &                 0.0 &                 0.0 &              16.2 &                          0.7 &                          2.2 &                       13.3 \\
\midrule
\textbf{?} &               4.9 &             4.0 &                    7.6 &              3.4 &                     5.5 &         11.0 &                        8.7 &              5.8 &            11.4 &            6.1 &          8.0 &         11.4 &         8.5 &            5.5 &                24.3 &                 0.5 &               0.7 &                         25.5 &                          0.0 &                        0.0 \\
\textbf{\crossmark} &              11.6 &            12.4 &                    0.8 &             10.8 &                    11.2 &         18.9 &                        4.9 &             23.0 &            23.3 &            6.4 &         17.2 &         19.0 &        21.8 &            6.0 &                27.2 &                17.4 &               2.2 &                          0.0 &                         46.8 &                        0.0 \\
\textbf{\checkmark} &               3.5 &             3.6 &                   11.5 &              5.8 &                     3.3 &         11.1 &                       12.6 &              4.0 &            12.3 &           10.4 &          5.0 &         12.7 &         2.5 &           12.5 &                13.3 &                 1.1 &              13.3 &                          0.0 &                          0.0 &                       27.7 \\
\bottomrule
\end{tabular}
\end{adjustbox}
\end{table}

\end{document}